\documentclass{article}

\usepackage{microtype}
\usepackage{graphicx}
\usepackage{subcaption}
\usepackage{booktabs} 

\usepackage{hyperref}

\usepackage[preprint]{icml2026}

\usepackage{amsmath}
\usepackage{amssymb}
\usepackage{mathtools}
\usepackage{amsthm}

\usepackage[capitalize,noabbrev]{cleveref}

\usepackage{multirow}
\usepackage{enumitem}
\usepackage{makecell}

\usepackage{amsmath,amsfonts,bm}

\def\eqref#1{equation~\ref{#1}}

\def\1{\bm{1}}

\def\ve{{\bm{e}}}
\def\vf{{\bm{f}}}

\def\vh{{\bm{h}}}

\def\vp{{\bm{p}}}

\def\vt{{\bm{t}}}
\def\vu{{\bm{u}}}

\def\vw{{\bm{w}}}
\def\vx{{\bm{x}}}

\def\vz{{\bm{z}}}

\def\mA{{\bm{A}}}

\def\mU{{\bm{U}}}

\def\mW{{\bm{W}}}
\def\mX{{\bm{X}}}

\def\mZ{{\bm{Z}}}

\DeclareMathAlphabet{\mathsfit}{\encodingdefault}{\sfdefault}{m}{sl}
\SetMathAlphabet{\mathsfit}{bold}{\encodingdefault}{\sfdefault}{bx}{n}

\def\gE{{\mathcal{E}}}

\def\gG{{\mathcal{G}}}

\def\gO{{\mathcal{O}}}

\def\gS{{\mathcal{S}}}

\def\gV{{\mathcal{V}}}

\def\sR{{\mathbb{R}}}

\theoremstyle{plain}
\newtheorem{theorem}{Theorem}[section]
\newtheorem{proposition}[theorem]{Proposition}
\newtheorem{lemma}[theorem]{Lemma}

\theoremstyle{definition}
\newtheorem{definition}[theorem]{Definition}
\newtheorem{assumption}[theorem]{Assumption}
\newtheorem{problem}[theorem]{Problem}
\theoremstyle{remark}

\usepackage[textsize=tiny]{todonotes}

\icmltitlerunning{CITED: A Decision Boundary-Aware Signature for GNNs Towards Model Extraction Defense}

\begin{document}

\twocolumn[
  \icmltitle{CITED: A Decision Boundary-Aware Signature for GNNs \\ Towards Model Extraction Defense}



  \icmlsetsymbol{equal}{*}

  \begin{icmlauthorlist}
    \icmlauthor{Bolin Shen}{fsu}
    \icmlauthor{Md Shamim Seraj}{fsu}
    \icmlauthor{Zhan Cheng}{wisc}
    \icmlauthor{Shayok Chakraborty}{fsu}
    \icmlauthor{Yushun Dong}{fsu}
  \end{icmlauthorlist}

  \icmlaffiliation{fsu}{Department of Computer Science, Florida State University, Tallahassee, Florida, United States}
  \icmlaffiliation{wisc}{Department of Mathematics, University of Wisconsin, Madison, Wisconsin, United States}

  \icmlcorrespondingauthor{Yushun Dong}{yd24f@fsu.edu}

  \icmlkeywords{Machine Learning, ICML}

  \vskip 0.3in
]



\printAffiliationsAndNotice{}  

\begin{abstract}
Graph neural networks (GNNs) have demonstrated superior performance in various applications, such as recommendation systems and financial risk management. However, deploying large-scale GNN models locally is particularly challenging for users, as it requires significant computational resources and extensive property data. Consequently, Machine Learning as a Service (MLaaS) has become increasingly popular, offering a convenient way to deploy and access various models, including GNNs. However, an emerging threat known as Model Extraction Attacks (MEAs) presents significant risks, as adversaries can readily obtain surrogate GNN models exhibiting similar functionality. Specifically, attackers repeatedly query the target model using subgraph inputs to collect corresponding responses. These input-output pairs are subsequently utilized to train their own surrogate models at minimal cost. 
Many techniques have been proposed to defend against MEAs, but most are limited to specific output levels (e.g., embedding or label) and suffer from inherent technical drawbacks. To address these limitations, we propose a novel ownership verification framework CITED
which is a first-of-its-kind method to achieve ownership verification on both embedding and label levels. 
Moreover, CITED is a novel signature-based method that neither harms downstream performance nor introduces auxiliary models that reduce efficiency, while still outperforming all watermarking and fingerprinting approaches.
Extensive experiments demonstrate the effectiveness and robustness of our CITED framework. Code is available at: \url{https://github.com/LabRAI/CITED}.
\end{abstract}

\section{Introduction}
Graph Neural Network (GNN)~\citep{kipf2016semi,velivckovic2017gat,hamilton2017graphsage,xu2018gin} has become a highly prevalent model, widely used in a variety of real-world applications, including recommendation systems~\citep{ying2018graph, berg2017graph} and molecular structure prediction~\citep{gilmer2017neural, feinberg2018potentialnet}.
However, training such models, particularly at a large scale, is extremely difficult and costly~\citep{duan2022comprehensive,bajaj2024graph-computational}, often requiring high-performance computing resources~\citep{abdelbaky2012hpc,netto2018hpc,exposito2013performance,gupta2011evaluation}. As a result, these models represent valuable intellectual property, and model owners are reluctant to release them publicly. To balance usability and security, many developers adopt the Graph Machine Learning as a Service (GMLaaS) paradigm~\citep{ribeiro2015mlaas,weng2022mlaas,zhang2020mlmodelci}, which exposes model functionalities through APIs while concealing the internal model structure. Several platforms provide such services in practice, including Amazon Neptune~\citep{bebee2018amazon}, and Alibaba GraphScope~\citep{xu2021graphscope}. These GMLaaS systems offer convenient access to predictions for end users, while protecting the model’s parameters and training data.
However, these models are highly vulnerable to model extraction attacks (MEAs)~\citep{liang2024vulnerable,zhang2021thief,gencc2023taxonomic,truong2021data}, in which an attacker can strategically query public APIs to obtain inference results and use these request-response pairs to train a model that mimics the functionalities of the original model. As a result, an attacker can obtain a surrogate model at a very low cost~\citep{zhu2024efficient}, while still achieving functionality that is highly similar to the original model. Therefore, it is crucial to protect the valuable property of graph learning models by verifying the ownership of suspicious models that may have been obtained through MEA. 

In practice, model owners may expose either predicted labels or intermediate embeddings to users, depending on the service interface. Therefore, ownership verification must be effective at both the label and embedding levels to ensure comprehensive protection. To this end, recent efforts to defend against MEAs have focused on two main strategies~\citep{regazzoni2021protecting}, watermarking~\citep{adi2018turning,liu2021watermarking} and fingerprinting~\citep{cao2021ipguard,wang2021fingerprinting}. 
Watermarking techniques embed predefined input-output associations into the model, which can later be used to assert ownership by querying the model and matching responses~\citep{boenisch2021systematic,szyller2021dawn}. However, it is difficult to extend watermarking to the embedding level, since embeddings are intermediate, continuous representations produced by the model, making it challenging to match them against specific target values for ownership verification.
In contrast, fingerprinting leverages a model’s unique internal characteristics, such as embedding distributions, to verify ownership~\citep{chen2019deepmarks,cao2021ipguard}. However, fingerprinting techniques are also ineffective at the label level, since prediction outputs reflect only the model’s final decisions on downstream tasks, making it difficult to extract or compare intrinsic characteristics of the model from such discrete outputs.
For example, injecting watermarks may alter the learned feature space and weaken the stability of fingerprints, while enforcing consistent embeddings for fingerprinting may suppress the variability required for watermark triggers. These conflicting objectives pose challenges for deploying both strategies simultaneously.
Nevertheless, in GMLaaS environments, model owners may expose service interfaces at different output levels to accommodate diverse client requirements. As a result, MEA attackers can exploit these interfaces and launch extraction attacks at both the embedding and label levels. Therefore, protecting both label-level and embedding-level outputs is critical. The inability to achieve robust and unified protection across output levels exposes model owners to significant risk. As such, there is an urgent need for MEA defense frameworks that can operate reliably across diverse output levels.

Despite the significance of verifying model ownership across both embedding-level and label-level, developing a unified defense is non-trivial, and we mainly face three fundamental challenges.
(1)~\textbf{Ownership Indicator}: Crafting an ownership indicator that functions effectively across all output levels is exceptionally difficult, as the embedding level involves characterizing high-dimensional distributions, while the label level pertains to probabilistic predictions, making it hard to design a single verification mechanism that works well in both cases.
(2) \textbf{Preservation of Ownership Indicators under MEAs}: Unlike backdoor attacks~\citep{li2022backdoor}, where adversaries have full access to the model, model extraction attacks (MEAs) require querying the target model to reconstruct a surrogate. This process is inherently more constrained, as adversaries typically rely on informative and natural inputs to maximize the utility of queries~\citep{tramer2016stealing, orekondy2019knockoff}. Consequently, ownership indicators embedded through task-irrelevant triggers—such as watermarks~\citep{zhao2021watermarking}—are unlikely to be activated or transferred to the surrogate model. Therefore, it is inherently difficult for attackers to successfully steal models that rely on ownership indicators under MEA settings.
(3) \textbf{Efficiency of Verification}: In the ownership verification phase, it is common to train multiple auxiliary models to assist in distinguishing between genuine and unauthorized models, which introduces significant computational overhead. For example, fingerprinting-based techniques~\citep{cao2021ipguard, you2024gnnfingers} often require training both positive and negative reference models to perform reliable verification. Such an approach leads to substantial resource consumption, making it inefficient and impractical for large-scale deployment in GMLaaS environments.
Consequently, designing a unified and efficient defense mechanism that can verify model ownership under MEAs across all output levels remains a nascent and unresolved challenge.

To address these challenges, we propose \emph{De\textbf{\underline{C}}ision Boundary-Aware S\textbf{\underline{I}}gna\textbf{\underline{T}}ure for Model \textbf{\underline{E}}xtraction Attacks \textbf{\underline{D}}efense} ({CITED}), a unified framework designed to provide effective defense against MEAs across all output levels. 
First, to tackle the challenge of \emph{Ownership Indicator}, we introduce a novel signature-based ownership verification mechanism that is specifically tailored for MEA defense across all output levels. While the construction of the signature shares conceptual similarity with fingerprinting in terms of capturing model-specific characteristics, our verification paradigm is fundamentally different. Unlike watermarking, which relies on specially crafted input-output pairs, our signature serves as an intrinsic property of the model itself and does not require any task-irrelevant triggers.
Second, to address the problem of \emph{Preservation of Ownership Indicators under MEAs} in surrogate models, our method does not rely on post-training to embed input-output pairs. Instead, the signature naturally arises from the model’s decision boundary. Since model extraction typically involves querying informative samples near the decision boundary~\citep{brendel2017decision,he2018decision,shen2023decision}, these signature patterns are likely to be implicitly transferred into the surrogate model during the attack process, thereby enabling ownership verification without requiring explicit watermark triggers.
Third, to improve the \emph{Efficiency of Verification}, we propose an efficient verification protocol that avoids the need for training multiple auxiliary models, which is a common limitation in fingerprinting approaches. We extend the widely used ARUC metric~\citep{cao2021ipguard, you2024gnnfingers} by incorporating the Wasserstein distance to better capture the distributional differences in signature embeddings between the target and suspect models. This Wasserstein-based ARUC metric enables more accurate and efficient ownership verification by querying the suspect model with signature inputs, especially under embedding-level access.
Ultimately, our designed signature enables robust defense against MEAs across all output levels, setting it apart from prior methods. Our contributions are as follows: 

\begin{itemize}
    \item \textbf{Novel Problem Formulation:} We introduce a novel problem of GNN ownership verification by proposing a new type of ownership indicator, denoted as \emph{signature}. Our proposed method enables verification at both the embedding and label levels of model output, effectively addressing limitations in prior works that focus on only one specific level.
    \item \textbf{Unified Defense Framework:} We propose \textsc{CITED}, a principled and unified framework for defending against graph-based model extraction attacks in the widely studied node classification task. It enables effective ownership verification by leveraging the proposed signature at both the embedding and label levels, with rigorous theoretical guarantees provided.
    \item \textbf{Comprehensive Experimental Evaluation:} We conduct extensive experiments demonstrating that our proposed method enables effective ownership verification at both the embedding and label levels. Compared to existing baselines, {CITED} consistently achieves significantly better performance, setting a new state-of-the-art in defending against MEAs.
\end{itemize}

\section{Preliminaries}

\subsection{Notations \& Threat Model}
\label{pre:notation}


In this paper, we adopt the following notation conventions. We use lowercase bold symbols (e.g., \( \vx \)) to denote vectors, and uppercase bold symbols (e.g., \( \mX \)) to denote matrices. Calligraphic letters (e.g., \( \gG \)) are used to represent sets or structured collections, such as graphs or datasets. Scalar values are denoted using lowercase italic symbols (e.g., \( \tau \)), and model functions are expressed as lowercase italic letters, such as \( f \).
We consider a graph \( \gG = (\gV, \gE) \), where \( \gV \) denotes the set of nodes, \( \gE \subseteq \gV \times \gV \) denotes the set of edges.
We define a graph-based model as a function \( f: (\gV, \gE) \mapsto \gO \), where the output set \( \gO \) varies depending on the output level of interest. The output set \( \gO \) varies depending on the level of supervision. At the embedding level, \( \gO^{\text{emb}} = \{\vh_i \in \sR^d : v_i \in \gV\} \), where each \( \vh_i \) denotes the learned embedding of node \( v_i \) in a latent feature space\footnote{We consider the node embeddings as the output of the final graph convolutional layer in the GNN. These embeddings encode high-level node representations formed through message passing.}. At the label level, \( \gO^{\text{label}} = \{y_i \in \{1, 2, \dots, c\} : v_i \in \gV\} \), where \( y_i \) denotes the predicted class label associated with node \( v_i \). 
For dataset collections denoted by \( \gS  = ((\gV, \gE), \gO)\), we adopt the convention that superscripts specify the output level (e.g., emb or label), while subscripts indicate the functional role of the dataset (e.g., training, or querying). For example, \( \gS^{\text{emb}}_{\text{query}} \) refers to the dataset used by an attacker to query the model at the embedding level.
We define several models to clarify our problem formulation. Specifically, \( f_T \) represents the target model, which is trained and owned by the legitimate model proprietor. \( f_I \) denotes an independent model, referring to a model trained separately by a third party using unrelated datasets and potentially different architectures. \( f_S \) represents the surrogate model, which is illegitimately extracted by an attacker through MEAs. In addition, we denote by \( f_D \) the defense model, which refers to the target model enhanced with ownership protection mechanisms. Finally, \( f_Q \) denotes a suspicious model submitted for verification, which may potentially be a surrogate derived from the target model or an independently trained one.

\begin{definition}[Threat Model]
    Given a target model \( f_T \), a query graph \( (\gV_{\text{query}}, \gE_{\text{query}}) \), and access to the corresponding model outputs \( \gO^{*}_{\text{query}} = f_T(\gV_{\text{query}}, \gE_{\text{query}}) \)\footnote{The superscript \( * \) indicates the output level, which can be either embedding (\text{emb}) or label (\text{label}) level.}, the attacker aims to train a surrogate model \( f_S \) that replicates the functional behavior of \( f_T \) through model extraction attacks.
\end{definition}

In this paper, we focus on the widely studied transductive setting of model extraction attacks against GNNs~\citep{gencc2023taxonomic,defazio2019adversarial}, where the query graph satisfies \((\gV_{\text{query}} \subseteq \gV, \gE_{\text{query}} \subseteq \gE)\). This setting naturally arises in many real-world applications, including citation networks~\citep{kipf2016semi}, social networks~\citep{rossi2018inductive}, and recommendation systems~\citep{ying2018graph}, highlighting its practical significance for studying model extraction attacks.

\subsection{Problem Formulation}
\label{pre:problemformulation}

To defend against graph-based MEAs, existing ownership verification techniques primarily include watermarking and fingerprinting. Watermarking is mainly effective at the label level, but becomes impractical for continuous embedding. In contrast, fingerprinting usually captures distributional properties at the embedding level, yet these features do not generalize well to the label level due to limited output distinctiveness. To overcome these limitations, we introduce the concept of a \emph{signature}, a unified ownership indicator applicable across both output levels.


\begin{definition}[Signature for GNNs]
A \textbf{signature} for GNNs is a graph-based dataset \( \gS_{\text{sig}} = ((\gV_{\text{sig}}, \gE_{\text{sig}}), \gO_{\text{sig}}) \), where \( (\gV_{\text{sig}}, \gE_{\text{sig}}) \subseteq (\gV, \gE) \) is a subgraph of the input graph, and \( \gO_{\text{sig}} \subseteq \gO \) contains the corresponding outputs for \( \gV_{\text{sig}} \). Constructed near the decision boundary of the target model \( f_T \), the signature captures predictive behaviors that are distinctive and sensitive to the model’s parameters. Accessible only to the model owner, it serves as a compact, task-relevant identifier for ownership verification at both the embedding level (\( \gO^{\text{emb}} \)) and label level (\( \gO^{\text{label}} \)).
\end{definition}


Unlike watermarks and fingerprints, the \emph{signature} is task-relevant, naturally aligned with the model’s decision boundary, and effectively preserved in surrogate models through attacker queries. Its compatibility with both embedding and label outputs makes it a versatile and efficient tool for ownership verification. Based on this concept, we formulate a novel problem of ownership verification using signature sets across different output levels, as described below.

\begin{problem}[Ownership Verification with Signature for GNNs.]
Given a suspicious model \( f_Q \), which may have been illicitly derived from a proprietary GMLaaS model \( f_T \), and a signature dataset extracted from \( f_T \), the goal is to determine whether \( f_Q \) originates from \( f_T \) by evaluating its behavior on the signature set. Specifically, the embedding-level signature \( \gS^{\text{emb}}_{\text{sig}} \) is used for verification at the embedding level, and the label-level signature \( \gS^{\text{label}}_{\text{sig}} \) is used for verification at the label level.
\end{problem}

\section{Methodology}
This section introduces the \emph{De\textbf{\underline{C}}ision Boundary-Aware S\textbf{\underline{I}}gna\textbf{\underline{T}}ure for Model \textbf{\underline{E}}xtraction Attacks \textbf{\underline{D}}efense} (CITED), designed to provide effective protection across all output levels. We first introduce the generation of signature credentials, which extract boundary-based information as a unique identifier of model ownership. We then describe how model extraction attacks are carried out by adversaries and explain how our defense mechanism counteracts such attacks. Finally, we present a novel verification metric for validating ownership based on the extracted signature.

\subsection{Workflow of CITED Framework}


\noindent \textbf{Pre-Attack Phase: Signature Generation.}
The proposed framework allows model owners to adopt any Graph Machine Learning model \( f_T \) appropriate for their downstream task, thereby highlighting the generalizability of our approach. 
To enable ownership verification, a signature generator module is employed to extract a unique signature associated with the target model. Formally, the signature dataset is defined as: 
\(
\gS_{\text{sig}} = \Phi_{\text{sig}}\big((\gV, \gE), f_T(\gV, \gE)\big),
\)
where \( \Phi_{\text{sig}} \) denotes the \emph{signature generator}. The resulting signature \( \gS_{\text{sig}}=((\gV_{\text{sig}}, \gE_{\text{sig}}), \gO_{\text{sig}}) \subseteq \gS \) consists of a selected subgraph along with the corresponding outputs from the target model at the specified output level\footnote{For brevity, we omit the superscript \( * \in \{\text{emb}, \text{label}\} \) in later expressions. Unless otherwise specified, symbols without superscripts are assumed to be general and applicable to either output level.}. A detailed description of the signature construction process is provided in Section~\ref{subsec:sig-gen}. The generated signature serves as a verifiable identifier for the target model and is later used to authenticate ownership and detect model misuse under model extraction attacks.



\noindent \textbf{Post-Attack Phase: Ownership Verification.} 
When a suspicious model \( f_Q \) is identified, the model owner can perform ownership verification by leveraging the previously generated signature dataset \( \gS_{\text{sig}} \). Specifically, the owner queries the suspicious model \( f_Q \) on the signature inputs \( \gV_{\text{sig}} \), obtaining outputs \( \gO_{Q} = f_Q(\gV_{\text{sig}}, \gE_{\text{sig}}) \), and compares them to the reference outputs \( \gO_{\text{sig}} = f_T(\gV_{\text{sig}}, \gE_{\text{sig}}) \) from the target model. To determine whether \( f_Q \) originates from \( f_T \), we compute a matching score between the two output sets at a specific output level. Formally, the matching is measured as:
\(
M(\gO_{Q}, \gO_{\text{sig}}),
\)
where \( M \) denotes a level-specific matching metric quantifying the alignment between the outputs of the suspicious model and those of the target model, we will demonstrate it in Section~\ref{sec:exp}. If the matching score exceeds a predefined verification threshold \( \sigma \), we conclude with high confidence that \( f_Q \) is a surrogate derived from the proprietary model \( f_T \).

\subsection{Pre-Attack Phase: Signature Generation}
\label{subsec:sig-gen}

To support ownership verification, we design a novel signature mechanism that differs fundamentally from watermarking and fingerprinting. Watermarking requires models to memorize trigger inputs with predefined outputs; however, such task-irrelevant patterns are hard to preserve under model extraction and impractical at the embedding level. Fingerprinting captures distributional features at the embedding level but fails to generalize to label-level verification and often incurs high computational cost due to auxiliary classifiers. In contrast, our signature directly leverages boundary-sensitive input-output pairs as credentials, enabling lightweight and accurate verification across output levels. Our model extraction attack and defense settings are based on widely accepted assumptions, which are discussed in detail in the Appendix~\ref{appendix:assumptions}. We begin by identifying boundary nodes, and then construct the ownership-verification signature set \( \gS_{\text{sig}} \) based on these nodes using our proposed \emph{signature score}.


\noindent\textbf{Boundary Nodes Identification.} We first introduce our decision boundary node generation method. Given a target model \( f_T \), which performs node classification on a graph \( (\gV, \gE) \), we compute the logit matrix \( \mZ = f_T(\gV, \gE) \in \sR^{|\gV| \times c} \), where \( \mZ_{v, p} \) denotes the logit score assigned to node \( v \in \gV \) for class \( p \in \{1, \dots, c\} \). We then define a boundary voting score function \( s_{\text{boundary}}: \gV \rightarrow \sR \), which assigns each node a score based on its logits. For each node \( v \in \gV \), let \( p, q \in \{1, \dots, c\} \) be the indices of the top-1, and top-2 largest entries in the vector \( \mZ_{v, :} \). The boundary score is defined as:
\begin{equation}
    s_{\text{boundary}}(v) = \psi(\mZ_{v, q} - \mZ_{v, p}) - \lambda \cdot \mathcal{H}(\text{softmax}(\mZ_{v,:}))
\end{equation}
Here, $\psi$ denotes the ReLU function, which captures the gap between the two largest logit entries.
%
And \( \mathcal{H}(\cdot) = - \sum_{i=1}^c p_i \log p_i \) represents the entropy of the predicted class distribution, where \( p_i = \text{softmax}(\mZ_{v,:})_i \) denotes the predicted probability for class $i$.
%
The purpose of the boundary score is to identify nodes that lie on or near the decision boundary. Accordingly, we compute the boundary score for all nodes \( v \in \gV \), and select the bottom $m\%$ with the lowest scores to form the boundary node set \( \gS_{\text{boundary}} \subseteq \gV \). We extract the signature set \( \gS_{\text{sig}} \subseteq \gV \) from the identified boundary nodes \( \gS_{\text{boundary}} \), as the decision boundary captures the most distinctive and model-specific behavior. Our objective is to select nodes near the boundary that reflect both classification ambiguity and unique decision patterns. To quantify these properties, we define a signature score composed of three complementary metrics: \emph{boundary margin}, \emph{boundary thickness}, and \emph{heterogeneity}. These capture embedding-level distinctiveness, output-level uncertainty, and structural instability, respectively. The final signature score aggregates these metrics to guide the selection of informative, boundary-sensitive nodes~\citep{yang2020boundary}.

%


\noindent\textbf{Boundary Margin.} At the embedding level, we evaluate the distinctiveness of model representations by measuring the proximity of each non-boundary node to its closest boundary node within the same class. We define the boundary margin score as:
\begin{align}
    s_{\text{margin}}(v_i, v_j) = \left\| \vh_i - \vh_j \right\|_2, \quad \text{for } v_i \in \gV \setminus \gS_{\text{boundary}},
\end{align}
where \( \vh_i \in \sR^d \) denotes the embedding of node $v_i$, and $y_i$ is the predicted label for $v_i$. A larger margin $s_{\text{margin}}$ indicates that boundary representations are more distinguishable from the rest of the graph, suggesting a clearer decision boundary. In contrast, smaller margin values imply greater ambiguity and highlight regions with greater discriminative sensitivity.


\noindent\textbf{Boundary Thickness.}
At the label level, we assess classification ambiguity by comparing softmax outputs between each node \( v_i \in \gV \setminus \gS_{\text{boundary}} \) and its closest boundary node \( v_j \in \gS_{\text{boundary}} \). Let \( \vt_i = \text{softmax}(\vz_i) \in \sR^c \) be the predicted probability vector of $v_i$, and \( t_i = \vt_i[y_i] \) its confidence in the predicted class \( y_i = \arg\max \vt_i \). The boundary thickness score is defined as:
\begin{align}
    s_{\text{thickness}}(v_i, v_j) = \left\| \vt_i - \vt_j \right\|_2 \cdot \sigma\left( \gamma - (t_i - t_j) \right),
\end{align}
where $\sigma(\cdot)$ is the sigmoid function, $\gamma$ is a confidence gap threshold. This score captures both prediction similarity and confidence gap, where larger values indicate weaker class separation and greater
decision ambiguity near the boundary.



\noindent\textbf{Heterogeneity.}
To capture structural uncertainty, we measure the label inconsistency in each node’s local neighborhood. For each node \( v_i \in \gV \setminus \gS_{\text{boundary}} \), the heterogeneity score is defined as:
\begin{align}
    s_{\text{hetero}}(v_i) = \frac{1}{|\mathcal{N}(v_i)|} \sum_{v_j \in \mathcal{N}(v_i)} \mathbf{1}\left[y_i \neq y_j\right],
\end{align}
where $\mathcal{N}(v_i)$ is the set of 1-hop neighbors and $y_i$ is the predicted class label of $v_i$. 
This score captures local predictive disagreement, with higher values corresponding to structurally unstable regions near the decision boundary, where model specific behaviors are most expressed.


\noindent\textbf{Signature Score.}  
To identify the most informative nodes for ownership verification, we define an aggregated score \( \hat{s}(v_i) \) for each node \( v_i \in \gV \setminus \gS_{\text{boundary}} \), combining embedding distinctiveness, classification ambiguity, and structural uncertainty:
\begin{equation}
\begin{split}
\hat{s}(v_i)
= {} & \min_{\substack{v_j \in \gS_{\text{boundary}} \\ y_j = y_i}}
      \hat{s}_{\text{margin}}(v_i, v_j) \\
 {} + \alpha_1
      &\min_{\substack{v_j \in \gS_{\text{boundary}} \\ y_j = y_i}}
      \hat{s}_{\text{thickness}}(v_i, v_j) \\
 {} - \alpha_2
      &\;\;\;\;\max_{v_i \in \gS}
      \;\;\;\hat{s}_{\text{hetero}}(v_i).
\end{split}
\end{equation}
where \( \hat{s}_\text{margin} \), \( \hat{s}_\text{thickness} \), and \( \hat{s}_\text{hetero} \) are normalized scores, and \( \alpha_1, \alpha_2 \ge 0 \) are weighting coefficients. We then construct the signature set \( \gS_{\text{sig}} \) by selecting the bottom \( \rho\% \) of nodes with the lowest aggregated scores:
\(
\gS_{\text{sig}} = \left\{ v_i \in \gV \setminus \gS_{\text{boundary}} \;\middle|\; \hat{s}(v_i) \leq \tau \right\} \cup \gS_{\text{boundary}},
\)
where \( \tau \) is the \( \rho \)-quantile of the scores \( \hat{s}(v_i) \). This ensures that selected nodes lie close to the decision boundary while capturing model-specific uncertainty across multiple levels.

\subsection{Post-Attack Phase: Ownership Verification}
\label{subsec:post-ov}

To verify model ownership, we design a unified evaluation framework that operates on a protected signature node set \( \gS_{\text{sig}} \subseteq \gV \). The core intuition is that surrogate models—if extracted from a protected target—should exhibit output consistency with the target model on these signature nodes, while independently trained models should not. To quantify this separation, we adopt the \emph{Area under the Robustness-Uniqueness Curve} (ARUC)~\citep{cao2021ipguard}, a widely used verification metric that captures the trade-off between recognizing surrogate models and rejecting unrelated ones. The full computation procedure is provided in Appendix~\ref{appendix:ARUC}. We extend ARUC to both the embedding and label levels by designing level-specific matching scores that reflect output agreement between the suspicious model $f_Q$ and the target model $f_T$.

\noindent\textbf{Embedding level.}
Each model produces high-dimensional embeddings on the signature nodes. We compute the 2-Wasserstein distance~\citep{panaretos2019statistical} between the embeddings of the suspicious and target models:
\begin{align}
    M^{\text{emb}}(f_Q) = W_2\big(\gO^{\text{emb}}{f_Q}, \gO^{\text{emb}}{\text{sig}}\big).
\end{align}
Here, $W_2(\cdot, \cdot)$ denotes the 2-Wasserstein distance, which measures the optimal transport cost between two empirical distributions. We adopt $W_2$ as it effectively captures distributional differences between embedding spaces. In the context of MEAs, surrogate models often attempt to duplicate the target model by approximating its embedding distribution. Therefore, a smaller Wasserstein distance indicates closer alignment with the target model’s behavior on the signature set, suggesting that the suspicious model may have inherited knowledge from the protected model. 

\noindent\textbf{Label level.}
Each model outputs discrete class predictions on the signature nodes. We compute the agreement as prediction accuracy with respect to the target model:
\begin{align}
    M^{\text{label}}(f_Q) = \frac{1}{|\gS_{\text{sig}}|} \sum_{v \in \gS_{\text{sig}}} \mathbf{1}\left[\gO^{\text{label}}{f_Q}(v) = \gO^{\text{label}}{\text{sig}}(v)\right].
\end{align}
Here, the matching score reflects the proportion of signature nodes for which the suspicious and target models produce the same predicted labels. In the context of model extraction attacks, adversaries often aim to duplicate the target model’s decision boundaries. Since our signature nodes are specifically designed near such boundaries, surrogate models are more likely to match the target model’s predictions on this subset. Therefore, a higher prediction accuracy indicates stronger behavioral alignment and suggests potential unauthorized model stealing from the target model.

\section{Theoretical Analysis}
\label{sec:theory}

To establish a solid theoretical foundation for our ownership verification scheme, we model the attacker in a model extraction scenario as producing a surrogate GNN that differs from the target model by a small parameter perturbation. Prior work~\citep{liao2020pac} shows that under such bounded perturbations, the output deviation of a message-passing GNN is also bounded. The formal statement of this result is provided in Lemma~\ref{lemma:pert-bound} in the Appendix. From Lemma~\ref{lemma:pert-bound} we have relative-perturbation ratio 
\(
\eta := \max_{i\in[L]} \frac{\lVert \mU_i\rVert_{2}}{\lVert \mW_i\rVert_{2}} \le \frac{1}{L},
\)
which quantifies the layer-wise magnitude of parameter perturbation. Assuming zero-mean, independent, and almost surely bounded perturbations \( \lVert \mU_i \rVert_2 \le \rho_i \), we define the proxy variance
\(
\sigma^2 = (d \eta)^2 \left( \prod_{i=1}^{L-1} \lVert \mW_i \rVert_2 \right)^2 \sum_{i=1}^{L} \left( \frac{\rho_i}{\lVert \mW_i \rVert_2} \right)^2,
\)
which characterizes the overall perturbation scale. Building on Lemma~\ref{lemma:pert-bound} and Proposition~\ref{prop:emb}, we now present the following theorem, which formalizes the theoretical guarantee of our ownership verification scheme under bounded parameter perturbations.

\begin{theorem}[Probabilistic Wasserstein Bound]
\label{thm:wass_prob}
Let \(\ve_a = \vf_{\vw}(\mX,\mA)\) and \(\ve_b = \vf_{\vw+\vu}(\mX,\mA)\), where \(\vu = \operatorname{vec}(\{\mU_1,\dots,\mU_L\})\). Suppose the random matrices \(\{\mU_i\}_{i=1}^{L}\) are mutually independent, satisfy \(\mathbb{E}[\mU_i] = \mathbf{0}\), and are almost surely bounded as \(\lVert \mU_i \rVert_2 \le \rho_i\). For any \(p \ge 1\), define \(D_p = W_p(\delta_{\ve_a}, \delta_{\ve_b})\). If \(D_p\) is sub-Gaussian with proxy variance \(\sigma^2\), then for every \(0 < \lambda {<} \Delta_{\gG}\),
\begin{align}
    \Pr(D_p < \lambda)
\;\ge\;
1 - \exp\!\left(-\frac{(\Delta_{\gG} - \lambda)^2}{2\sigma^2}\right).
\end{align}
{For all $\lambda \ge \Delta_\gG$, the event $D_p \le \Delta_G$ implies $\Pr(D_p < \lambda) = 1$.}
\end{theorem}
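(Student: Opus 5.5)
The plan is to reduce the Wasserstein distance between two Dirac masses to a norm of the output perturbation. Then I will combine the deterministic bound of Lemma~\ref{lemma:pert-bound} with the assumed sub-Gaussian concentration. Here \(\Delta_{\gG}\) denotes the deterministic perturbation bound that Lemma~\ref{lemma:pert-bound} (via Proposition~\ref{prop:emb}) supplies for \(\lVert \ve_a - \ve_b\rVert_2\) on the graph \(\gG\).

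\textbf{Step 1: reduce \(D_p\) to a norm.} For Dirac measures, the only coupling of \(\delta_{\ve_a}\) and \(\delta_{\ve_b}\) is \(\delta_{(\ve_a,\ve_b)}\). Hence \(D_p = W_p(\delta_{\ve_a},\delta_{\ve_b}) = \lVert \ve_a-\ve_b\rVert_2\) for every \(p\ge 1\). So \(D_p\) does not depend on \(p\), and any bound on the output deviation transfers directly.

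\textbf{Step 2: almost sure boundedness.} The perturbations are bounded almost surely, \(\lVert \mU_i\rVert_2\le\rho_i\), and the relative ratio satisfies \(\eta\le 1/L\). Lemma~\ref{lemma:pert-bound} therefore applies pathwise and gives \(D_p\le \Delta_{\gG}\) almost surely. The final claim of the theorem follows at once. For \(\lambda>\Delta_{\gG}\) we have \(\{D_p\le\Delta_{\gG}\}\subseteq\{D_p<\lambda\}\), so \(\Pr(D_p<\lambda)=1\). At \(\lambda=\Delta_{\gG}\) the statement should be read with the non-strict event \(D_p\le\Delta_{\gG}\), which is how the theorem phrases it.

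\textbf{Step 3: the tail bound for \(0<\lambda<\Delta_{\gG}\).} This is the main step. I read the sub-Gaussian hypothesis as the one-sided tail inequality
\[
\Pr(D_p \le \Delta_{\gG} - t)\le \exp\!\left(-\frac{t^2}{2\sigma^2}\right), \quad t>0.
\]
This is the lower deviation of \(D_p\) below its almost-sure ceiling \(\Delta_{\gG}\), with \(\sigma^2\) the proxy variance defined before the theorem. Setting \(t=\Delta_{\gG}-\lambda>0\) and taking complements should give the claimed inequality.

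The main obstacle is making the sub-Gaussian assumption actually produce this one-sided bound. A standard centered sub-Gaussian statement concerns \(D_p-\mathbb{E}D_p\), and it gives control of \(\Pr(D_p\ge \mathbb{E}D_p+t)\le e^{-t^2/(2\sigma^2)}\). I would first try to work with the centered form. Since \(\mathbb{E}D_p\le\Delta_{\gG}\), it yields \(\Pr(D_p\ge\lambda)\le\exp(-(\lambda-\mathbb{E}D_p)^2/(2\sigma^2))\) for \(\lambda>\mathbb{E}D_p\), which is a different bound. If that does not match the statement, I would state explicitly that ``sub-Gaussian with proxy variance \(\sigma^2\)'' is taken relative to the reference level \(\Delta_{\gG}\), in the one-sided form displayed above. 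The proof then closes by the complement argument. I would justify this reading from the construction of \(\sigma^2\). The factors \((d\eta)^2(\prod_{i<L}\lVert\mW_i\rVert_2)^2\sum_i(\rho_i/\lVert\mW_i\rVert_2)^2\) are exactly the squared Lipschitz constant times the squared bounded ranges appearing in Lemma~\ref{lemma:pert-bound}. Hoeffding-type reasoning for independent, zero-mean, bounded \(\mU_i\) therefore gives sub-Gaussianity with this proxy.
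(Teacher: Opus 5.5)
Your proof has a genuine gap in Step~3, and it is a direction error rather than a matter of interpretation. With $t=\Delta_{\gG}-\lambda$, your assumed inequality reads $\Pr(D_p\le\lambda)\le q$, where $q=\exp(-(\Delta_{\gG}-\lambda)^2/(2\sigma^2))$. Its complement is $\Pr(D_p>\lambda)\ge 1-q$, which is a lower bound on the \emph{opposite} event. Your reading, that $D_p$ concentrates just below its worst-case ceiling, actually contradicts the target. It gives $\Pr(D_p<\lambda)\le q$, which is incompatible with $\Pr(D_p<\lambda)\ge 1-q$ as soon as $(\Delta_{\gG}-\lambda)^2>2\sigma^2\ln 2$. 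Your Hoeffding justification also points the other way. Independent, zero-mean, bounded $\mU_i$ make the deviation concentrate near $0$, so they control the \emph{upper} tail. That is the paper's route. It writes $\ve_b-\ve_a=\mathbf{S}+\mathbf{R}$ by a first-order (Fr\'echet) expansion. It then derives $\Pr(\lVert\mathbf{S}\rVert_2\ge t)\le e^{-t^2/(2\sigma^2)}$ from a vector Bernstein bound on the linear term, rather than positing a tail for $D_p$. Finally it invokes the inclusion $\{\lVert\mathbf{S}\rVert_2<\Delta_{\gG}-\lambda\}\subseteq\{D_p<\lambda\}$.

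Be aware, however, that no choice of tail direction yields the stated form on all of $(0,\Delta_{\gG})$. The right-hand side $1-e^{-(\Delta_{\gG}-\lambda)^2/(2\sigma^2)}$ decreases in $\lambda$, while $\Pr(D_p<\lambda)$ is nondecreasing. Letting $\lambda\downarrow 0$ shows the claimed family is equivalent to $\Pr(\ve_a=\ve_b)\ge 1-e^{-\Delta_{\gG}^2/(2\sigma^2)}$. This fails whenever $\ve_b\neq\ve_a$ almost surely, as is typical for continuously distributed perturbations. In that case the centered sub-Gaussian hypothesis even holds automatically when $\sigma^2\ge\Delta_{\gG}^2/4$, by Hoeffding's lemma.

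What an upper-tail bound $\Pr(D_p\ge t)\le e^{-t^2/(2\sigma^2)}$ actually gives is $\Pr(D_p<\lambda)\ge 1-e^{-\lambda^2/(2\sigma^2)}$. Your centered version gives $1-e^{-(\lambda-\mathbb{E}D_p)^2/(2\sigma^2)}$ for $\lambda>\mathbb{E}D_p$. These imply the stated bound only when $\lambda\ge\Delta_{\gG}/2$, respectively $\lambda\ge(\Delta_{\gG}+\mathbb{E}D_p)/2$.

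The paper's inclusion hides exactly this problem. From $\lVert\mathbf{S}\rVert_2<\Delta_{\gG}-\lambda$ one only gets $D_p<\Delta_{\gG}-\lambda+\lVert\mathbf{R}\rVert_2$. Its justification, ``$\lVert\mathbf{R}\rVert_2\le\Delta_{\gG}-\lambda$ for all $0<\lambda\le\Delta_{\gG}$'', forces $\mathbf{R}=0$, and even then the inclusion holds only for $\lambda\ge\Delta_{\gG}/2$. So the fix is to aim for the monotone bound, not to reinterpret the hypothesis.

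Your Steps~1 and~2 are fine, with two conditions. You need $\rho_i\le\lVert\mW_i\rVert_2/L$. You also need $\eta$ in $\Delta_{\gG}$ replaced by $\max_i\rho_i/\lVert\mW_i\rVert_2$, so that the ceiling is deterministic.
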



This theorem provides a theoretical guarantee for the effectiveness of the ownership verification method at the embedding level. Furthermore, based on Lemma~\ref{lemma:pert-bound} and Proposition~\ref{prop:label}, we also establish a theoretical foundation for ownership verification at the label level.

\begin{theorem}[Prediction‑Agreement Probability]
\label{thm:agree}
Let \(C\) be the number of classes. Let \(\vz = \vf_{\vw}(\mX, \mA) \in \mathbb{R}^{C}\) denote the logit vector from the unperturbed model, and define \(c^{\ast} = \arg\max_j \vz_j\) as the predicted class with logit margin
\(
\gamma = \vz_{c^{\ast}} - \max_{j \ne c^{\ast}} \vz_j > 0.
\)
Let \(\tilde{\vz} = \vf_{\vw + \vu}(\mX, \mA)\) be the perturbed logits and \(\tilde{c} = \arg\max_j \tilde{\vz}_j\) the corresponding prediction. Then for any \(p \ge 1\),
\begin{align}
    \Pr(\tilde{c} = c^{\ast}) \;\ge\; 1 - (C - 1)\, \exp\left(-\frac{\gamma^2}{8\sigma^2}\right).
\end{align}
\end{theorem}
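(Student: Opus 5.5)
The plan is a union bound over the $C-1$ competing classes, with a sub-Gaussian tail bound for each pairwise logit difference. The prediction flips only if some competitor overtakes the top class, so
\[
\{\tilde c \neq c^\ast\} \subseteq \bigcup_{j\neq c^\ast} \{\tilde{\vz}_j \ge \tilde{\vz}_{c^\ast}\}.
\]
Write $\vz-\tilde{\vz}=\bm{\delta}$. Since $\vz_{c^\ast}-\vz_j \ge \gamma$, the event $\tilde{\vz}_j \ge \tilde{\vz}_{c^\ast}$ forces
\[
(\tilde{\vz}_j - \vz_j) - (\tilde{\vz}_{c^\ast}-\vz_{c^\ast}) \ge \gamma .
\]
So at least one of the two coordinate deviations has magnitude at least $\gamma/2$. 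For each $j$,
\[
\Pr(\tilde{\vz}_j \ge \tilde{\vz}_{c^\ast}) \le \Pr\big(\lVert \bm{\delta}\rVert_\infty \ge \gamma/2\big)
\quad\text{or, more sharply,}\quad
\Pr\big(\langle \ve_j-\ve_{c^\ast},\bm{\delta}\rangle \ge \gamma\big).
\]

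The core step is to control the pairwise deviation $\langle \ve_j-\ve_{c^\ast},\bm{\delta}\rangle$ as a sub-Gaussian variable with proxy variance $\sigma^2$ from Section~\ref{sec:theory}. I would take this from Proposition~\ref{prop:label}, which presumably gives a concentration statement for the perturbed logits. That statement is built on Lemma~\ref{lemma:pert-bound}, which bounds $\lVert \vf_{\vw+\vu}-\vf_{\vw}\rVert_2$ linearly in $\sum_i \lVert\mU_i\rVert_2/\lVert\mW_i\rVert_2$, with prefactor $d\eta\prod_{i}\lVert\mW_i\rVert_2$ (using $\eta\le 1/L$ to absorb the $(1+1/L)^L\le e$ factor). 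The perturbations are zero-mean, independent and bounded by $\rho_i$. By Hoeffding's lemma, each term $\lVert\mU_i\rVert_2/\lVert\mW_i\rVert_2$, or more precisely its linearized contribution, is sub-Gaussian with parameter $\rho_i/\lVert\mW_i\rVert_2$. Summing independent contributions then gives exactly the stated $\sigma^2$.

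Take a projection of $\bm\delta$ onto a unit direction, with deviation threshold $\gamma/2$ (splitting the margin between the two coordinates). The one-sided tail is then
\[
\Pr\big(\langle \vu', \bm\delta\rangle \ge \gamma/2\big) \le \exp\!\big(-\gamma^2/(8\sigma^2)\big).
\]
This is the constant $8$ in the statement: the $(\gamma/2)^2/(2\sigma^2)$ form. I would therefore phrase the per-competitor event as ``the deviation of $\tilde{\vz}_j-\tilde{\vz}_{c^\ast}$ from $\vz_j-\vz_{c^\ast}$, normalized, exceeds $\gamma/2$ in the bad direction.'' Concretely, this uses that the difference vector $(\ve_j-\ve_{c^\ast})/2$ has norm at most $1$, so its projection inherits the proxy variance $\sigma^2$.

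Finally I would sum the $C-1$ identical tail bounds and take complements to obtain $\Pr(\tilde c=c^\ast)\ge 1-(C-1)\exp(-\gamma^2/(8\sigma^2))$; the parameter $p$ plays no role.

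The main obstacle is rigorously justifying the sub-Gaussianity of the logit deviation, since the GNN is nonlinear in $\vu$. I would first rely on Proposition~\ref{prop:label} as providing it, mirroring how Theorem~\ref{thm:wass_prob} assumes it for $D_p$. Failing that, I would fall back on a bounded-differences (McDiarmid) argument. That argument uses the Lipschitz bound of Lemma~\ref{lemma:pert-bound} in each independent $\mU_i$, and each such influence is bounded by a constant multiple of $\rho_i/\lVert\mW_i\rVert_2$. It yields the same $\sigma^2$ up to constants, the deviation being around its mean, which is zero to first order.
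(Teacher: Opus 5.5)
Your union bound is organized differently from the paper's, and yours is the sounder of the two. Write $S$ for the linearized logit deviation, as the paper does. The paper splits the margin coordinatewise: a flip requires $S_{c^\ast}\le-\gamma/2$ or $S_j\ge\gamma/2$ for some $j\neq c^\ast$. That gives $C$ one-sided events and hence only $\Pr(\tilde c\neq c^\ast)\le C\exp(-\gamma^2/(8\sigma^2))$. Its final step of ``keeping only the $C-1$ positive-tail events'' is not justified, because a flip can be caused entirely by a drop in $\tilde{\vz}_{c^\ast}$.

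Your pairwise events $\{\langle \ve_j-\ve_{c^\ast},\tilde{\vz}-\vz\rangle\ge\gamma\}$, $j\neq c^\ast$, number exactly $C-1$ and do cover the flip event. So your route is the one that actually yields the stated factor $C-1$. With your convention $\bm\delta=\vz-\tilde{\vz}$ the event should read $\langle \ve_j-\ve_{c^\ast},\bm\delta\rangle\le-\gamma$, but that is only a sign slip.

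Your ``norm at most $1$'' step needs sub-Gaussianity of every one-dimensional projection. This does hold for the linearized term, by Hoeffding's lemma on each bounded independent block, but it is more than the paper asserts. Even coordinatewise $\sigma^2$-sub-Gaussianity alone gives exactly the constant $8$. By Cauchy--Schwarz, $\mathbb{E}e^{s(S_j-S_{c^\ast})}\le(\mathbb{E}e^{2sS_j})^{1/2}(\mathbb{E}e^{-2sS_{c^\ast}})^{1/2}\le e^{2s^2\sigma^2}$. Hence $S_j-S_{c^\ast}$ is $4\sigma^2$-sub-Gaussian and $\Pr(S_j-S_{c^\ast}\ge\gamma)\le\exp(-\gamma^2/(8\sigma^2))$, with no assumption on dependence between coordinates. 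Under your directional hypothesis you would even get $\exp(-\gamma^2/(4\sigma^2))$, since $\lVert(\ve_j-\ve_{c^\ast})/2\rVert_2=1/\sqrt2$.

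Where your plan does not go through is the source of the sub-Gaussianity. Proposition~\ref{prop:label} is a deterministic Lipschitz bound, $\lvert\operatorname{conf}(\vp)-\operatorname{conf}(\tilde{\vp})\rvert\le\Delta_{\gG}$. Neither it nor Lemma~\ref{lemma:pert-bound} carries any tail information, so there is nothing to ``take'' from them.

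Your McDiarmid fallback has two problems. It concentrates $\tilde{\vz}_j-\tilde{\vz}_{c^\ast}$ around its \emph{mean}, which differs from $\vz_j-\vz_{c^\ast}$ by the second-order bias of the nonlinear map. It also recovers $\sigma^2$ only up to constants; moreover, Lemma~\ref{lemma:pert-bound} bounds deviations from $\vw$, not one-block changes at an arbitrary base point $\vw+\vu$. At best this proves the inequality with $\gamma$ reduced by the bias and a worse constant.

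The paper meets the same obstacle and does not really resolve it. It linearizes, asserts that each coordinate of the linear term is $\sigma^2$-sub-Gaussian, and ``absorbs'' the $O(\eta^2)$ remainder into a constant that the stated bound does not contain. To make either argument rigorous you have two options:
\begin{itemize}
\item State the missing ingredient as a hypothesis: the coordinates of $\tilde{\vz}-\vz$ (or, in your version, the pairwise differences) are $\sigma^2$-sub-Gaussian, mirroring how Theorem~\ref{thm:wass_prob} assumes it for $D_p$.
\item Carry a deterministic bound $\lVert\mathbf{R}\rVert_2\le r$ on the remainder and prove $(C-1)\exp\bigl(-(\gamma-\sqrt2\,r)_+^2/(8\sigma^2)\bigr)$ instead. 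Your pairwise form handles this cleanly, since $\lvert\langle\ve_j-\ve_{c^\ast},\mathbf{R}\rangle\rvert\le\sqrt2\,r$.
\end{itemize}
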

Consequently, \textsc{CITED} effectively exploits signature nodes as ownership indicators in the small perturbation regime. Our results further show that surrogates generated by practical MEAs remain close in embedding space and exhibit high label alignment with high probability.
Our unified verification pipeline leverages these properties to provide rigorous theoretical justification for the effectiveness of CITED. Formal proofs are presented in Appendix~\ref{appendix:proof}.

\section{Experimental Evaluations}
\label{sec:exp}

In this section, we empirically evaluate the performance of the CITED framework. Specifically, we aim to address the following research questions: 
\textbf{RQ1}: How well does the CITED framework preserve the utility of the target model on downstream tasks? 
\textbf{RQ2}: How effective and efficient is CITED in verifying model ownership across different output levels? 
\textbf{RQ3}: What is the contribution of each component in \emph{signature} algorithm to the overall defense performance?

\subsection{Experimental Setup}

\begin{table*}[t]
\scriptsize
\centering

\caption{Performance evaluation of all defense methods on the original task, reported in terms of accuracy. All numerical values are in percentage. The best results are highlighted in \textbf{bold}.}
\vspace{-3mm}
\label{table1}
\setlength{\tabcolsep}{2.25pt}
\begin{tabular}{l|l|c|c|c|c|c|c|c}
\toprule
\textbf{Model} & \textbf{Method} & \textbf{Cora} & \textbf{CiteSeer} & \textbf{PubMed} & \textbf{Photo} & \textbf{Computers} & \textbf{CS} & \textbf{Physics} \\
\midrule

\multirow{5}{*}{GCN} 
& GrOVe & $80.37 \pm 0.90$ & $68.97 \pm 0.55$ & $80.73 \pm 0.87$ & $92.80 \pm 0.20$ & $84.70 \pm 0.89$ & $\mathbf{91.27 \pm 0.57}$ & $89.23 \pm 1.36$ \\
& RandomWM & $80.10 \pm 0.44$ & $68.60 \pm 0.44$ & $80.63 \pm 0.50$ & $92.87 \pm 0.96$ & $83.50 \pm 2.72$ & $90.73 \pm 0.31$ & $90.40 \pm 2.51$ \\  
& BackdoorWM & $77.97 \pm 0.60$ & $68.07 \pm 0.40$ & $79.27 \pm 0.49$ & $91.40 \pm 0.35$ & $81.60 \pm 1.11$ & $90.20 \pm 0.61$ & $90.57 \pm 0.80$ \\ 
& SurviveWM & $80.13 \pm 1.02$ & $68.10 \pm 1.47$ & $80.60 \pm 0.66$ & $91.60 \pm 0.53$ & $84.87 \pm 1.90$ & $90.83 \pm 0.85$ & $91.33 \pm 0.85$ \\ 
& \textbf{CITED} & $\mathbf{82.30 \pm 0.16}$ & $\mathbf{72.03 \pm 0.41}$ & $\mathbf{81.97 \pm 0.12}$ & $\mathbf{93.50 \pm 0.08}$ & $\mathbf{86.63 \pm 0.24}$ & ${89.23 \pm 0.12}$ & $\mathbf{94.47 \pm 0.12}$ \\

\midrule
\multirow{5}{*}{GAT} 
& GrOVe & $82.00 \pm 0.62$ & $69.40 \pm 1.30$ & $79.63 \pm 0.85$ & $91.30 \pm 1.30$ & $84.93 \pm 1.71$ & $\mathbf{90.60 \pm 0.44}$ & $90.93 \pm 1.10$ \\
& RandomWM & $83.57 \pm 0.95$ & $70.23 \pm 0.90$ & $78.93 \pm 2.55$ & $\mathbf{92.13 \pm 0.49}$ & $\mathbf{87.00 \pm 0.36}$ & $90.13 \pm 0.85$ & $90.43 \pm 0.85$ \\ 
& BackdoorWM & $80.67 \pm 1.55$ & $68.97 \pm 1.56$ & $80.77 \pm 1.83$ & $90.97 \pm 0.46$ & $86.43 \pm 0.47$ & $88.17 \pm 0.55$ & $90.17 \pm 1.10$ \\ 
& SurviveWM & $82.80 \pm 1.32$ & $67.33 \pm 3.20$ & $77.10 \pm 2.86$ & $91.50 \pm 0.95$ & $86.77 \pm 1.14$ & $87.43 \pm 2.05$ & $84.10 \pm 5.40$ \\ 
& \textbf{CITED} & $\mathbf{84.10 \pm 1.02}$ & $\mathbf{70.77 \pm 0.37}$ & $\mathbf{79.80 \pm 0.22}$ & ${90.87 \pm 2.29}$ & ${85.83 \pm 1.54}$ & ${90.50 \pm 0.22}$ & $\mathbf{91.37 \pm 0.33}$ \\

\midrule
\multirow{5}{*}{SAGE} 
& GrOVe & $82.40 \pm 0.52$ & $69.50 \pm 1.08$ & $78.80 \pm 1.11$ & $\mathbf{91.77 \pm 1.42}$ & $77.70 \pm 7.70$ & $91.87 \pm 0.50$ & $89.77 \pm 1.96$ \\
& RandomWM & $82.83 \pm 1.39$ & $69.77 \pm 0.29$ & $78.63 \pm 0.58$ & $91.73 \pm 0.23$ & $82.97 \pm 4.14$ & $\mathbf{91.97 \pm 0.76}$ & $90.33 \pm 0.64$ \\
& BackdoorWM & $80.87 \pm 0.15$ & $69.60 \pm 0.75$ & $77.57 \pm 0.90$ & $89.00 \pm 0.69$ & $72.60 \pm 2.31$ & $91.30 \pm 0.62$ & $89.73 \pm 0.40$ \\
& SurviveWM & $82.50 \pm 1.21$ & $70.47 \pm 0.85$ & $78.20 \pm 2.21$ & $91.50 \pm 0.17$ & $77.30 \pm 5.03$ & $91.23 \pm 0.55$ & $91.20 \pm 0.95$ \\
& \textbf{CITED} & $\mathbf{83.20 \pm 0.86}$ & $\mathbf{72.10 \pm 0.49}$ & $\mathbf{79.10 \pm 0.33}$ & ${88.10 \pm 0.80}$ & $\mathbf{84.00 \pm 2.11}$ & ${91.07 \pm 0.12}$ & $\mathbf{92.23 \pm 0.17}$ \\

\midrule
\multirow{5}{*}{GCNII} 
& GrOVe & $79.67 \pm 0.85$ & $69.43 \pm 0.81$ & $80.27 \pm 1.48$ & $93.00 \pm 1.75$ & $85.53 \pm 2.18$ & $92.20 \pm 0.17$ & $91.60 \pm 1.41$ \\
& RandomWM & $79.60 \pm 0.95$ & $69.10 \pm 0.85$ & $80.70 \pm 0.78$ & $93.00 \pm 0.82$ & $\mathbf{85.90 \pm 1.61}$ & $91.70 \pm 0.17$ & $91.93 \pm 0.55$ \\
& BackdoorWM & $79.47 \pm 0.35$ & $68.93 \pm 0.86$ & $81.37 \pm 0.25$ & $92.70 \pm 0.92$ & $83.90 \pm 1.13$ & $90.40 \pm 0.96$ & $90.97 \pm 0.15$ \\
& SurviveWM & $80.83 \pm 1.00$ & $68.33 \pm 0.50$ & $81.07 \pm 0.55$ & $\mathbf{93.80 \pm 0.26}$ & $85.73 \pm 1.47$ & $91.93 \pm 0.74$ & $90.13 \pm 2.87$ \\
& \textbf{CITED} & $\mathbf{82.23 \pm 0.50}$ & $\mathbf{69.97 \pm 0.38}$ & $\mathbf{81.90 \pm 0.29}$ & ${90.67 \pm 0.45}$ & ${83.80 \pm 1.20}$ & $\mathbf{92.60 \pm 0.08}$ & $\mathbf{92.53 \pm 0.21}$ \\

\midrule
\multirow{5}{*}{FAGCN} 
& GrOVe & $80.43 \pm 0.45$ & $71.33 \pm 0.15$ & $81.83 \pm 0.23$ & $93.23 \pm 0.42$ & $85.87 \pm 1.78$ & $92.67 \pm 0.38$ & $92.03 \pm 0.31$ \\
& RandomWM & $80.37 \pm 0.15$ & $\mathbf{71.60 \pm 0.53}$ & $81.20 \pm 0.10$ & $\mathbf{94.23 \pm 0.32}$ & $87.00 \pm 1.64$ & $92.67 \pm 0.12$ & $92.23 \pm 0.15$ \\
& BackdoorWM & $78.87 \pm 0.78$ & $70.03 \pm 1.02$ & $80.87 \pm 0.40$ & $93.10 \pm 0.10$ & $84.50 \pm 0.70$ & $91.83 \pm 0.49$ & $91.20 \pm 1.04$ \\
& SurviveWM & $80.60 \pm 0.85$ & $70.50 \pm 0.75$ & $\mathbf{82.27 \pm 0.81}$ & $94.17 \pm 0.21$ & $86.37 \pm 0.15$ & $91.97 \pm 0.59$ & $91.73 \pm 0.25$ \\
& \textbf{CITED} & $\mathbf{83.70 \pm 0.22}$ & ${71.37 \pm 0.12}$ & ${81.20 \pm 0.08}$ & ${92.60 \pm 0.14}$ & $\mathbf{83.70 \pm 0.33}$ & $\mathbf{92.73 \pm 0.12}$ & $\mathbf{92.60 \pm 0.16}$ \\
\bottomrule
\end{tabular}

\end{table*}

\noindent \textbf{Downstream Tasks, Datasets and Backbone GNNs.} 
We evaluate the performance and generalizability of the proposed CITED framework on the fundamental graph learning task of node classification~\citep{kipf2016semi}. For this purpose, we adopt seven widely used graph datasets covering citation networks~\citep{yang2016revisiting}, e-commerce data~\citep{shchur2018pitfalls}, and co-authorship graphs~\citep{shchur2018pitfalls}. All dataset statistics and descriptions are provided in Appendix~\ref{appendix:data-stats}. We conduct experiments using five representative backbone GNN models: GCN~\citep{kipf2016semi}, GAT~\citep{velivckovic2017gat}, and GraphSAGE~\citep{hamilton2017inductive}, as well as two state-of-the-art variants, GCNII~\citep{chen2020simple} and FAGCN~\citep{bo2021beyond}. Detailed architectural settings are summarized in Appendix~\ref{appendix:arch-backbone}.

\begin{table*}[t]
\scriptsize

\setlength{\tabcolsep}{1.2pt}
\centering
\caption{Ownership verification performance of \textsc{CITED} compared to baselines at different output levels. Larger values indicate better effectiveness. The best results are highlighted in \textbf{bold}.}
\label{table2}
\begin{tabular}{l|l|ccccccc}
\toprule
\textbf{Metric} & \textbf{Method} & \textbf{Cora} & \textbf{CiteSeer} & \textbf{PubMed} & \textbf{Photo} & \textbf{Computers} & \textbf{CS} & \textbf{Physics} \\

\midrule
\multirow{2}{*}{$\text{ARUC}^{\text{emb}}$} 
& GrOVe & $55.87 \pm 0.47$ & $63.09 \pm 1.71$ & $37.09 \pm 0.69$ & $64.31 \pm 1.00$ & $65.09 \pm 1.20$ & $57.38 \pm 2.13$ & $48.04 \pm 1.12$ \\
& \textbf{CITED} & \boldmath{$62.20 \pm 0.63$} & \boldmath{$66.51 \pm 1.14$} & \boldmath{$62.76 \pm 1.95$} & \boldmath{$65.62 \pm 1.52$} & \boldmath{$65.80 \pm 1.23$} & \boldmath{$70.76 \pm 1.76$} & \boldmath{$61.71 \pm 1.24$} \\
\midrule
\multirow{4}{*}{$\text{ARUC}^{\text{label}}$} 
& RandomWM & $16.51 \pm 4.65$ & $20.53 \pm 1.04$ & $15.04 \pm 2.51$ & $12.47 \pm 4.15$ & $17.47 \pm 8.64$ & $15.40 \pm 9.33$ & $24.98 \pm 2.11$ \\
& BackdoorWM & $32.84 \pm 23.2$ & $15.27 \pm 12.4$ & $16.13 \pm 1.04$ & $8.800 \pm 3.11$ & $19.07 \pm 5.77$ & $15.40 \pm 12.5$ & $16.13 \pm 13.5$ \\
& SurviveWM & $20.53 \pm 1.41$ & $18.44 \pm 1.82$ & $19.78 \pm 2.04$ & $33.73 \pm 5.19$ & $4.400 \pm 6.22$ & $37.22 \pm 3.08$ & $18.84 \pm 10.3$ \\
& \textbf{CITED} & \boldmath{$47.07 \pm 11.3$} & \boldmath{$59.29 \pm 3.51$} & \boldmath{$58.16 \pm 4.0$} & \boldmath{$45.07 \pm 8.48$} & \boldmath{$37.20 \pm 8.09$} & \boldmath{$41.82 \pm 3.46$} & \boldmath{$51.71 \pm 5.84$} \\

\bottomrule
\end{tabular}

\end{table*}

\noindent \textbf{Baselines, Threat Model.} 
To demonstrate the superiority of our CITED framework in defending against MEAs, we select state-of-the-art MEA defense methods in the graph domain as our baselines. These include watermark-based techniques such as RandomWM~\citep{zhao2021watermarking}, BackdoorWM~\citep{xu2023watermarking}, and SurviveWM~\citep{wang2023making}, all of which operate at the label level to provide defense. Additionally, we include GrOVe~\citep{waheed2023grove}, a fingerprint-based method that performs defense at the embedding level. For the threat model, we adopt GNNStealing~\citep{shen2022model} to simulate practical model extraction attacks. The details of the baselines and threat model are provided in Appendix~\ref{appendix:reproducibility}.

\noindent \textbf{Evaluation Metrics.} 
\label{subsec:metrics}
To evaluate the performance of our {CITED} framework, we first assess its utility on the downstream task. Specifically, we adopt standard metrics including Accuracy, F1 Score, and AUROC to reflect the model’s effectiveness on downstream task. In addition, we utilize the ARUC metric introduced in Section~\ref{subsec:post-ov} to verify the effectiveness of ownership verification across different output levels. This metric has been widely adopted in related work~\citep{cao2021ipguard,you2024gnnfingers}.  In addition, we report the AUC, a widely used metric that quantifies how well the verification scores can separate surrogate and extracted models~\citep{li2019prove,guan2022you}, with higher values indicating more reliable ownership verification. Details of the computation across different output levels used in our experiments are provided in Appendix~\ref{appendix:ASR}.

\subsection{Utility of CITED Framework on Downstream Tasks}

\begin{figure}  
    \centering
    \vspace{-0.8em}  
    \includegraphics[width=\linewidth]{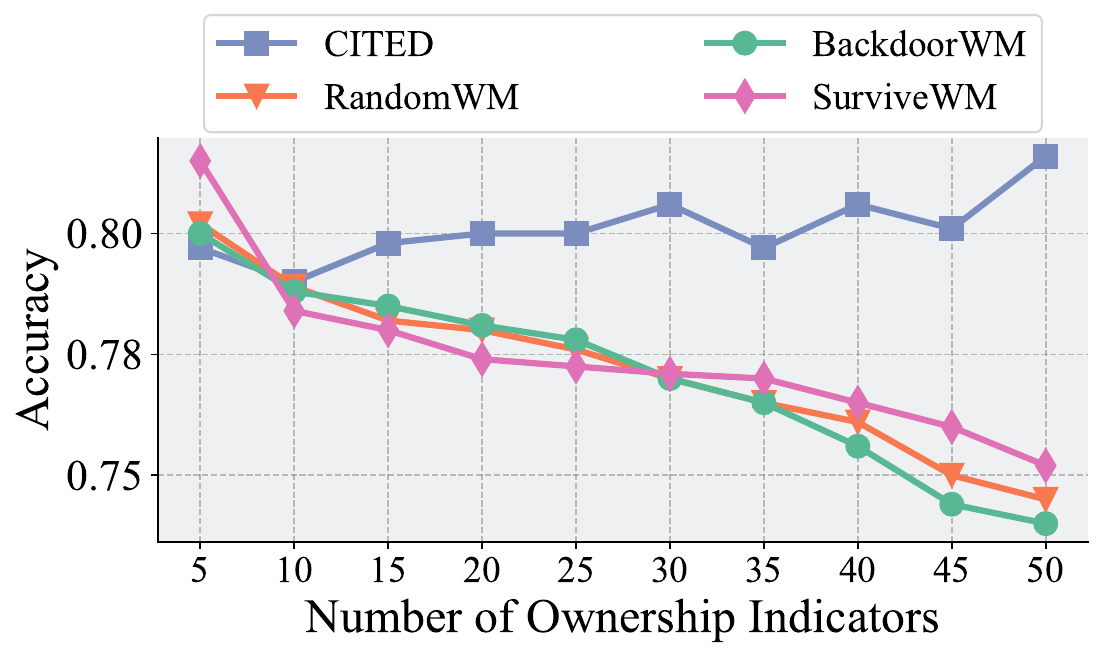}
    \vspace{-1.8em}
    \caption{Performance of different defense models $f_D$ on the downstream task as the number of embedded ownership verification flags increases.}
    \label{fig1}
    \vspace{-1em}
\end{figure}

To address \textbf{RQ1}, we evaluate the performance of the defense model $f_D$ on the node classification task. We aim to investigate whether a defense model $f_D$ can maintain performance on the original task.
As shown in Table~\ref{table1}, we conduct extensive experiments across various backbone models using different defense methods. We observe that watermark-based approaches generally suffer from performance degradation after the injection of watermark nodes. In contrast, our proposed method exhibits minimal degradation and, in some cases, even improves the model’s performance. Notably, CITED consistently demonstrates highly competitive performance across most backbone models and datasets.
%
Figure~\ref{fig1} further illustrates the effect of increasing the number of embedded ownership indicators. For watermarking methods, performance declines with more injected nodes. However, {CITED} remains robust, showing stable or slightly improved performance as more signature nodes are embedded.
%
%
These results indicate that watermarking tends to harm model utility by injecting task-irrelevant nodes, while our signature nodes preserve task relevance and can even enhance learning. This highlights the practicality of {CITED} for real-world downstream applications. A more detailed analysis of utility performance is provided in Appendix~\ref{appendix:supplementary-utility}.

\subsection{Effectiveness and Efficiency of CITED Framework}

\begin{figure}
    \centering

    \includegraphics[width=\linewidth]{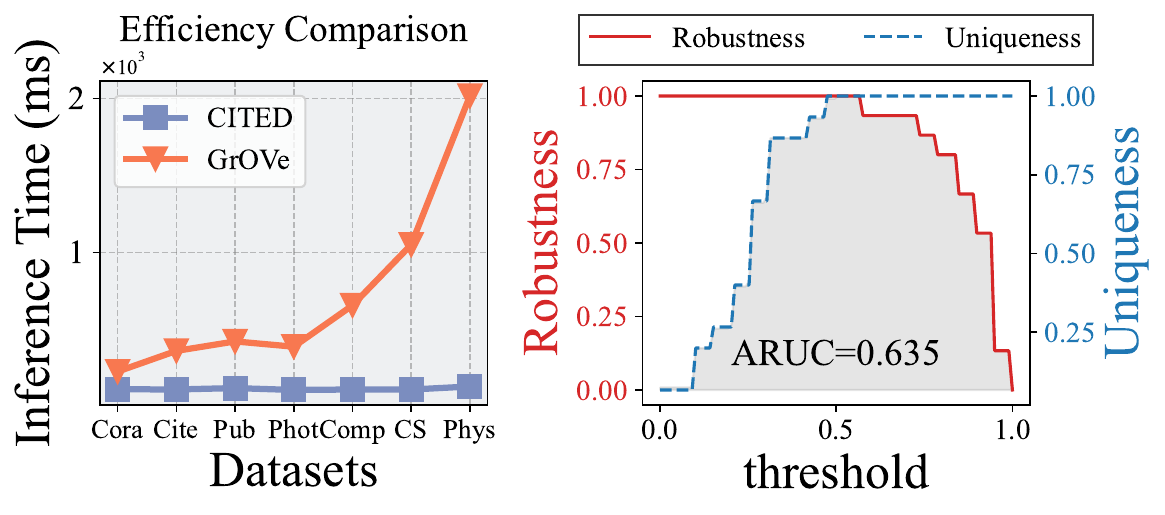} 

    \caption{Left: Inference time comparison between CITED and GrOVe at the embedding level across varying datasets. Right: ARUC of our proposed signature for ownership verification.}
    \label{fig2}

\vspace{-1em}
\end{figure}

To address \textbf{RQ2}, we simulate a practical setting involving both MEAs and corresponding defenses. Specifically, we evaluate whether different defense methods can preserve ownership indicators in surrogate models while distinguishing them from independently trained models. Detailed settings and procedures are provided in Appendix~\ref{appendix:exp-process}.
As shown in Table~\ref{table2}, we evaluate the effectiveness of our method using the ARUC metric. Across all output levels, our proposed CITED framework achieves state-of-the-art results. Notably, at the label level, watermark-based methods perform poorly, highlighting the difficulty of preserving task-irrelevant ownership indicators through MEAs. 
Moreover, Figure~\ref{fig2} (right) presents the ARUC curve on the PubMed dataset, demonstrating that our approach achieves strong performance in both robustness and uniqueness. 
%
%
We also examine inference efficiency. As shown in Figure~\ref{fig2} (left), \textsc{CITED} is substantially faster than GrOVe, which depends on multiple auxiliary models and incurs significant overhead on large-scale datasets.
Together, these results provide strong empirical evidence for the effectiveness and efficiency of CITED in ownership verification tasks. Detailed analysis of effectiveness is provided in Appendix~\ref{appendix:supplementary-effective-efficiency}.

\vspace{-0.1in}
\subsection{Ablation Analysis of Signature Credentials}

\begin{table}

\centering
\scriptsize

\caption{Effectiveness of individual components of the \emph{signature score}, including boundary margin (M), boundary thickness (T), and heterogeneity (H), as well as their combined effect, evaluated at both the embedding level and the label level.}
\hspace{1em}
\label{table4}
\setlength{\tabcolsep}{5.2pt}
\begin{tabular}{l|cc}
\toprule
\textbf{Method} & $\textbf{ARUC}^{\text{emb}}$ & $\textbf{ARUC}^{\text{label}}$ \\
\midrule
$\text{CITED}_{\text{M}}$ & $69.67 \pm 2.11 $ & $42.00 \pm 6.15$ \\
$\text{CITED}_{\text{T}}$ & $ 70.73 \pm 1.95$ & $53.60 \pm 16.0$ \\
$\text{CITED}_{\text{H}}$ & $66.00 \pm 1.14$ & $16.40 \pm 6.29$ \\
\midrule
\makecell[l]{$\text{CITED}_{\!\!\!\tiny\begin{array}{l}
    \text{M} \text{+T}  \text{+H}
    \end{array}}$} & $\mathbf{71.07 \pm 1.15}$ & {$\mathbf{54.87 \pm 3.50}$} \\
\bottomrule
\end{tabular}
\vspace{-1em}

\end{table}

To answer \textbf{RQ3}, we perform an ablation study to evaluate the contribution of each component in our signature scoring function. 
As shown in Table~\ref{table4}, we assess verification performance using signatures derived from individual components. Among them, the \emph{thickness} score proves most effective, particularly at both the embedding and label levels, highlighting the importance of prediction ambiguity near decision boundaries. 
Integrating all components yields the highest verification accuracy, confirming that margin and complexity scores offer complementary value and validating the overall effectiveness of our signature design.
A more detailed ablation study is provided in Appendix~\ref{appendix:supplementary-ablation}.

\vspace{-0.1in}


\section{Conclusion}
In this work, we propose {CITED}, a novel and unified framework for defending against model extraction attacks. Built upon decision boundary-aware signature nodes, {CITED} provides a theoretically grounded solution for ownership verification at both the embedding and label levels, effectively addressing the limitations of existing methods. Extensive experiments across diverse architectures and datasets demonstrate its superior verification effectiveness and efficiency, establishing {CITED} as a practical and effective approach for securing GNN models in real-world MLaaS settings.


\section{Impact Statements}
\noindent\textbf{Ethical Aspects.}
This work addresses ownership verification for Graph Neural Networks (GNNs) under model extraction attacks by providing a ownership verification mechanism that does not modify training data, alter model functionality, or introduce additional learning components, and is intended to support responsible deployment and protection of proprietary machine learning services without increasing misuse or privacy risks.

\noindent\textbf{Future Societal Consequences.}
By enabling ownership verification at both embedding and label levels, CITED provides a unified and lightweight mechanism for attributing model provenance in MLaaS environments. Its signature-based design avoids modifications to model training or inference pipelines, which lower the deployment barrier for practical verification and support the broader use of graph learning models in real world applications.

\bibliography{ref}
\bibliographystyle{icml2026}

\newpage
\appendix
\onecolumn

\section*{Related Works}
\noindent \textbf{Graph-based Model Extraction Attacks \& Defenses.}
Model Extraction Attacks (MEA)~\citep{zhao2025survey,zhao2025systematic,wang2025cega,cheng2025atom,cheng2025misleader} is a prevalent threat faced by GMLaaS. Due to the unique data structures inherent in graph machine learning~\citep{xia2021graph,stamile2021graph}, existing MEA methods cannot be directly applied to graph models. Consequently, research on graph-based MEA remains in the early stages. The first paper about graph-based MEA~\citep{wu2022model} introduced a taxonomy comprising seven distinct attack scenarios, categorized by the extent of knowledge accessible to the attacker. Specifically, the availability or absence of knowledge regarding node attributes, graph structural information, and partial training data significantly impacts the performance of the surrogate model. Another work, GNNStealing~\citep{shen2022model}, proposed a practical inductive graph stealing framework, categorizing attacks into two main types: Type I, in which attackers have knowledge of the graph structure of query data, and Type II, where such structural information is unavailable. Additionally, this work restricted the model outputs to align more closely with realistic scenarios, considering outputs such as node embeddings, soft labels, or node embeddings after dimensionality reduction via t-SNE~\citep{van2008visualizing}. 
Simultaneously, numerous methods have been proposed to defend against MEA. Among these, one early work~\citep{zhao2021watermarking} introduced the idea of embedding watermarks into the target model using random graphs and specific outputs. Ownership can then be claimed if these watermarks are identified in a suspicious model. Furthermore, fingerprinting techniques~\citep{waheed2023grove} determine model ownership by comparing the distributional proximity of suspicious embeddings with those of independent and target embeddings. In contrast to prior work, watermarking techniques require specific input-output pairs, limiting their applicability to the label level. Similarly, fingerprinting techniques rely on embedding distributions for verification, restricting their use to the embedding level. The CITED framework, however, introduces a unified approach that enables ownership verification across all output levels.

\noindent \textbf{Watermarking \& Fingerprinting for GNNs.}
Watermarking~\citep{boenisch2021systematic,quiring2018forgotten,singh2023comprehensive,zhang2020model} is a widely adopted technique to verify intellectual property (IP) ownership. In defending against MEA, task-specific watermarking approaches have been proposed for various graph-learning tasks. For the node classification task, the aforementioned work~\citep{zhao2021watermarking} embeds watermarks by incorporating random graphs and assigning specific labels as outputs. For link prediction tasks, GENIE~\citep{bachina2024genie} was proposed, embedding watermarks by modifying node features to follow specific distributions, altering subgraph structures, and producing predefined labels. Similarly, for graph classification tasks, another work~\citep{xu2023watermarking} embeds watermarks by adjusting subgraph structures and generating specific labels. More general watermarking techniques for GNNs have also addressed additional concerns. For instance, one recent work~\citep{zhang2024imperceptible} focuses explicitly on watermark imperceptibility and uniqueness. This approach demonstrates strong performance against evasion attacks, where surrogate models attempt to detect trigger graphs to evade generating desired outputs, thus invalidating ownership verification, as well as fraudulent attacks, in which adversaries forge watermarks to falsely claim model ownership. 
Additionally, fingerprinting-based defense methods have also been explored; for instance, GrOVe~\citep{waheed2023grove} leverages a theorem stating that independently trained models naturally differ in their embedding distributions. By comparing whether the suspicious model’s embedding distribution more closely aligns with the target model or an independently trained model, GrOVe determines if the suspicious model has been stolen from the target model. Compared with existing methods, our proposed CITED framework effectively preserves the signature in surrogate models even after MEA.

\section{Proof of Theoretical Analysis}
\label{appendix:proof}

\subsection{Lemma \& Propositions}
\begin{lemma}[MPGNN Perturbation Bound~\citep{liao2020pac}]
\label{lemma:pert-bound}
Let \(L>1\) be the number of layers and denote the parameter vector of an \(L\)-layer MPGNN by
\(\vw = \operatorname{vec}\!\bigl(\{\mW_1,\mW_2,\dots,\mW_L\}\bigr) \in \mathbb{R}^{m}\),  
where each \(\mW_\ell \in \mathbb{R}^{d_\ell \times d_{\ell-1}}\) is the weight matrix of layer \(\ell\).
Consider an input feature matrix \(\mX \in \mathbb{R}^{|\gV|\times d_0}\) that lies in
\(
X_{R,\vh_0} := \{\,\mX \mid \| \mX - \vh_0 \|_2 \le R \,\},
\)
with \(\vh_0 \in \mathbb{R}^{|\gV|\times d_0}\) a fixed reference and \(R>0\) a radius.
Let \(\mA\) be the adjacency matrix of a graph \(\gG = (\gV,\gE)\) and let \(d\) be its maximum node degree.
For a perturbation
\(\vu = \operatorname{vec}\!\bigl(\{\mU_1,\mU_2,\dots,\mU_L\}\bigr)\)
satisfying
\(
\eta := \max_{\ell\in[L]} \frac{\|\mU_\ell\|_2}{\|\mW_\ell\|_2} \le \frac{1}{L},
\)
the output change of the MPGNN obeys
\[
\bigl\| \vf_{\vw+\vu}(\mX,\mA) - \vf_{\vw}(\mX,\mA) \bigr\|_2
\;\le\;
\mathrm{e}\,R\,L\,\eta\,
\|\mW_1\|_2\,\|\mW_L\|_2\,C_\phi\,
\frac{(dC)^{L-1} - 1}{dC - 1}
\;=\;\Delta_{\gG}
\]
where
\(C = C_\phi\,C_\rho\,C_g\,\|\mW_2\|_2\)
and \(C_\phi, C_\rho, C_g\) are the Lipschitz constants of the activation function, the message aggregation operator, and the graph normalization operator, respectively.
\end{lemma}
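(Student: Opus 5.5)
The plan is to follow the layer-wise forward-perturbation argument of \citet{liao2020pac}, specialised to the MPGNN parameterisation implicit in the constant \(C = C_\phi C_\rho C_g \|\mW_2\|_2\). In that parameterisation, \(\mW_1\) embeds the input features, and a shared message weight (norm \(\|\mW_2\|_2\)) is applied in each of the \(L-1\) propagation steps. A readout \(\mW_L\) then acts on the final node states. Write \(\mathbf{H}_\ell\) and \(\tilde{\mathbf{H}}_\ell\) for the node-state matrices after step \(\ell\) under \(\vw\) and \(\vw+\vu\), and set
\[
\Delta_\ell := \max_{v}\bigl\|\tilde{\mathbf{H}}_\ell[v]-\mathbf{H}_\ell[v]\bigr\|_2 .
\]
I would prove two coupled recursions by induction on \(\ell\) and then unroll them.

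\textbf{Step 1 (forward norm bound).} Take the aggregation at node \(v\) to sum over at most \(d\) neighbours. Using the Lipschitz constants \(C_\phi, C_\rho, C_g\) together with \(\|\mX-\vh_0\|_2\le R\), I would show \(\|\mathbf{H}_\ell[v]\|_2 \le C_\phi\|\mW_1\|_2 R\sum_{k=0}^{\ell-1}(dC)^k\). The same bound holds for the perturbed network after each weight norm \(\|\mW_i\|_2\) is replaced by \(\|\mW_i+\mU_i\|_2 \le (1+\eta)\|\mW_i\|_2\).

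\textbf{Step 2 (perturbation recursion).} For one propagation step, the difference \(\tilde{\mathbf{H}}_\ell-\mathbf{H}_\ell\) splits into two terms by adding and subtracting the mixed term (perturbed weight applied to the unperturbed input):
\begin{align*}
\|\tilde{\mathbf{H}}_\ell[v]-\mathbf{H}_\ell[v]\|_2
\le{}& C_\phi\|\mW_2+\mU_2\|_2\,C_\rho C_g\, d\,\Delta_{\ell-1}\\
&+ C_\phi\|\mU_2\|_2\,C_\rho C_g\, d\,\max_u\|\mathbf{H}_{\ell-1}[u]\|_2 .
\end{align*}
This gives \(\Delta_\ell \le (1+\eta)\,dC\,\Delta_{\ell-1} + \eta\, dC\, B_{\ell-1}\), where \(B_{\ell-1}\) is the Step 1 bound. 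The base case comes from the embedding layer: \(\Delta_1 \le \eta\, C_\phi\|\mW_1\|_2 R\).

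\textbf{Step 3 (unrolling and readout).} Unrolling the recursion and applying the readout gives \(\|\vf_{\vw+\vu}-\vf_{\vw}\|_2\le \|\mW_L+\mU_L\|_2\Delta_{L-1}+\|\mU_L\|_2\max_v\|\mathbf{H}_{L-1}[v]\|_2\).
\begin{itemize}
\item Every multiplicative factor \((1+\eta)\) appears at most \(L\) times, and \(\eta\le 1/L\) gives \((1+\eta)^L\le (1+1/L)^L\le \mathrm{e}\).
\item At most \(L\) additive \(\eta\)-terms are generated, which accounts for the factor \(L\eta\).
\item The remaining sum \(\sum_{k=0}^{L-2}(dC)^k = \frac{(dC)^{L-1}-1}{dC-1}\) yields the geometric factor in \(\Delta_\gG\).
\end{itemize}
Collecting the prefactors \(\|\mW_1\|_2\,\|\mW_L\|_2\,C_\phi\,R\) gives exactly \(\Delta_\gG\).

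The main obstacle is the bookkeeping in Step 3. The cross terms, where both the weights and the inputs are perturbed, must be absorbed so that only a single \(\mathrm{e}\) and a single \(L\eta\) survive, rather than something like \(L^2\eta\) or \(\mathrm{e}^2\). My first attempt would be to bound every term by the worst-case product \((1+\eta)^{L}\) uniformly before summing. This collapses each of the at most \(L\) error contributions to \(\mathrm{e}\,\eta\times\)(its geometric term), and the geometric sum is then bounded separately. If this loses a constant, I would fall back on citing \citet{liao2020pac} directly for the exact constants.
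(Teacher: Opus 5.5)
Your strategy is the argument behind the cited result. The paper gives no proof of its own and defers to \citep{liao2020pac}, whose proof is exactly your pair of recursions: a max-row-norm bound on node states, a hybrid add-and-subtract bound on their perturbation, then the readout. You also rightly read off from $C$ that the message weight must be shared across steps. With untied per-layer matrices, as the statement literally suggests, enlarging some $\mW_\ell$ with $2<\ell<L$ would break any bound in $\|\mW_1\|_2,\|\mW_2\|_2,\|\mW_L\|_2$ alone.

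\textbf{The gap is in Step 1.} The hypothesis $\|\mX-\vh_0\|_2\le R$ only gives $\|\mX[v,:]\|_2\le R+\|\vh_0[v,:]\|_2$. No argument can do better: take $\mX=\vh_0\neq 0$ and let $R\to 0$. The left-hand side does not depend on $R$ and is generically nonzero, while $\Delta_{\gG}\to 0$. The statement garbles the domain $\mathcal{X}_{B,h_0}$ of \citep{liao2020pac}, where $h_0$ is the feature \emph{dimension} and $B$ bounds every row norm. What Step 1 actually needs is $\max_v\|\mX[v,:]\|_2\le R$ (implied by $\|\mX\|_2\le R$, i.e.\ $\vh_0=0$). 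It also needs $\phi(0)=\rho(0)=g(0)=0$, so that Lipschitz continuity yields norm bounds. State both hypotheses explicitly.

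\textbf{A smaller mismatch, and the bookkeeping.} In the MPGNN of \citep{liao2020pac}, the features are re-injected at every step: $\mathbf{H}_k=\phi(\mX\mW_1+\rho(\bar{\mathbf{M}}_k)\mW_2)$ with $\mathbf{H}_0=0$.
\begin{itemize}
\item That is where the geometric sum in your Step 1 comes from; an embed-once network would give $(dC)^{\ell-1}$.
\item It also means your Step 2 misses an additive $C_\phi R\|\mU_1\|_2\le\eta K$ at every step, where $K:=C_\phi R\|\mW_1\|_2$.
\end{itemize}
This is harmless. Set $a:=dC$ and $S_k:=\sum_{j=0}^{k-1}a^j$, so that $1+aS_{k-1}=S_k$. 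In either architecture you then get $\Delta_k\le(1+\eta)a\Delta_{k-1}+\eta KS_k$ with $\Delta_1\le\eta K$.

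Your Step 3 then closes with no loss of constants. Induction using $aS_{k-1}\le S_k$ gives $\Delta_k\le\eta K(1+\eta)^{k-1}kS_k$. The readout then gives
\[
\eta K\|\mW_L\|_2\,S_{L-1}\bigl[(L-1)(1+\eta)^{L-1}+1\bigr]\;\le\;\mathrm{e}\,L\,\eta K\|\mW_L\|_2\,S_{L-1}\;=\;\Delta_{\gG},
\]
using $(1+\eta)^{L-1}\le(1+1/L)^{L-1}<\mathrm{e}$.

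The fact your sketch leaves implicit is that each of the $L$ error contributions has geometric weight at most $S_{L-1}$, since $a^{k-j}S_j\le S_k$ for $a\ge 0$. So all contributions share a single geometric factor, and their count supplies the factor $L$. The fallback to the citation is unnecessary.
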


Based on Lemma~\ref{lemma:pert-bound}, we conclude that for any MPGNN whose parameter perturbation \(\vu\) satisfies \(\eta \le 1/L\), the discrepancy between the perturbed model \(\vf_{\vw+\vu}\) and the original model \(\vf_{\vw}\) is upper bounded by \(\Delta_{\gG}\). This bound depends on the radius \(R\) of the input set, the perturbation ratio \(\eta\), the spectral norms of the first and last weight matrices, the Lipschitz constants of the constituent operators, and the graph degree \(d\). When \(dC \neq 1\) the bound takes the closed form shown above, if \(dC = 1\), then \(\Delta_{\gG} = \mathrm{e}\,R\,L\,\eta\,\|\mW_1\|_2\,\|\mW_L\|_2\,C_\phi\,(L-1)\). A detailed proof is provided in the \citep{liao2020pac}.

\begin{proposition}[Embedding Wasserstein Bound]
\label{prop:emb}
Let 
\(
\ve = \vf_{\vw}(\mX,\mA) \in \mathbb{R}^{d}
\)
and
\(
\tilde{\ve} = \vf_{\vw+\vu}(\mX,\mA) \in \mathbb{R}^{d}
\)
be the node‑level embeddings produced by the original and the perturbed MPGNN, respectively.  
Interpret each embedding as a Dirac measure,
\(\delta_{\ve}\) and \(\delta_{\tilde{\ve}}\), on \(\mathbb{R}^{d}\).
Then, for any \(p \ge 1\),
\[
W_{p}\!\bigl(\delta_{\ve},\delta_{\tilde{\ve}}\bigr)
\;\le\;
\Delta_{\gG},
\]
where \(\Delta_{\gG}\) is the perturbation bound established in Lemma~\ref{lemma:pert-bound}.
\end{proposition}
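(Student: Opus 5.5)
The argument reduces the Wasserstein distance between two Dirac measures to the Euclidean distance between their atoms, and then applies Lemma~\ref{lemma:pert-bound}.

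\textbf{Step 1: the only coupling.} For Dirac measures $\delta_{\ve}$ and $\delta_{\tilde{\ve}}$ on $\mathbb{R}^d$, the set of couplings $\Pi(\delta_{\ve},\delta_{\tilde{\ve}})$ contains exactly one element, the product measure $\delta_{(\ve,\tilde{\ve})}$. To see this, let $\pi$ be any coupling. Its first marginal puts full mass on $\{\ve\}$ and its second on $\{\tilde{\ve}\}$. Hence $\pi(\{\ve\}\times\mathbb{R}^d)=1$ and $\pi(\mathbb{R}^d\times\{\tilde{\ve}\})=1$. Intersecting these two full-measure sets gives $\pi(\{(\ve,\tilde{\ve})\})=1$.

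\textbf{Step 2: evaluate $W_p$.} Substituting this coupling into the definition
\[
W_p(\mu,\nu)=\Bigl(\inf_{\pi\in\Pi(\mu,\nu)}\int \|x-y\|_2^p\,d\pi(x,y)\Bigr)^{1/p}
\]
gives
\[
W_p(\delta_{\ve},\delta_{\tilde{\ve}})=\bigl(\|\ve-\tilde{\ve}\|_2^p\bigr)^{1/p}=\|\ve-\tilde{\ve}\|_2
\]
for every $p\ge 1$. The value does not depend on $p$.

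\textbf{Step 3: apply the lemma.} We have $\ve=\vf_{\vw}(\mX,\mA)$ and $\tilde{\ve}=\vf_{\vw+\vu}(\mX,\mA)$. Lemma~\ref{lemma:pert-bound} gives
\[
\|\vf_{\vw+\vu}(\mX,\mA)-\vf_{\vw}(\mX,\mA)\|_2\le\Delta_{\gG},
\]
and combining this with Step 2 yields $W_p(\delta_{\ve},\delta_{\tilde{\ve}})\le\Delta_{\gG}$.

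\textbf{Hypotheses.} We assume the standing conditions of Lemma~\ref{lemma:pert-bound}: $\mX\in X_{R,\vh_0}$ and $\eta\le 1/L$. If $dC=1$, the degenerate value of $\Delta_{\gG}$ noted after the lemma is used instead.

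\textbf{Main obstacle.} The only delicate point is consistency of norms. Lemma~\ref{lemma:pert-bound} bounds an output matrix over all nodes in the $2$-norm, while the proposition concerns a node-level embedding in $\mathbb{R}^d$. We handle this by observing that a single row (the node-level embedding) has Euclidean norm no larger than the spectral or Frobenius norm of the full difference matrix. So the bound for the whole output transfers to each node's embedding, and the ground metric in $W_p$ is taken to be the Euclidean norm. Everything else is immediate.
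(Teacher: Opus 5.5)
Your proposal is correct and follows essentially the same route as the paper: reduce $W_p(\delta_{\ve},\delta_{\tilde{\ve}})$ to $\lVert \ve-\tilde{\ve}\rVert_2$ for Dirac measures, then invoke Lemma~\ref{lemma:pert-bound}. You also justify that the coupling is unique and that a node-level row inherits the lemma's bound on the full output, which the paper's proof omits; both additions are sound.
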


\begin{proof}
For Dirac measures, the \(p\)-Wasserstein distance collapses to the Euclidean distance between their support points,
that is
\(W_{p}(\delta_{\ve},\delta_{\tilde{\ve}})=\lVert \ve - \tilde{\ve} \rVert_{2}\)
for every \(p \ge 1\).
Lemma~\ref{lemma:pert-bound} guarantees
\(\lVert \ve - \tilde{\ve} \rVert_{2} \le \Delta_{\gG}\),
which directly implies the desired inequality.
\end{proof}


\begin{proof}[Proof for Proposition~\ref{prop:label}]
    The softmax map $\sigma:\mathbb R^{C}\to\Delta^{C-1}$ is $1$‑Lipschitz under the
$\ell_2$ norm, i.e.\ $\lVert\sigma(u)-\sigma(v)\rVert_2\le\lVert u-v\rVert_2$
for all $u,v$.
The maximum operator ${\rm conf}$ is $1$‑Lipschitz on the probability simplex
with respect to the $\ell_2$ norm as well.
Therefore
\[
\bigl\lvert {\rm conf}(p)-{\rm conf}(\tilde p)\bigr\rvert
\le
\lVert p-\tilde p\rVert_2
\le
\lVert z-\tilde z\rVert_2
\le
\mathcal B ,
\]
where the last inequality follows from Lemma~\ref{lemma:pert-bound}.
\end{proof}

\begin{proposition}[Label Confidence Bound]
\label{prop:label}
Let 
\(
\vz = \vf_{\vw}(\mX,\mA)\in\mathbb{R}^{C}
\)
and 
\(
\tilde{\vz} = \vf_{\vw+\vu}(\mX,\mA)\in\mathbb{R}^{C}
\)
be the logit vectors of a graph produced by the original and the perturbed MPGNN, respectively, where \(\vu\) satisfies the hypothesis of Lemma~\ref{lemma:pert-bound}.  
Set the soft labels to
\(
\vp=\operatorname{softmax}(\vz)
\)
and
\(
\tilde{\vp}=\operatorname{softmax}(\tilde{\vz})
\).
Define the prediction confidence as 
\(
\operatorname{conf}(\vp)=\max_{j} p_{j}.
\)
Then, for every graph input covered by Lemma~\ref{lemma:pert-bound},
\[
\bigl\lvert \operatorname{conf}(\vp)-\operatorname{conf}(\tilde{\vp}) \bigr\rvert
\;\le\;
\Delta_{\gG}.
\]
\end{proposition}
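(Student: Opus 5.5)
The plan is to chain two Lipschitz estimates and then close with Lemma~\ref{lemma:pert-bound}. The goal is
\[
\bigl\lvert \operatorname{conf}(\vp)-\operatorname{conf}(\tilde{\vp})\bigr\rvert \le \lVert \vp-\tilde{\vp}\rVert_2 \le \lVert \vz-\tilde{\vz}\rVert_2 \le \Delta_{\gG}.
\]
The last inequality is exactly Lemma~\ref{lemma:pert-bound}, applied with the logits as the MPGNN output. This is legitimate because $\vu$ satisfies $\eta\le 1/L$ and the input lies in $X_{R,\vh_0}$ by hypothesis. So the work reduces to the two preceding inequalities.

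\textbf{Step 1 (max is 1-Lipschitz).} Let $j^\ast=\arg\max_j p_j$ and $\tilde j=\arg\max_j \tilde p_j$. Then
\[
\operatorname{conf}(\vp)-\operatorname{conf}(\tilde{\vp}) \le p_{j^\ast}-\tilde p_{j^\ast} \le \lVert \vp-\tilde{\vp}\rVert_\infty \le \lVert \vp-\tilde{\vp}\rVert_2 .
\]
Swapping the roles of $\vp$ and $\tilde{\vp}$ gives the same bound for the reverse difference, which yields the absolute-value estimate.

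\textbf{Step 2 (softmax is 1-Lipschitz in $\ell_2$).} This is the step needing an actual argument. The plan is to apply the mean value theorem along the segment $\vz_t=\tilde{\vz}+t(\vz-\tilde{\vz})$, which gives
\[
\lVert \vp-\tilde{\vp}\rVert_2 \le \sup_t \lVert J(\vz_t)\rVert_2\,\lVert \vz-\tilde{\vz}\rVert_2 .
\]
Here the softmax Jacobian is $J=\operatorname{diag}(\vp)-\vp\vp^\top$. This matrix is symmetric positive semidefinite, since $\vx^\top J\vx=\sum_i p_i x_i^2-(\sum_i p_i x_i)^2$ is a variance and hence nonnegative. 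Its largest eigenvalue is therefore at most $\operatorname{tr}$-free bounded by $\max_\vx \vx^\top J\vx/\lVert\vx\rVert^2\le \max_i p_i\le 1$, because $\sum_i p_i x_i^2\le \max_i p_i\lVert\vx\rVert^2$. In fact one gets the sharper constant $\lVert J\rVert_2\le 1/2$, but the constant $1$ is enough here. Hence $\lVert \vp-\tilde{\vp}\rVert_2\le\lVert\vz-\tilde{\vz}\rVert_2$.

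Combining Steps 1 and 2 with Lemma~\ref{lemma:pert-bound} gives the claim. The main obstacle is only the spectral bound on the Jacobian. The variance representation handles it directly, so I do not expect real difficulty.
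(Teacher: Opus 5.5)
Your proposal is correct and is essentially the paper's proof: it uses the same chain $\lvert \operatorname{conf}(\vp)-\operatorname{conf}(\tilde{\vp})\rvert \le \lVert \vp-\tilde{\vp}\rVert_2 \le \lVert \vz-\tilde{\vz}\rVert_2 \le \Delta_{\gG}$, with 1-Lipschitzness of the max and of softmax, closed by Lemma~\ref{lemma:pert-bound}. The only difference is that you justify the two Lipschitz facts (via $\lVert\cdot\rVert_\infty\le\lVert\cdot\rVert_2$, and via the bound $\lVert \operatorname{diag}(\vp)-\vp\vp^\top\rVert_2\le\max_i p_i$ combined with the mean value inequality), which the paper only asserts.
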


\begin{proof}
The soft‑max map \(\sigma:\mathbb{R}^{C}\!\to\!\Delta^{C-1}\) is \(1\)-Lipschitz under the Euclidean norm,
so
\(\lVert\vp-\tilde{\vp}\rVert_{2}\le\lVert\vz-\tilde{\vz}\rVert_{2}\).
The maximum operator \(\operatorname{conf}\) is also \(1\)-Lipschitz on the simplex, hence
\[
\bigl\lvert \operatorname{conf}(\vp)-\operatorname{conf}(\tilde{\vp}) \bigr\rvert
\le
\lVert\vp-\tilde{\vp}\rVert_{2}
\le
\lVert\vz-\tilde{\vz}\rVert_{2}.
\]
Lemma~\ref{lemma:pert-bound} bounds the right‑most term by \(\Delta_{\gG}\), completing the proof.
\end{proof}

\subsection{Proof for Theorem 1}
\begin{proof}[Proof of Theorem~\ref{thm:wass_prob}]
Recall that \(D_{p}=W_{p}(\delta_{\ve_a},\delta_{\ve_b})=\lVert\ve_a-\ve_b\rVert_{2}\) for any \(p\ge1\).
By the mean‑value (Fréchet) expansion of \(\vf_{\vw+\vu}\) around \(\vw\),
\[
\ve_b-\ve_a
=
\sum_{k=1}^{L}
\Bigl(\tfrac{\partial\vf}{\partial\mW_k}\Bigr)\mU_k
\;+\;
\mathbf{R},
\]
where  
\(\mathbf{R}=O\!\bigl(\lVert\vu\rVert_{2}^{2}\bigr)\) collects second‑order and higher terms.

\paragraph{Bounding the remainder.}
Lemma~\ref{lemma:pert-bound} imposes  
\(\eta=\max_{k}\lVert\mU_k\rVert_{2}/\lVert\mW_k\rVert_{2}\le1/L\),  
so \(\lVert\vu\rVert_{2}=O(\eta)\).  
Consequently \(\lVert\mathbf{R}\rVert_{2}=O(\eta^{2})\).
Because \(\eta\le 1/L\) and \(L\ge2\),
\(\eta^{2}\) is at least one order smaller than the linear term
and can be absorbed into the constant \(\Delta_{\gG}\); thus we focus on the linear part

\[
\mathbf{S}
=
\sum_{k=1}^{L}
\Bigl(\tfrac{\partial\vf}{\partial\mW_k}\Bigr)\mU_k .
\]

\paragraph{Sub‑Gaussian tail for the linear term.}
Each summand
\(
\bigl(\tfrac{\partial\vf}{\partial\mW_k}\bigr)\mU_k
\)
is a mean‑zero random vector that is
independent across \(k\) and has norm bounded by  
\(
d\,\eta\;
\bigl(\prod_{i<k}\lVert\mW_i\rVert_{2}\bigr)\rho_{k}
\),
matching the components used in \(\sigma^{2}\).
Vector Bernstein (or equivalently the Hanson–Wright inequality applied coordinate‑wise)
therefore gives, for every \(t\ge0\),

\[
\Pr\!\bigl(\lVert\mathbf{S}\rVert_{2}\ge t\bigr)
\;\le\;
\exp\!\bigl(-t^{2}/(2\sigma^{2})\bigr).
\]

\paragraph{Tail bound for \(D_{p}\).}
Since \(\lVert\mathbf{R}\rVert_{2}\le\Delta_{\gG}-\lambda\) for all \(0<\lambda\le\Delta_{\gG}\) whenever \(\eta\le1/L\),
the event
\(\lVert\ve_a-\ve_b\rVert_{2}<\lambda\)
is implied by
\(\lVert\mathbf{S}\rVert_{2}<\Delta_{\gG}-\lambda\).
Hence, for \(0<\lambda\le\Delta_{\gG}\),
\[
\Pr(D_{p}<\lambda)
=
\Pr\!\bigl(\lVert\ve_a-\ve_b\rVert_{2}<\lambda\bigr)
\;\ge\;
\Pr\!\bigl(\lVert\mathbf{S}\rVert_{2}<\Delta_{\gG}-\lambda\bigr)
\;\ge\;
1-\exp\!\Bigl(-\tfrac{(\Delta_{\gG}-\lambda)^{2}}{2\sigma^{2}}\Bigr).
\]

\paragraph{Deterministic regime.}
Proposition~\ref{prop:emb} ensures \(D_{p}\le\Delta_{\gG}\) almost surely,
so \(\Pr(D_{p}<\lambda)=1\) for every \(\lambda\ge\Delta_{\gG}\).

Combining the two regimes completes the proof.
\end{proof}

\subsection{Proof for Theorem 2}

\begin{proof}[Proof of Theorem~\ref{thm:agree}]
Let \(\tilde{\vz}-\vz=\boldsymbol{\delta}\).
By the Fréchet expansion of \(\vf_{\vw+\vu}\) around \(\vw\) we have
\[
\boldsymbol{\delta}
=
\sum_{k=1}^{L}\Bigl(\tfrac{\partial\vf}{\partial\mW_k}\Bigr)\mU_k
\;+\;
\mathbf{R},
\qquad
\text{with }\;
\lVert\mathbf{R}\rVert_{2}=O(\eta^{2}).
\]
Because \(\eta\le 1/L\) (Lemma \ref{lemma:pert-bound}), the remainder \(\mathbf{R}\) is of smaller order than
\(\gamma\) and can be absorbed into the constant implicit in the tail bound below, so we focus on the linear term
\[
\mathbf{S}
=
\sum_{k=1}^{L}\Bigl(\tfrac{\partial\vf}{\partial\mW_k}\Bigr)\mU_k .
\]
Each coordinate \(S_{j}\) is a mean‑zero sub‑Gaussian random variable with proxy variance
at most \(\sigma^{2}\) defined in the Notation paragraph.

\paragraph{Misclassification.}
The predicted class changes from \(c^{\ast}\) to some \(\tilde c\neq c^{\ast}\) only if  
\[
S_{c^{\ast}}\le-\tfrac{\gamma}{2}
\quad\text{or}\quad
S_{j}\ge\tfrac{\gamma}{2}\ \text{for some }j\neq c^{\ast}.
\]

\paragraph{Tail probabilities.}
For a sub‑Gaussian variable \(X\) with proxy variance \(\sigma^{2}\),  
\(\Pr(X\ge t)\le\exp(-t^{2}/(2\sigma^{2}))\) and \(\Pr(X\le -t)\) obeys the same bound.
Setting \(t=\gamma/2\) yields
\[
\Pr\!\bigl(S_{c^{\ast}}\le-\tfrac{\gamma}{2}\bigr)\le\exp\!\Bigl(-\tfrac{\gamma^{2}}{8\sigma^{2}}\Bigr),
\qquad
\Pr\!\bigl(S_{j}\ge\tfrac{\gamma}{2}\bigr)\le\exp\!\Bigl(-\tfrac{\gamma^{2}}{8\sigma^{2}}\Bigr).
\]

\paragraph{Union bound.}
Applying the union bound over the \((C-1)\) classes \(j\neq c^{\ast}\) and the single negative‑tail event for \(c^{\ast}\),
\[
\Pr(\tilde c\neq c^{\ast})
\le
(C-1)\exp\!\Bigl(-\tfrac{\gamma^{2}}{8\sigma^{2}}\Bigr)
+
\exp\!\Bigl(-\tfrac{\gamma^{2}}{8\sigma^{2}}\Bigr)
=
C\,\exp\!\Bigl(-\tfrac{\gamma^{2}}{8\sigma^{2}}\Bigr).
\]
A slightly sharper bound, keeping only the \((C-1)\) positive‑tail events, is
\(\Pr(\tilde c\neq c^{\ast})\le(C-1)\exp(-\gamma^{2}/(8\sigma^{2}))\);
subtracting from one gives the stated inequality.
\end{proof}

\section{Supplementary Technical Details}
\subsection{Assumptions for Model Extraction Attacks \& Defense}
\label{appendix:assumptions}

Our signature generation method is based on a widely accepted assumption that the decision boundary of an independently trained model is inherently unique~\citep{cao2021ipguard,waheed2023grove}. These works provide both theoretical justification and empirical observations, demonstrating that this assumption holds broadly. Consequently, the design objective of our signature is to extract nodes that most effectively represent the boundary information as our signature.

\begin{assumption}[Decision Boundary Uniqueness]
\label{asm:boundary-unique}
    Independently trained models, even when trained on the same dataset and architecture, possess unique output distributions and decision boundaries. Therefore, it is feasible to extract a signature that captures these boundaries and can serve as a unique identifier for model ownership verification.
\end{assumption}

Additionally, we follow the assumption that attackers typically attempt to query information located at the decision boundary, which is widely adopted in other works~\citep{he2018decision,brendel2017decision,li2020qeba,shen2023decision}. In Section~\ref{sec:exp}, we further validate that if an attacker aims to more effectively replicate the functionality of the target model, they must query information near the decision boundary to achieve improved performance.

\begin{assumption}[Boundary Query Preference]
\label{asm:boundary-query}
    Adversarial model extraction attacks tend to query inputs near the decision boundary to maximize information gain, while benign users predominantly operate in high-confidence regions. Consequently, restricting access to boundary information can significantly degrade the effectiveness of surrogate models trained by attackers.
\end{assumption}

To thoroughly analyze the effectiveness of our model, we require a more detailed definition of the attacker's capabilities. Specifically, an attacker cannot successfully replicate the behavior of a target model without accessing a sufficient amount of query-response data~\citep{tramer2016stealing,papernot2017practical}.

\begin{assumption}[Minimum Effectiveness Query]
\label{asm:least-query}
    To train an effective surrogate model, an attacker needs sufficient query-response pairs from the target model. The surrogate's performance is limited by a minimum query budget. Below this threshold, the surrogate cannot reliably approximate the target model.
\end{assumption}

\subsection{ARUC Computation for Ownership Verification}
\label{appendix:ARUC}

To quantify ownership verification performance, we employ the \emph{Area under the Robustness-Uniqueness Curve} (ARUC)~\citep{cao2021ipguard}, which jointly considers the distinguishability of surrogate and independent models with respect to the signature node set \( \gS_{\text{sig}} \subseteq \gV \). The evaluation proceeds in three steps: (1) computing a level-specific matching score \( M(f_Q) \) between a suspicious model \( f_Q \) and the target model \( f_T \), (2) constructing robustness and uniqueness curves over a range of decision thresholds \( \tau \in [0, 1] \), and (3) aggregating the two curves into a single ARUC value. Formally, let \( F^+ \) and \( F^- \) denote the sets of surrogate (positive) and independent (negative) models, respectively. The robustness curve \( R(\tau) \) and uniqueness curve \( U(\tau) \) are defined as:
\[
R(\tau) = \frac{1}{|F^+|} \sum_{f_Q \in F^+} \mathbf{1}[M(f_Q) \bowtie \tau], \quad
U(\tau) = \frac{1}{|F^-|} \sum_{f_Q \in F^-} \mathbf{1}[M(f_Q) \bowtie' \tau],
\]

Here, \( \bowtie \) is a thresholding operator determined by the definition of the matching score at each output level, and \( \bowtie' \) denotes its logical complement. Specifically, at the \emph{embedding level}, the matching score is defined as the Wasserstein distance between the signature embeddings of the suspicious and target models. Since surrogate models are expected to lie closer in embedding space, the operator is set to \( < \). In contrast, at the \emph{label level}, the matching score measures the prediction accuracy on ownership indicators, where higher values reflect stronger alignment; hence, \( \bowtie \) is set to \( > \). Here, \( \bowtie \) is a thresholding operator determined by the definition of the matching score at each output level, and \( \bowtie' \) denotes its logical complement. Specifically, at the \emph{embedding level}, the matching score is defined as the Wasserstein distance between the signature embeddings of the suspicious and target models. Since surrogate models are expected to lie closer in embedding space, the operator is set to \( < \). In contrast, at the \emph{label level}, the matching score measures the prediction accuracy on ownership indicators, where higher values reflect stronger alignment; hence, \( \bowtie \) is set to \( > \). Due to the nature of model extraction attacks, where surrogate models often exhibit moderate but not perfect alignment with the target, we normalize all matching scores across models before computing ARUC to ensure fair and comparable evaluation. To summarize the trade-off between robustness and uniqueness across thresholds, we compute the ARUC as:
\[
\text{ARUC} = \frac{1}{r} \sum_{\tau'=1}^{r} \min\left\{R\left(\frac{\tau'}{r}\right), U\left(\frac{\tau'}{r}\right)\right\},
\]
where \( r \) is the number of discrete threshold points sampled uniformly from the interval \([0, 1]\). A higher ARUC score indicates that the signature set enables strong alignment with surrogate models while effectively rejecting independent ones, thus ensuring reliable ownership verification. To summarize the trade-off between robustness and uniqueness across thresholds, we compute the ARUC as:
\[
\text{ARUC} = \frac{1}{r} \sum_{\tau'=1}^{r} \min\left\{R\left(\frac{\tau'}{r}\right), U\left(\frac{\tau'}{r}\right)\right\},
\]
where \( r \) is the number of discrete threshold points sampled uniformly from the interval \([0, 1]\). A higher ARUC score indicates that the signature set enables strong alignment with surrogate models while effectively rejecting independent ones, thus ensuring reliable ownership verification.

\subsection{AUC Computation for Ownership Verification}
\label{appendix:ASR}

In our experiments, we adopt the \textit{Area Under the Receiver Operating Characteristic Curve} (AUC) to evaluate the effectiveness of ownership verification. AUC quantifies the probability that a randomly selected surrogate model (i.e., a model suspected of being an unauthorized copy) yields a higher verification score than a randomly selected independent model. This provides an intuitive measure of how well the verification scores can distinguish surrogate models from independently trained ones. Formally, let \( \{ \mathbf{z}_i^+ \}_{i=1}^{n} \) denote the outputs of suspicious surrogate models (positive samples), and \( \{ \mathbf{z}_j^- \}_{j=1}^{m} \) denote the outputs of independently trained models (negative samples). For each output \( \mathbf{z} \), we define a score function \( s(\mathbf{z}) = \text{score}(\mathbf{z}, \mathbf{z}_\text{target}) \), which measures the agreement between \( \mathbf{z} \) and the protected target model's output. The AUC is computed as:

\begin{equation}
\text{AUC} = \frac{1}{nm} \sum_{i=1}^{n} \sum_{j=1}^{m} \mathbb{1}\left[ s(\mathbf{z}_i^+) > s(\mathbf{z}_j^-) \right]
\end{equation}

This corresponds to the Mann–Whitney U statistic, a standard formulation of AUC that captures the degree of separation between the score distributions of surrogate and independent models without requiring a fixed threshold. To account for tie cases where \( s(\mathbf{z}_i^+) = s(\mathbf{z}_j^-) \), we apply averaging as follows:

\begin{equation}
\text{AUC} = \frac{1}{nm} \sum_{i=1}^{n} \sum_{j=1}^{m} 
\left[
    \mathbb{1}(s(\mathbf{z}_i^+) > s(\mathbf{z}_j^-)) + \frac{1}{2} \cdot \mathbb{1}(s(\mathbf{z}_i^+) = s(\mathbf{z}_j^-))
\right]
\end{equation}

The definition of the score function \( s(\cdot) \) depends on the output level. At the \textit{embedding level}, \( \mathbf{z} \) represents the node embeddings of signature nodes, and the score \( s(\mathbf{z}) \) is calculated as the negative Wasserstein distance to the target model’s embedding distribution. A higher score thus indicates stronger alignment with the target model. At the \textit{label level}, \( \mathbf{z} \) denotes the predicted class labels over the signature nodes, and the score \( s(\mathbf{z}) \) is defined as the classification accuracy with respect to the target model’s predictions. In this case, a higher score reflects greater consistency in decision-making. By computing AUC independently at both the embedding and label levels, we obtain a unified yet adaptable metric to quantify how effectively surrogate models can be distinguished from non-infringing models across different output representations.

\subsection{Time Complexity of Signature Generation}

The overall time complexity of signature generation is:

\[
O(n\cdot B\cdot d + n\cdot k),
\]

where $n$ is the number of nodes, $B$ is the number of boundary nodes, $d$ is the embedding dimension, and $k$ is the average node degree. The first term comes from computing the margin and thickness scores relative to boundary nodes, while the second term reflects the cost of evaluating local label heterogeneity. Given that $B \ll n$ and $k$ is small for sparse graphs, the procedure scales efficiently with graph size.

\section{Supplementary Experimental Settings}
\label{appendix:exp}

\subsection{Experimental Data Statistics}
\label{appendix:data-stats}

Table~\ref{table:dataset-stats} summarizes the statistics of the real-world graph datasets used in our experiments. These datasets are widely adopted in graph neural network (GNN) research, particularly for evaluating performance on node classification tasks. They cover a diverse range of domains, including citation networks (Cora, CiteSeer, PubMed), co-purchase graphs (Amazon-Photo, Amazon-Computers), and co-authorship networks (Coauthor-CS, Coauthor-Physics). All datasets represent realistic graph structures and have been extensively benchmarked in the literature.

\begin{table}[htbp]
\centering
\caption{Statistics of the adopted real-world graph datasets.}
\label{table:dataset-stats}
\begin{tabular}{lcccc}
\toprule
& \#Nodes & \#Edges & \#Attributes & \#Classes \\
\midrule
Cora            & 2,708   & 5,429   & 1,433  & 7 \\
CiteSeer        & 3,327   & 4,723   & 3,703  & 6 \\
PubMed          & 19,717  & 88,648  & 500    & 3 \\
Amazon-Photo    & 13,752     & 491,722     & 767  & 10 \\
Amazon-Computers& 7,650     & 238,162     & 745  & 8 \\
Coauthor-CS     & 18,333     & 163,788     & 6,805  & 15 \\
Coauthor-Physics& 34,493     & 495,924     & 8,415  & 5 \\
\bottomrule
\end{tabular}
\end{table}

We carefully partition each dataset following the widely accepted protocol proposed in~\citep{shchur2018pitfalls}, regenerating the splits to align with the model extraction attack (MEA) setting. Specifically, we sample 100 nodes per class for training. For validation and testing, we adopt the standard split sizes of 500 and 1000 nodes, respectively, as defined for Cora, CiteSeer, and PubMed. For the Amazon and Coauthor datasets, which do not come with predefined splits, we apply the same protocol using equivalent validation and test set sizes.

\subsection{Experimental Setup for Ownership Verification}
\label{appendix:exp-process}

We simulate a practical scenario involving both model extraction attacks (MEAs) and corresponding defense mechanisms. Specifically, we first apply various defense methods to protect the target model \( f_D \). To emulate a realistic adversary, we construct 15 surrogate models \( f_S \) trained using subgraph-query outputs from \( f_D \), following typical attacker behavior under the assumed threat model.

Each surrogate model is built from one of five diverse backbone architectures and initialized with different random seeds. In parallel, we construct 15 independent models \( f_I \), using the same five architectures and initialization strategy, but trained without querying \( f_D \), instead following standard supervised learning procedures. This design enables a direct comparison between surrogate and independent models, allowing us to assess the extent to which different defenses preserve ownership signals in extracted models. All models are trained independently in accordance with our problem formulation and evaluation protocol.

Furthermore, for the attacker, we follow widely adopted settings to define the training protocol used during the model extraction attack (MEA). Specifically, at the embedding level, once the attacker obtains the embeddings, the surrogate model $f_S$ is trained by minimizing the MSE loss between its output and the query responses $O_{\text{query}}$. At the label level, under the commonly used setting where the attacker has access to the logits, we adopt the Knowledge Distillation approach~\citep{hinton2015distilling} to train the surrogate model $f_S$.

\section{Supplementary Experiment Results}

\subsection{Supplementary Experiments for Utility of CITED Framework}
\label{appendix:supplementary-utility}

\noindent\textbf{Utility on Downstream Task.}
In the main paper (Table 1), we evaluate task utility using classification accuracy as the primary metric. To further support these findings, we conduct a comprehensive set of experiments across five backbone models (GCN, GAT, GraphSAGE, GCNII, and FAGCN), five defense methods (GrOVe, RandomWM, BackdoorWM, SurviveWM, and \textsc{CITED}), and seven benchmark datasets (Cora, CiteSeer, PubMed, Amazon-Photo, Amazon-Computers, Coauthor-CS, and Coauthor-Physics). To provide a more complete view of model utility, we additionally report results using macro-F1 score and \emph{Area Under the Receiver Operating Characteristic Curve} (AUROC) as alternative evaluation metrics. As shown in Table~\ref{supplementary-tab:f1}, \textsc{CITED} consistently achieves the best or highly competitive performance in terms of F1 score across most settings, closely matching the trends observed with accuracy. We further present AUROC results in Table~\ref{supplementary-tab:auc-roc}, where \textsc{CITED} again demonstrates strong and stable performance across the majority of datasets and architectures. These supplementary evaluations confirm that \textsc{CITED} preserves task utility effectively under a wide range of conditions, reinforcing its reliability and practicality as an ownership verification method.

\noindent\textbf{Utility under Increasing Ownership Verification Indicators.}
To complement the results originally presented in Section 5.2, which focused on the PubMed dataset, we extend the same experimental setting to six additional datasets: Cora, CiteSeer, Amazon-Photo, Amazon-Computers, Coauthor-CS, and Coauthor-Physics. As illustrated in Figure~\ref{supplementary-fig:wm_degrade}, we observe consistent trends across all these datasets. Specifically, watermarking-based defenses suffer from noticeable performance degradation as the number of injected nodes increases, which shows that adding task-irrelevant information can harm the model’s performance. In contrast, \textsc{CITED} consistently maintains stable, or even slightly improved, task performance as more signature nodes are embedded. These supplementary results further confirm that watermarking approaches often compromise model utility, while \textsc{CITED}, by embedding signature nodes, successfully preserves and occasionally enhances downstream performance. This reinforces the practical utility of \textsc{CITED} across a wide range of datasets.

\begin{figure}[h]
  \centering
  \begin{minipage}{0.48\textwidth}
    \centering
    \includegraphics[width=0.95\linewidth]{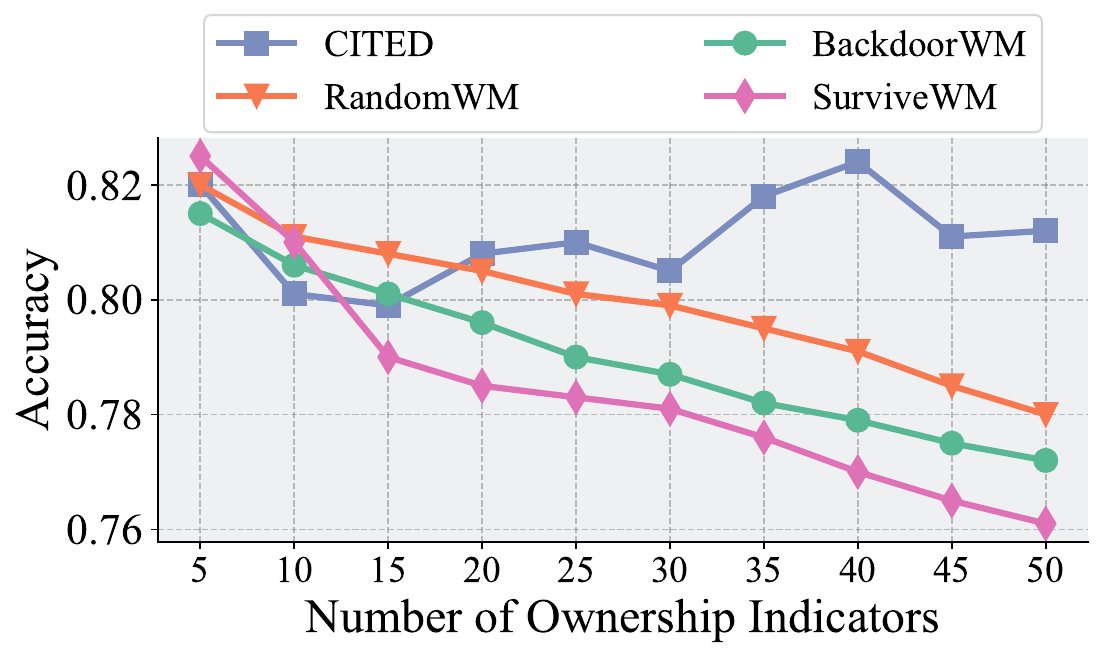}
    \vspace{-2.0mm}
    \caption*{Cora}
  \end{minipage}
  \hfill
  \begin{minipage}{0.48\textwidth}
    \centering
    \includegraphics[width=0.95\linewidth]{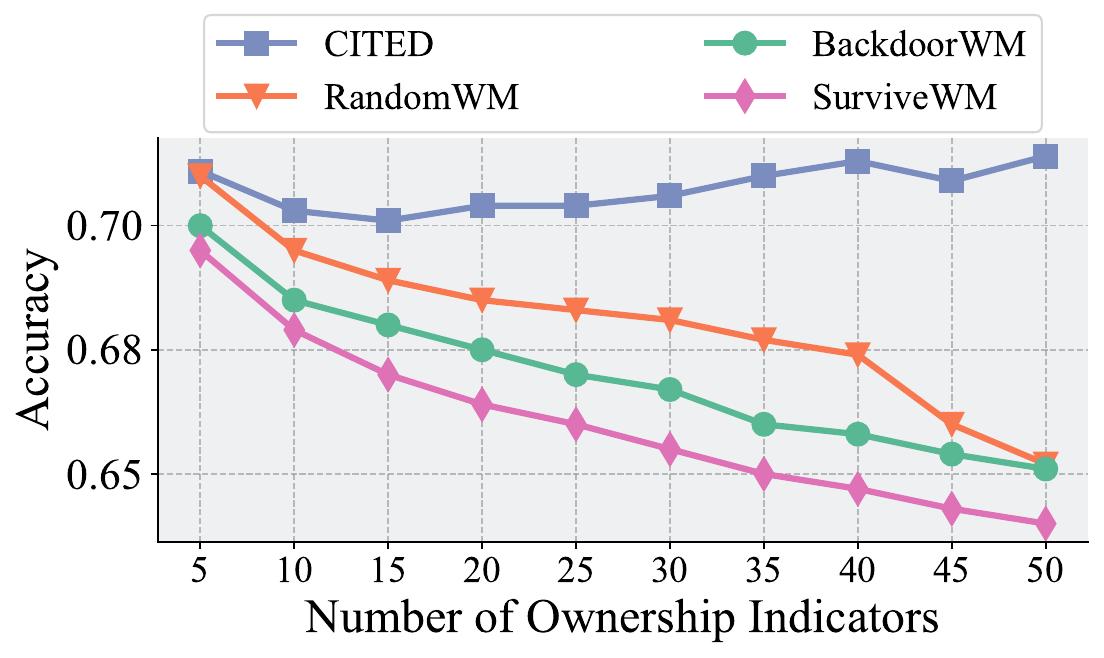}
    \vspace{-2.0mm}
    \caption*{CiteSeer}
  \end{minipage}

    \begin{minipage}{0.48\textwidth}
    \centering
    \includegraphics[width=0.95\linewidth]{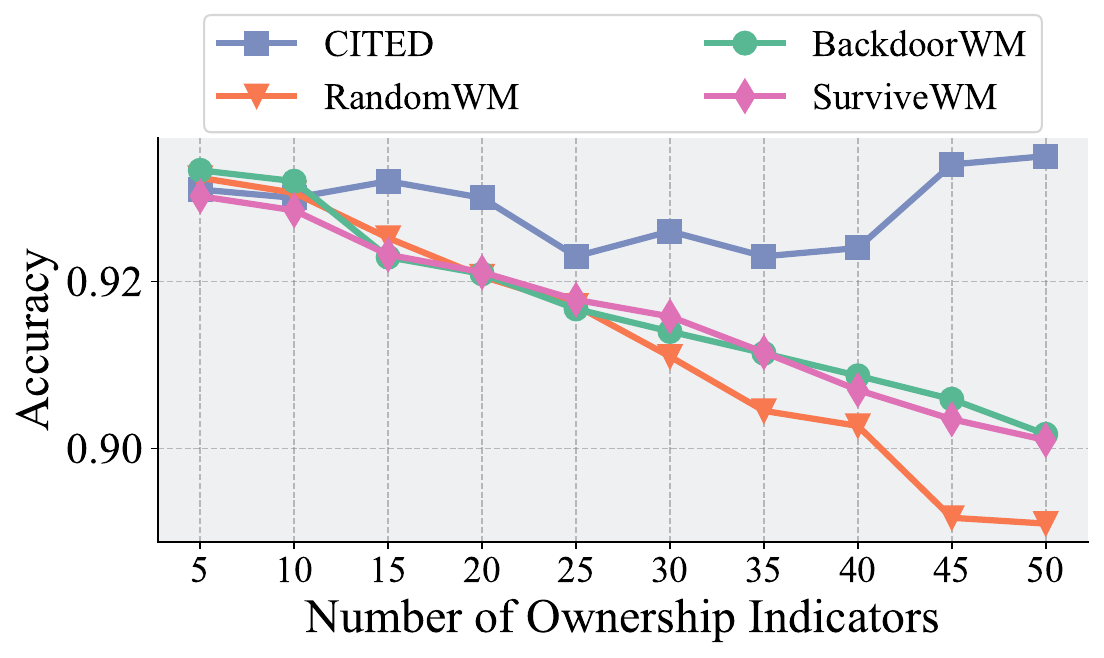}
    \vspace{-2.0mm}
    \caption*{Amazon-Photo}
    \end{minipage}
    \hfill
    \begin{minipage}{0.48\textwidth}
    \centering
    \includegraphics[width=0.95\linewidth]{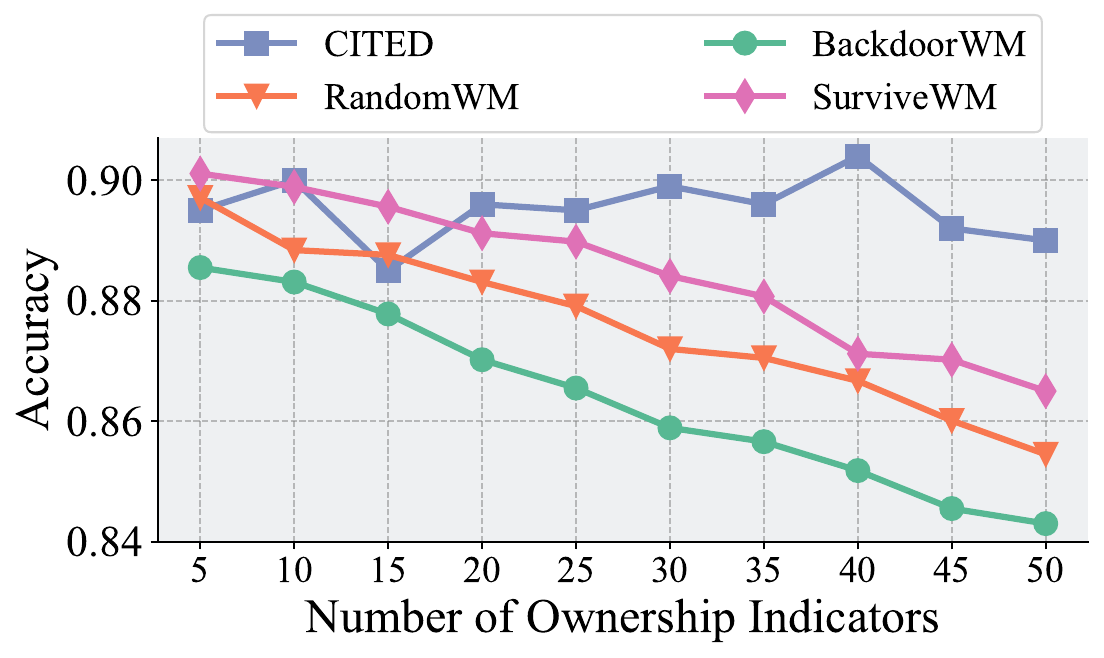}
    \vspace{-2.0mm}
    \caption*{Amazon-Computers}
    \end{minipage}

    \begin{minipage}{0.48\textwidth}
    \centering
    \includegraphics[width=0.95\linewidth]{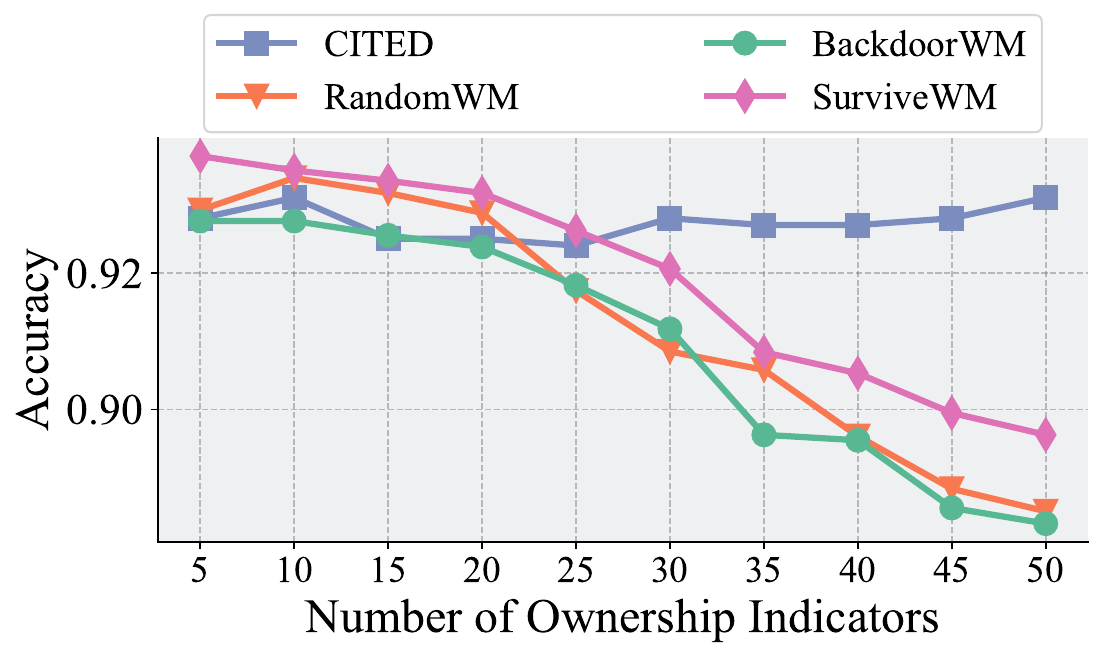}
    \vspace{-2.0mm}
    \caption*{Coauthor-CS}
    \end{minipage}
    \hfill
    \begin{minipage}{0.48\textwidth}
    \centering
    \includegraphics[width=0.95\linewidth]{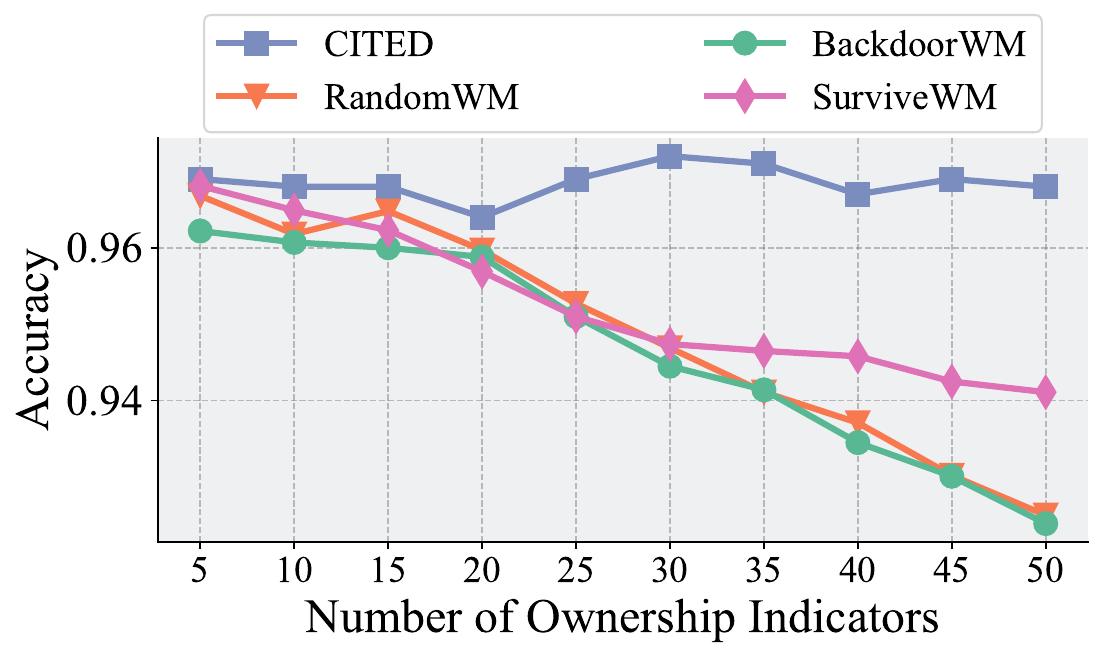}
    \vspace{-2.0mm}
    \caption*{Coauthor-Physics}
    \end{minipage}
\vspace{4.0mm}
\caption{Performance of different defense models $f_D$ on the downstream task as the number of embedded ownership verification indicators increases.}
\label{supplementary-fig:wm_degrade}

\end{figure}

\subsection{Supplementary Experiments for Effectiveness and Efficiency of CITED Framework}
\label{appendix:supplementary-effective-efficiency}
\noindent\textbf{Robustness and Uniqueness Plot.} To further support the findings presented in Section 5.3, which focused on PubMed dataset, we extend the ARUC evaluation to six additional benchmark datasets: Cora, CiteSeer, Amazon-Photo, Amazon-Computers, Coauthor-CS, and Coauthor-Physics. The resulting evaluation curves are shown in Figure~\ref{supplementary-fig:aruc_plus_all}. Across all datasets, \textsc{CITED} consistently demonstrates strong performance in terms of both robustness against model extraction and uniqueness. The ARUC curves clearly indicate that the ownership indicators embedded by \textsc{CITED} are reliably retained in surrogate models while remaining distinguishable from those of unrelated models. These supplementary results provide further evidence for the reliability and effectiveness of the \textsc{CITED} framework in supporting ownership verification across diverse graph datasets.

\begin{figure}[h]
  \centering

  \begin{minipage}{0.3\textwidth}
    \centering
    \includegraphics[width=0.95\linewidth]{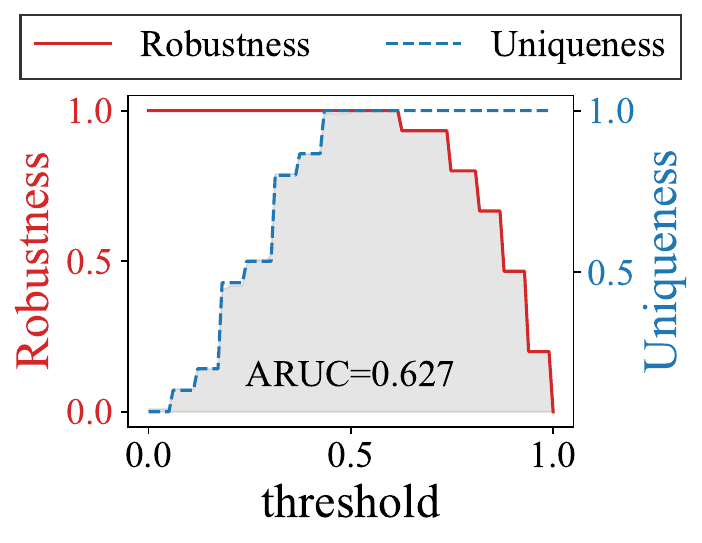}
    \vspace{-2mm}
    \caption*{Cora}
  \end{minipage}
  \hfill
  \begin{minipage}{0.3\textwidth}
    \centering
    \includegraphics[width=0.95\linewidth]{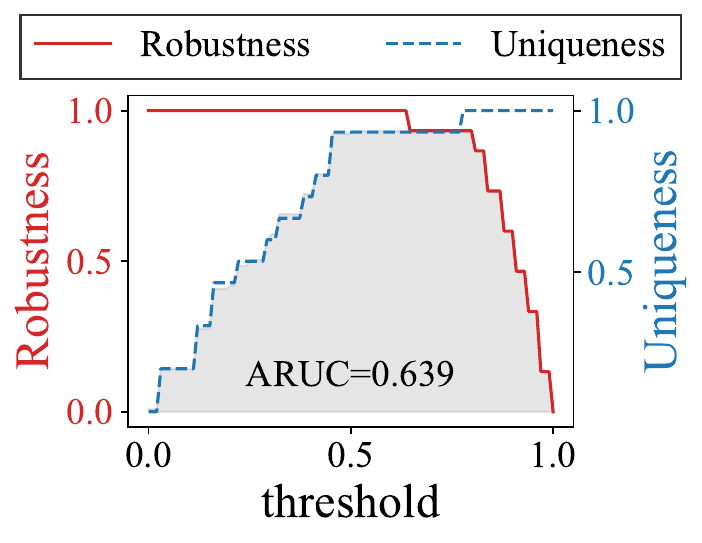}
    \vspace{-2mm}
    \caption*{CiteSeer}
  \end{minipage}
  \hfill
  \begin{minipage}{0.3\textwidth}
    \centering
    \includegraphics[width=0.95\linewidth]{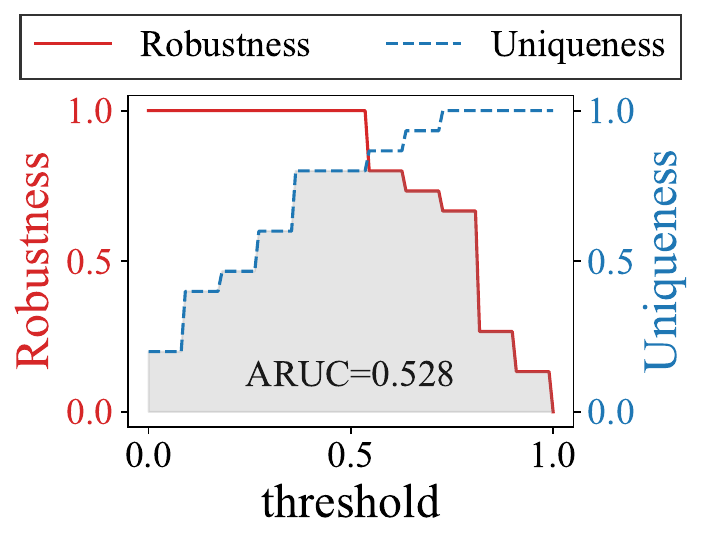}
    \vspace{-2mm}
    \caption*{Amazon-Photo}
  \end{minipage}

  \vspace{1.5em} 

  \begin{minipage}{0.3\textwidth}
    \centering
    \includegraphics[width=0.95\linewidth]{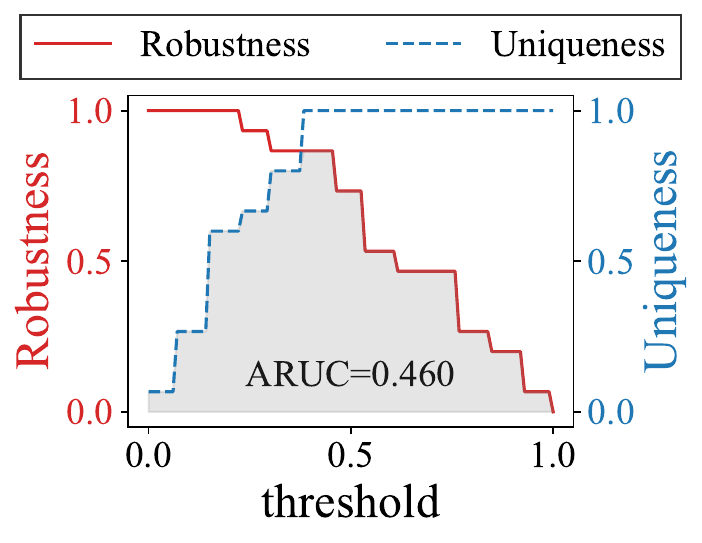}
    \vspace{-2mm}
    \caption*{Amazon-Computers}
  \end{minipage}
  \hfill
  \begin{minipage}{0.3\textwidth}
    \centering
    \includegraphics[width=0.95\linewidth]{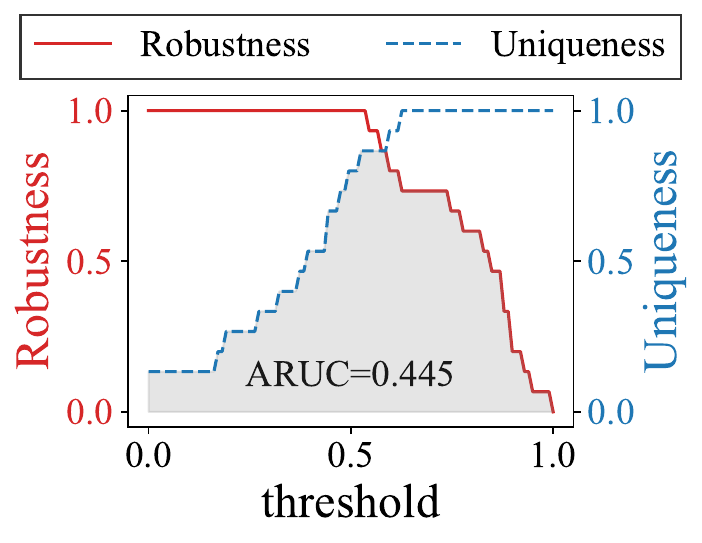}
    \vspace{-2mm}
    \caption*{CoAuthor-CS}
  \end{minipage}
  \hfill
  \begin{minipage}{0.3\textwidth}
    \centering
    \includegraphics[width=0.95\linewidth]{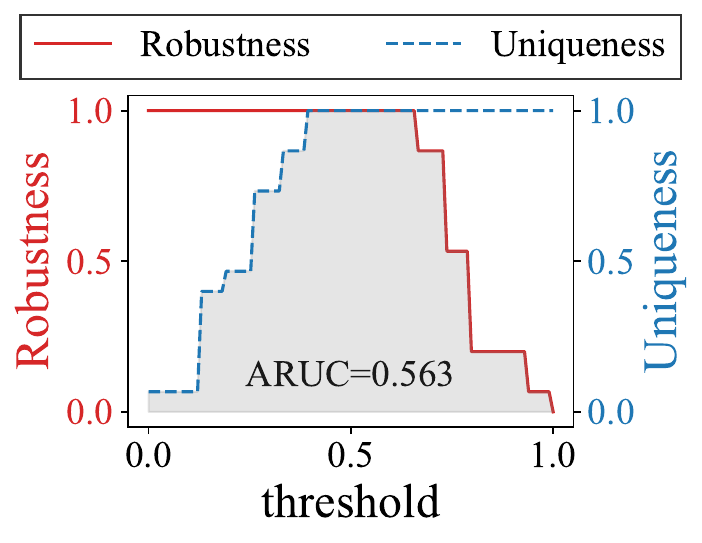}
    \vspace{-2mm}
    \caption*{CoAuthor-Physics}
  \end{minipage}

  \caption{ARUC performance of our proposed signature for ownership verification.}
  \label{supplementary-fig:aruc_plus_all}
\end{figure}

\noindent\textbf{Robustness and Uniqueness Performance.} To further validate the effectiveness of our approach under varying architectural conditions, we replicate the ARUC evaluation using GCNII, a novel and more expressive backbone model. As shown in Table~\ref{supplementary-tab:ARUC}, \textsc{CITED} consistently achieves the highest ARUC scores at both the embedding and label levels across all datasets, confirming its ability to embed robust and distinctive ownership indicators that persist even under state-of-the-art GNN architectures. In contrast, watermark-based defenses continue to exhibit poor performance, particularly at the label level, highlighting their limited capacity to embed ownership information that remains effective. These findings further reinforce the generalizability and reliability of the \textsc{CITED} framework across a range of modern GNN backbones.

\begin{table}[t]
\tiny
\setlength{\tabcolsep}{1.2pt}
\centering
\caption{Ownership verification performance of \textsc{CITED} compared to baselines at different output levels. Larger values indicate better effectiveness.}
\label{supplementary-tab:ARUC}
\begin{tabular}{l|l|ccccccc}

\toprule
\textbf{Metric} & \textbf{Method} & \textbf{Cora} & \textbf{CiteSeer} & \textbf{PubMed} & \textbf{Photo} & \textbf{Computers} & \textbf{CS} & \textbf{Physics} \\
\midrule
\multirow{2}{*}{$\text{ARUC}^{\text{emb}}$} 
& GrOVe & $41.51 \pm 1.64$ & $48.96 \pm 2.99$ & $29.22 \pm 1.41$ & $52.18 \pm 5.04$ & $49.16 \pm 3.55$ & $52.38 \pm 0.94$ & $45.00 \pm 0.33$ \\
& \textbf{CITED} & \boldmath{$63.02 \pm 0.91$} & \boldmath{$58.58 \pm 1.44$} & \boldmath{$60.91 \pm 0.31$} & \boldmath{$63.11 \pm 2.09$} & \boldmath{$64.62 \pm 2.61$} & \boldmath{$59.62 \pm 1.66$} & \boldmath{$60.71 \pm 1.78$} \\
\midrule
\multirow{4}{*}{$\text{ARUC}^{\text{label}}$} 
& RandomWM & $13.80 \pm 0.75$ & $30.47 \pm 8.01$ & $29.13 \pm 12.8$ & $22.00 \pm 15.6$ & $19.07 \pm 1.04$ & $39.78 \pm 9.46$ & $28.60 \pm 0.00$ \\
& BackdoorWM & $25.91 \pm 0.69$ & $24.16 \pm 1.85$ & $20.82 \pm 2.51$ & $22.98 \pm 3.40$ & $12.20 \pm 10.3$ & $18.40 \pm 10.5$ & $21.02 \pm 3.17$ \\
& SurviveWM & $17.09 \pm 7.67$ & $23.02 \pm 6.95$ & $11.29 \pm 0.53$ & $18.38 \pm 7.23$ & $30.76 \pm 12.6$ & $35.00 \pm 15.5$ & $20.96 \pm 5.55$ \\
& \textbf{CITED} & \boldmath{$46.22 \pm 5.15$} & \boldmath{$59.76 \pm 2.48$} & \boldmath{$52.69 \pm 3.54$} & \boldmath{$31.16 \pm 4.55$} & \boldmath{$33.60 \pm 8.70$} & \boldmath{$35.60 \pm 5.71$} & \boldmath{$40.51 \pm 5.34$} \\
\bottomrule

\end{tabular}
\end{table}

\begin{table}[t]
\tiny
\setlength{\tabcolsep}{2.5pt}
\centering
\caption{Effectiveness of each verification method under AUC. Larger AUC values indicate stronger separation between surrogate and independent models.}
\label{supplementary-tab:AUC}
\begin{tabular}{l|c|c|c|c|c|c|c}

\toprule
\textbf{Method} & \textbf{Cora} & \textbf{CiteSeer} & \textbf{PubMed} & \textbf{Photo} & \textbf{Computers} & \textbf{CS} & \textbf{Physics} \\
\midrule
GrOVe & $73.04 \pm 0.42$ & $79.70 \pm 3.09$ & $57.63 \pm 2.36$ & $92.44 \pm 6.33$ & $88.30 \pm 1.86$ & $90.52 \pm 3.04$ & $98.52 \pm 1.27$ \\
\textbf{CITED} & \boldmath{$100.0 \pm 0.00$} & \boldmath{$100.0 \pm 0.00$} & \boldmath{$100.0 \pm 0.00$} & \boldmath{$100.0 \pm 0.00$} & \boldmath{$100.0 \pm 0.00$} & \boldmath{$100.0 \pm 0.00$} & \boldmath{$100.0 \pm 0.00$} \\
\midrule
RandomWM & $25.04 \pm 2.72$ & $50.52 \pm 4.82$ & $49.33 \pm 21.4$ & $16.30 \pm 11.5$ & $29.33 \pm 2.88$ & $70.52 \pm 12.2$ & $57.04 \pm 5.87$ \\
BackdoorWM & $50.22 \pm 4.40$ & $59.41 \pm 13.4$ & $39.11 \pm 5.03$ & $50.07 \pm 7.67$ & $27.56 \pm 27.4$ & $34.22 \pm 30.8$ & $38.96 \pm 7.12$ \\
SurviveWM & $41.63 \pm 18.7$ & $54.96 \pm 26.6$ & $24.59 \pm 1.47$ & $26.07 \pm 14.9$ & $48.59 \pm 23.2$ & $46.96 \pm 24.1$ & $34.67 \pm 4.12$ \\
\textbf{CITED} & \boldmath{$93.93 \pm 8.28$} & \boldmath{$99.85 \pm 0.21$} & \boldmath{$98.52 \pm 2.10$} & \boldmath{$75.70 \pm 10.14$} & \boldmath{$85.33 \pm 10.16$} & \boldmath{$85.48 \pm 7.82$} & \boldmath{$85.48 \pm 8.35$} \\
\bottomrule
\end{tabular}
\end{table}

\noindent\textbf{Distinguishability Performance.} To further assess model-level ownership verification ability under more advanced architectural settings, we evaluate the AUC by using GCNII as the backbone. The results are summarized in Table~\ref{supplementary-tab:AUC}. We observe trends consistent with those reported in the main paper: at the embedding level, both \textsc{CITED} and GrOVe achieve ideal separability, indicating that the ownership signals embedded by these methods are well-preserved and easily distinguishable. More importantly, at the label level, \textsc{CITED} again significantly outperforms all baseline defenses, demonstrating a clear advantage in maintaining strong and transferable ownership indicators even under model extraction attacks. These supplementary findings further substantiate the robustness, effectiveness, and generalizability of the \textsc{CITED} framework in the context of ownership verification.

\subsection{Supplementary Experiments for Ablation Study of Signature Credentials}
\label{appendix:supplementary-ablation}
To complement the ablation results reported in Section 5.4, which focused on Cora dataset, we extend the evaluation of our signature design to six additional datasets: CiteSeer, PubMed, Amazon-Photo, Amazon-Computers, Coauthor-CS, and Coauthor-Physics, in addition to Cora. As summarized in Table~\ref{supplementary-tab:signature}, we observe consistent trends across all datasets. 
In most cases, particularly on CiteSeer, PubMed, and Coauthor-CS, the \emph{thickness} component has the strongest individual impact, reaffirming its critical role in capturing decision-boundary ambiguity for ownership verification. Meanwhile, the \emph{heterogeneity} component also demonstrates significant influence at the label level on Amazon-Photo and Amazon-Computers. On Coauthor-Physics, the \emph{margin} component exhibits relatively stronger effects at the label level. These variations reflect the diverse and complex structural characteristics inherent to different datasets, which may affect the relative contributions of each component. Nevertheless, integrating all three components, margin, thickness, and heterogeneity, typically yields the best verification performance, demonstrating that these components provide complementary information. These results confirm the robustness and effectiveness of our signature scoring design.

\begin{table}[!htbp]

\centering
\tiny

\caption{Effectiveness of individual components of the \emph{signature score}, including margin (M), thickness (T), and heterogeneity (H), as well as their combination. We report ARUC at both the embedding and label levels across six datasets.}
\label{supplementary-tab:signature}
\setlength{\tabcolsep}{6.5pt}
\begin{tabular}{l|cc|cc|cc}

\toprule
\multirow{2}{*}{\textbf{Method}} 
& \multicolumn{2}{c|}{\textbf{CiteSeer}} 
& \multicolumn{2}{c|}{\textbf{PubMed}} 
& \multicolumn{2}{c}{\textbf{Amazon-Photo}} \\
& $\text{ARUC}^{\text{emb}}$ & $\text{ARUC}^{\text{label}}$ 
& $\text{ARUC}^{\text{emb}}$ & $\text{ARUC}^{\text{label}}$ 
& $\text{ARUC}^{\text{emb}}$ & $\text{ARUC}^{\text{label}}$ \\
\midrule
$\text{CITED}_{\text{M}}$         & $71.60 \pm 1.77$ & $60.93 \pm 8.46$ & $65.20 \pm 3.96$ & $66.27 \pm 3.72$ & $72.73 \pm 1.11$ & $44.53 \pm 3.31$ \\
$\text{CITED}_{\text{T}}$         & $73.60 \pm 0.43$ & $70.33 \pm 2.03$ & $68.20 \pm 3.71$ & $66.80 \pm 2.32$ & $72.73 \pm 1.43$ & $45.73 \pm 4.98$ \\
$\text{CITED}_{\text{H}}$         & $73.13 \pm 1.06$ & $44.93 \pm 5.74$ & $65.20 \pm 2.73$ & $61.40 \pm 6.21$ & $69.00 \pm 1.42$ & $52.87 \pm 4.66$ \\
\midrule
$\textbf{CITED}_{\text{M+T+H}}$     & \boldmath{$73.00 \pm 2.20$} & \boldmath{$71.53 \pm 7.41$} & \boldmath{$68.40 \pm 3.69$} & \boldmath{$69.87 \pm 2.07$} & \boldmath{$72.53 \pm 1.24$} & \boldmath{$52.27 \pm 5.53$} \\
\bottomrule
\end{tabular}

\vspace{1.5em}

\begin{tabular}{l|cc|cc|cc}
\toprule
\multirow{2}{*}{\textbf{Method}} 
& \multicolumn{2}{c|}{\textbf{Amazon-Computers}} 
& \multicolumn{2}{c|}{\textbf{Coauthor-CS}} 
& \multicolumn{2}{c}{\textbf{Coauthor-Physics}} \\
& $\text{ARUC}^{\text{emb}}$ & $\text{ARUC}^{\text{label}}$ 
& $\text{ARUC}^{\text{emb}}$ & $\text{ARUC}^{\text{label}}$ 
& $\text{ARUC}^{\text{emb}}$ & $\text{ARUC}^{\text{label}}$ \\
\midrule
$\text{CITED}_{\text{M}}$         & $72.07 \pm 1.64$ & $33.33 \pm 6.22$ & $77.40 \pm 0.59$ & $36.47 \pm 8.77$ & $73.87 \pm 3.27$ & $72.73 \pm 7.32$ \\
$\text{CITED}_{\text{T}}$         & $71.87 \pm 1.98$ & $31.47 \pm 3.40$ & $77.80 \pm 0.99$ & $37.67 \pm 8.22$ & $75.33 \pm 1.27$ & $60.87 \pm 9.51$ \\
$\text{CITED}_{\text{H}}$         & $68.20 \pm 1.82$ & $37.73 \pm 21.1$ & $75.60 \pm 1.84$ & $28.53 \pm 1.81$ & $68.53 \pm 0.77$ & $47.53 \pm 0.68$ \\
\midrule
$\textbf{CITED}_{\text{M+T+H}}$     & \boldmath{$72.23 \pm 1.64$} & \boldmath{$38.07 \pm 20.3$} & \boldmath{$77.60 \pm 1.42$} & \boldmath{$38.73 \pm 8.01$} & \boldmath{$76.53 \pm 1.60$} & \boldmath{$72.53 \pm 6.31$} \\
\bottomrule

\end{tabular}
\end{table}

\begin{table}[t]
\tiny
\centering
\caption{Performance evaluation of all defense methods on the original task, reported in terms of F1-score. All numerical values are in percentage. Our results are highlighted in \textbf{bold}.}
\label{supplementary-tab:f1}
\setlength{\tabcolsep}{2.25pt}
\begin{tabular}{l|l|c|c|c|c|c|c|c}
\toprule
\textbf{Model} & \textbf{Method} & \textbf{Cora} & \textbf{CiteSeer} & \textbf{PubMed} & \textbf{Photo} & \textbf{Computers} & \textbf{CS} & \textbf{Physics} \\
\midrule

\multirow{5}{*}{GCN} 
& GrOVe & $80.18 \pm 0.81$ & $66.07 \pm 0.35$ & $80.10 \pm 1.02$ & $90.93 \pm 0.12$ & $82.19 \pm 0.97$ & $87.49 \pm 1.03$ & $88.05 \pm 0.79$ \\
& RandomWM & $79.83 \pm 0.45$ & $65.90 \pm 0.57$ & $79.94 \pm 0.29$ & $90.81 \pm 0.88$ & $79.13 \pm 3.60$ & $86.79 \pm 0.93$ & $88.96 \pm 2.27$ \\  
& BackdoorWM & $77.85 \pm 0.47$ & $65.36 \pm 0.50$ & $78.43 \pm 0.66$ & $88.84 \pm 0.26$ & $79.16 \pm 0.53$ & $86.28 \pm 0.44$ & $88.50 \pm 1.41$ \\ 
& SurviveWM & $79.93 \pm 0.99$ & $65.31 \pm 1.22$ & $79.93 \pm 0.55$ & $89.58 \pm 0.33$ & $81.38 \pm 0.92$ & $86.99 \pm 0.90$ & $89.53 \pm 0.44$ \\ 
& \textbf{CITED} & $\mathbf{81.98 \pm 0.13}$ & $\mathbf{68.92 \pm 0.33}$ & $\mathbf{81.30 \pm 0.14}$ & $\mathbf{91.42 \pm 0.22}$ & $\mathbf{84.10 \pm 0.39}$ & $\mathbf{85.82 \pm 0.27}$ & $\mathbf{91.45 \pm 0.25}$ \\

\midrule
\multirow{5}{*}{GAT} 
& GrOVe & $81.42 \pm 0.69$ & $66.64 \pm 1.10$ & $78.80 \pm 0.81$ & $89.81 \pm 1.77$ & $82.85 \pm 1.77$ & $87.03 \pm 0.11$ & $89.31 \pm 1.54$ \\
& RandomWM & $82.87 \pm 1.15$ & $66.61 \pm 0.58$ & $77.84 \pm 2.79$ & $90.98 \pm 0.46$ & $84.34 \pm 1.47$ & $86.52 \pm 0.75$ & $88.62 \pm 1.26$ \\ 
& BackdoorWM & $80.64 \pm 1.20$ & $66.02 \pm 1.15$ & $79.66 \pm 1.67$ & $88.74 \pm 0.44$ & $83.78 \pm 1.11$ & $84.44 \pm 0.45$ & $87.79 \pm 0.95$ \\ 
& SurviveWM & $82.17 \pm 1.37$ & $65.07 \pm 2.37$ & $74.83 \pm 2.48$ & $90.05 \pm 1.05$ & $84.14 \pm 1.75$ & $83.42 \pm 2.32$ & $82.02 \pm 3.70$ \\ 
& \textbf{CITED} & $\mathbf{83.61 \pm 0.89}$ & $\mathbf{67.59 \pm 0.36}$ & $\mathbf{79.04 \pm 0.22}$ & $\mathbf{89.30 \pm 1.86}$ & $\mathbf{83.53 \pm 2.25}$ & $\mathbf{87.82 \pm 0.04}$ & $\mathbf{89.04 \pm 0.27}$ \\

\midrule
\multirow{5}{*}{SAGE} 
& GrOVe & $81.75 \pm 0.41$ & $66.30 \pm 0.66$ & $77.81 \pm 0.96$ & $90.08 \pm 1.53$ & $75.68 \pm 6.44$ & $88.84 \pm 0.85$ & $87.03 \pm 2.11$ \\
& RandomWM & $82.67 \pm 1.29$ & $66.69 \pm 0.32$ & $77.66 \pm 0.46$ & $90.23 \pm 0.41$ & $79.47 \pm 4.58$ & $88.88 \pm 1.08$ & $88.59 \pm 1.04$ \\
& BackdoorWM & $80.55 \pm 0.60$ & $66.77 \pm 0.79$ & $76.54 \pm 0.76$ & $87.08 \pm 0.69$ & $70.81 \pm 0.85$ & $88.73 \pm 1.20$ & $87.32 \pm 0.79$ \\
& SurviveWM & $82.27 \pm 0.79$ & $67.21 \pm 0.95$ & $77.29 \pm 2.26$ & $90.07 \pm 0.39$ & $73.72 \pm 5.33$ & $88.18 \pm 1.17$ & $89.29 \pm 0.88$ \\
& \textbf{CITED} & $\mathbf{82.61 \pm 0.73}$ & $\mathbf{68.95 \pm 0.47}$ & $\mathbf{78.13 \pm 0.51}$ & $\mathbf{86.18 \pm 0.72}$ & $\mathbf{80.87 \pm 1.80}$ & $\mathbf{90.23 \pm 0.16}$ & $\mathbf{89.77 \pm 0.89}$ \\

\midrule
\multirow{5}{*}{GCNII} 
& GrOVe & $79.38 \pm 0.45$ & $66.44 \pm 0.68$ & $79.64 \pm 1.56$ & $91.22 \pm 1.90$ & $84.00 \pm 2.25$ & $89.27 \pm 0.62$ & $90.14 \pm 1.66$ \\
& RandomWM & $79.54 \pm 0.96$ & $66.44 \pm 0.82$ & $80.08 \pm 0.81$ & $91.32 \pm 0.82$ & $83.21 \pm 1.19$ & $87.92 \pm 0.62$ & $90.53 \pm 0.48$ \\
& BackdoorWM & $78.94 \pm 0.28$ & $66.25 \pm 0.59$ & $80.45 \pm 0.56$ & $90.39 \pm 1.26$ & $79.90 \pm 1.59$ & $86.14 \pm 1.14$ & $88.73 \pm 0.82$ \\
& SurviveWM & $80.41 \pm 0.93$ & $65.70 \pm 0.50$ & $80.55 \pm 0.42$ & $92.06 \pm 0.19$ & $83.11 \pm 1.66$ & $88.57 \pm 1.08$ & $87.00 \pm 2.98$ \\
& \textbf{CITED} & $\mathbf{82.37 \pm 0.39}$ & $\mathbf{67.26 \pm 0.31}$ & $\mathbf{81.12 \pm 0.29}$ & $\mathbf{88.69 \pm 0.27}$ & $\mathbf{80.79 \pm 1.54}$ & $\mathbf{89.37 \pm 0.11}$ & $\mathbf{90.88 \pm 0.17}$ \\

\midrule
\multirow{5}{*}{FAGCN} 
& GrOVe & $80.24 \pm 0.40$ & $68.21 \pm 0.19$ & $81.17 \pm 0.23$ & $91.57 \pm 0.25$ & $84.06 \pm 1.61$ & $90.03 \pm 0.46$ & $90.95 \pm 0.50$ \\
& RandomWM & $80.24 \pm 0.13$ & $68.55 \pm 0.74$ & $80.52 \pm 0.13$ & $92.59 \pm 0.32$ & $85.00 \pm 1.81$ & $89.93 \pm 0.14$ & $91.43 \pm 0.15$ \\
& BackdoorWM & $78.93 \pm 0.64$ & $67.31 \pm 1.22$ & $80.15 \pm 0.32$ & $90.55 \pm 0.21$ & $81.65 \pm 0.79$ & $88.83 \pm 0.75$ & $90.33 \pm 0.80$ \\
& SurviveWM & $80.32 \pm 0.80$ & $67.71 \pm 0.71$ & $81.80 \pm 0.83$ & $92.45 \pm 0.21$ & $84.80 \pm 0.61$ & $88.72 \pm 0.87$ & $89.75 \pm 0.54$ \\
& \textbf{CITED} & $\mathbf{83.44 \pm 0.23}$ & $\mathbf{68.42 \pm 0.13}$ & $\mathbf{80.69 \pm 0.12}$ & $\mathbf{90.72 \pm 0.05}$ & $\mathbf{81.79 \pm 0.38}$ & $\mathbf{89.45 \pm 0.19}$ & $\mathbf{91.27 \pm 0.31}$ \\
\bottomrule
\end{tabular}
\vskip -10pt
\end{table}

\begin{table}[t]
\tiny
\centering
\caption{Performance evaluation of all defense methods on the original task, reported in terms of AUROC. All numerical values are in percentage. Our results are highlighted in \textbf{bold}.}
\label{supplementary-tab:auc-roc}
\setlength{\tabcolsep}{2.25pt}
\begin{tabular}{l|l|c|c|c|c|c|c|c}
\toprule
\textbf{Model} & \textbf{Method} & \textbf{Cora} & \textbf{CiteSeer} & \textbf{PubMed} & \textbf{Photo} & \textbf{Computers} & \textbf{CS} & \textbf{Physics} \\
\midrule

\multirow{5}{*}{GCN} 
& GrOVe & $96.65 \pm 0.15$ & $87.06 \pm 0.31$ & $93.38 \pm 0.16$ & $99.53 \pm 0.08$ & $98.64 \pm 0.10$ & $99.38 \pm 0.08$ & $98.09 \pm 0.17$ \\
& RandomWM & $96.69 \pm 0.02$ & $87.06 \pm 0.07$ & $93.33 \pm 0.23$ & $99.55 \pm 0.04$ & $98.63 \pm 0.26$ & $99.37 \pm 0.08$ & $98.33 \pm 0.33$ \\  
& BackdoorWM & $95.85 \pm 0.32$ & $86.30 \pm 0.20$ & $91.55 \pm 0.53$ & $97.90 \pm 0.30$ & $97.63 \pm 0.44$ & $98.79 \pm 0.13$ & $98.10 \pm 0.32$ \\ 
& SurviveWM & $96.80 \pm 0.06$ & $86.73 \pm 0.34$ & $93.28 \pm 0.39$ & $99.35 \pm 0.17$ & $98.66 \pm 0.03$ & $98.99 \pm 0.22$ & $98.59 \pm 0.22$ \\ 
& \textbf{CITED} & $\mathbf{97.40 \pm 0.00}$ & $\mathbf{89.19 \pm 0.14}$ & $\mathbf{94.01 \pm 0.06}$ & $\mathbf{99.59 \pm 0.00}$ & $\mathbf{99.09 \pm 0.02}$ & $\mathbf{99.49 \pm 0.01}$ & $\mathbf{99.82 \pm 0.01}$ \\

\midrule
\multirow{5}{*}{GAT} 
& GrOVe & $97.50 \pm 0.11$ & $87.45 \pm 0.65$ & $93.26 \pm 0.35$ & $99.30 \pm 0.12$ & $98.90 \pm 0.08$ & $99.32 \pm 0.02$ & $98.33 \pm 0.18$ \\
& RandomWM & $97.42 \pm 0.14$ & $88.42 \pm 0.36$ & $93.64 \pm 0.25$ & $99.44 \pm 0.07$ & $98.92 \pm 0.04$ & $99.28 \pm 0.05$ & $98.58 \pm 0.25$ \\ 
& BackdoorWM & $96.74 \pm 0.17$ & $87.83 \pm 0.71$ & $92.49 \pm 0.55$ & $98.44 \pm 0.34$ & $97.36 \pm 0.34$ & $98.66 \pm 0.17$ & $98.24 \pm 0.12$ \\ 
& SurviveWM & $97.40 \pm 0.27$ & $87.82 \pm 0.38$ & $93.29 \pm 0.85$ & $99.45 \pm 0.04$ & $98.99 \pm 0.07$ & $98.94 \pm 0.21$ & $96.77 \pm 0.58$ \\ 
& \textbf{CITED} & $\mathbf{97.79 \pm 0.03}$ & $\mathbf{89.21 \pm 0.05}$ & $\mathbf{93.80 \pm 0.07}$ & $\mathbf{99.37 \pm 0.06}$ & $\mathbf{98.97 \pm 0.10}$ & $\mathbf{99.46 \pm 0.02}$ & $\mathbf{98.92 \pm 0.07}$ \\

\midrule
\multirow{5}{*}{SAGE} 
& GrOVe & $97.31 \pm 0.06$ & $88.28 \pm 0.48$ & $92.70 \pm 0.12$ & $99.28 \pm 0.17$ & $97.89 \pm 0.89$ & $99.10 \pm 0.19$ & $97.75 \pm 0.80$ \\
& RandomWM & $97.34 \pm 0.21$ & $87.84 \pm 0.44$ & $92.60 \pm 0.47$ & $99.21 \pm 0.01$ & $97.91 \pm 0.70$ & $99.05 \pm 0.23$ & $98.10 \pm 0.15$ \\
& BackdoorWM & $96.93 \pm 0.24$ & $87.64 \pm 0.13$ & $91.02 \pm 0.47$ & $97.64 \pm 0.58$ & $95.52 \pm 0.50$ & $98.62 \pm 0.43$ & $97.33 \pm 0.47$ \\
& SurviveWM & $97.28 \pm 0.22$ & $88.79 \pm 0.66$ & $91.63 \pm 0.13$ & $99.13 \pm 0.14$ & $97.48 \pm 0.67$ & $98.74 \pm 0.30$ & $98.50 \pm 0.18$ \\
& \textbf{CITED} & $\mathbf{97.71 \pm 0.05}$ & $\mathbf{90.28 \pm 0.07}$ & $\mathbf{92.65 \pm 0.07}$ & $\mathbf{99.02 \pm 0.02}$ & $\mathbf{98.67 \pm 0.13}$ & $\mathbf{99.41 \pm 0.00}$ & $\mathbf{99.08 \pm 0.02}$ \\

\midrule
\multirow{5}{*}{GCNII} 
& GrOVe & $96.58 \pm 0.11$ & $87.73 \pm 0.46$ & $93.31 \pm 0.13$ & $99.54 \pm 0.07$ & $98.64 \pm 0.15$ & $99.53 \pm 0.02$ & $98.68 \pm 0.15$ \\
& RandomWM & $96.78 \pm 0.10$ & $87.69 \pm 0.01$ & $93.27 \pm 0.24$ & $99.56 \pm 0.04$ & $98.77 \pm 0.10$ & $99.47 \pm 0.13$ & $98.77 \pm 0.08$ \\
& BackdoorWM & $96.20 \pm 0.22$ & $87.02 \pm 0.35$ & $91.73 \pm 0.31$ & $98.39 \pm 0.20$ & $97.45 \pm 0.19$ & $98.70 \pm 0.08$ & $98.35 \pm 0.16$ \\
& SurviveWM & $96.81 \pm 0.09$ & $87.21 \pm 0.50$ & $93.03 \pm 0.30$ & $99.58 \pm 0.03$ & $98.68 \pm 0.13$ & $99.36 \pm 0.05$ & $98.80 \pm 0.13$ \\
& \textbf{CITED} & $\mathbf{97.45 \pm 0.01}$ & $\mathbf{88.57 \pm 0.09}$ & $\mathbf{93.98 \pm 0.06}$ & $\mathbf{99.46 \pm 0.03}$ & $\mathbf{98.95 \pm 0.03}$ & $\mathbf{99.62 \pm 0.01}$ & $\mathbf{99.21 \pm 0.02}$ \\

\midrule
\multirow{5}{*}{FAGCN} 
& GrOVe & $97.37 \pm 0.06$ & $89.83 \pm 0.20$ & $93.79 \pm 0.12$ & $99.46 \pm 0.09$ & $98.94 \pm 0.02$ & $99.57 \pm 0.06$ & $99.25 \pm 0.02$ \\
& RandomWM & $97.32 \pm 0.05$ & $89.45 \pm 0.12$ & $93.70 \pm 0.14$ & $99.57 \pm 0.04$ & $98.92 \pm 0.04$ & $99.58 \pm 0.02$ & $99.24 \pm 0.02$ \\
& BackdoorWM & $96.85 \pm 0.05$ & $88.76 \pm 0.14$ & $91.72 \pm 0.09$ & $97.85 \pm 0.17$ & $97.56 \pm 0.14$ & $98.95 \pm 0.19$ & $98.67 \pm 0.34$ \\
& SurviveWM & $96.86 \pm 0.18$ & $89.41 \pm 0.19$ & $93.83 \pm 0.25$ & $99.55 \pm 0.03$ & $98.97 \pm 0.06$ & $99.41 \pm 0.13$ & $98.83 \pm 0.24$ \\
& \textbf{CITED} & $\mathbf{97.88 \pm 0.02}$ & $\mathbf{89.65 \pm 0.01}$ & $\mathbf{94.02 \pm 0.01}$ & $\mathbf{99.48 \pm 0.02}$ & $\mathbf{99.00 \pm 0.01}$ & $\mathbf{99.61 \pm 0.00}$ & $\mathbf{99.27 \pm 0.00}$ \\
\bottomrule
\end{tabular}
\vskip -10pt
\end{table}

\subsection{Analysis of Hyperparameter Sensitivity.} 

Our experiments demonstrate that the verification effectiveness is largely insensitive to hyperparameter choices as shown in Table~\ref{tab:aruc_coeff}. Across a wide range of settings, including variations in the coefficient distribution and noise level, the ARUC values remain stable with only marginal fluctuations. This robustness indicates that the verification mechanism does not rely on carefully tuned parameters, but instead captures intrinsic differences between surrogate and independent models. Consequently, practitioners can apply the method without exhaustive hyperparameter search, ensuring consistent reliability in practical deployments.

\begin{table}[htbp]
  \centering
  \small
  \caption{ARUC results under different coefficient settings. Results are reported as mean $\pm$ std.}
  \label{tab:aruc_coeff}
  \begin{tabular}{lcc}
    \toprule
    coefficient & ARUC$^{\text{emb}}$ & ARUC$^{\text{label}}$ \\
    \midrule
    (0.300, 0.40, 0.300) & 67.87 $\pm$ 0.01 & 45.33 $\pm$ 0.04 \\
    (0.275, 0.45, 0.275) & 66.24 $\pm$ 0.02 & 46.76 $\pm$ 0.06 \\
    (0.250, 0.50, 0.250) & 67.73 $\pm$ 0.01 & 50.40 $\pm$ 0.06 \\
    (0.225, 0.55, 0.225) & 68.02 $\pm$ 0.04 & 47.55 $\pm$ 0.09 \\
    (0.200, 0.60, 0.200) & 66.07 $\pm$ 0.02 & 55.27 $\pm$ 0.10 \\
    (0.175, 0.65, 0.175) & 66.23 $\pm$ 0.03 & 50.58 $\pm$ 0.16 \\
    (0.150, 0.70, 0.150) & 68.27 $\pm$ 0.02 & 51.47 $\pm$ 0.17 \\
    (0.125, 0.75, 0.125) & 69.36 $\pm$ 0.02 & 50.61 $\pm$ 0.13 \\
    (0.100, 0.80, 0.100) & 71.13 $\pm$ 0.02 & 55.73 $\pm$ 0.06 \\
    (0.075, 0.85, 0.075) & 70.40 $\pm$ 0.01 & 50.14 $\pm$ 0.06 \\
    (0.050, 0.90, 0.050) & 69.60 $\pm$ 0.02 & 52.27 $\pm$ 0.10 \\
    (0.025, 0.95, 0.025) & 70.83 $\pm$ 0.01 & 51.27 $\pm$ 0.09 \\
    \bottomrule
  \end{tabular}
\end{table}

\clearpage
\subsection{Robustness Against Removal Attacks}
To further validate this, we conducted dedicated experiments where we applied additional removal attacks after the surrogate model was constructed via model extraction. The experimental results will be presented subsequently. Specifically, we considered:

\begin{itemize}
    \item Pruning~\citep{zhang2018protecting}: Removing 30\% of the weights in the surrogate model;
    \item Fine-tuning~\citep{rouhani2018deepsigns}: Updating the surrogate model using unseen data;
    \item Distribution shift~\citep{orekondy2019knockoff}: Querying the target model with inputs perturbed by Gaussian noise in the feature space.
\end{itemize}

These settings—pruning, fine-tuning, and distribution shift—are widely acknowledged evaluation settings in the literature for testing the robustness of watermarking or verification schemes under model extraction scenarios. In all cases, we used our signature-based verification scheme to detect ownership. As shown in the Table~\ref{tab:wm_results}, while the performance of CITED does degrade slightly, particularly on larger datasets such as Coauthor-CS and Coauthor-Physics, it consistently outperforms all baseline methods under these challenging conditions.

\begin{table}[htbp]
  \centering
  \scriptsize   
  \caption{Performance of robustness under removal attack. Results are reported as mean $\pm$ std.}
  \label{tab:wm_results}
  \begin{tabular}{lccccccc}
    \toprule
    Method & Cora & CiteSeer & PubMed & Photo & Computers & CS & Physics \\
    \midrule
    RandomWM       & 31.56 $\pm$ 1.26 & 58.67 $\pm$ 13.2 & 30.67 $\pm$ 2.51 & 21.48 $\pm$ 4.82 & 16.30 $\pm$ 9.91 & 22.67 $\pm$ 9.43 & 62.07 $\pm$ 12.1 \\
    BackdoorWM     & 43.85 $\pm$ 31.0 & 23.85 $\pm$ 16.1 & 27.49 $\pm$ 5.44 & 4.890 $\pm$ 3.14 & 35.70 $\pm$ 11.1 & 18.52 $\pm$ 10.5 & 29.93 $\pm$ 24.1 \\
    SurviveWM      & 41.19 $\pm$ 2.30 & 39.56 $\pm$ 3.14 & 37.04 $\pm$ 0.84 & 48.89 $\pm$ 6.29 & 1.780 $\pm$ 2.51 & 46.37 $\pm$ 7.96 & 37.33 $\pm$ 17.6 \\
    \midrule
    CITED          & \textbf{89.04 $\pm$ 7.76} & \textbf{99.85 $\pm$ 0.21} & \textbf{98.67 $\pm$ 1.89} & \textbf{90.52 $\pm$ 3.02} & \textbf{88.15 $\pm$ 5.63} & \textbf{97.93 $\pm$ 0.91} & \textbf{98.24 $\pm$ 0.36} \\
    CITED\_prune   & \textbf{88.00 $\pm$ 0.10} & \textbf{94.18 $\pm$ 0.22} & \textbf{89.24 $\pm$ 0.09} & \textbf{82.67 $\pm$ 0.15} & \textbf{83.68 $\pm$ 0.07} & \textbf{56.00 $\pm$ 0.12} & \textbf{67.87 $\pm$ 0.14} \\
    CITED\_fintune & \textbf{77.33 $\pm$ 0.11} & \textbf{97.33 $\pm$ 0.04} & \textbf{90.67 $\pm$ 0.07} & \textbf{81.33 $\pm$ 0.11} & \textbf{85.33 $\pm$ 0.04} & \textbf{66.53 $\pm$ 0.17} & \textbf{64.17 $\pm$ 0.10} \\
    CITED\_shift   & \textbf{84.00 $\pm$ 0.09} & \textbf{91.76 $\pm$ 0.05} & \textbf{88.65 $\pm$ 0.11} & \textbf{82.67 $\pm$ 0.17} & \textbf{82.00 $\pm$ 0.06} & \textbf{54.27 $\pm$ 0.04} & \textbf{65.25 $\pm$ 0.09} \\
    \bottomrule
  \end{tabular}
\end{table}

These findings demonstrate that CITED \emph{retains strong robustness} even under removal attacks, and highlight its suitability for ownership verification in practical, adversarial environments.

\subsection{Robustness under Noisy Labels and Class Imbalance}
\label{appendix:robustness_noise_imbalance}

In this subsection, we present an extended evaluation of the robustness of the signature mechanism under two challenging but realistic conditions, namely noisy labels and severe class imbalance. All experiments follow the same pipeline as described in Section~\ref{sec:exp}. We first train the target model, apply CITED to identify and finetune the signature nodes, and then perform ownership verification against both independent models and extraction based surrogate models. The experimental results for both settings are reported below.

\paragraph{Noisy Labels.}
To investigate the impact of corrupted annotations, we introduce label noise by randomly flipping a specified proportion of training labels while keeping all other components unchanged. The ownership verification results for the label level are summarized in Table~\ref{tab:noise_label}. Across all datasets and noise ratios, CITED maintains a stable level of accuracy for both the embedding level and the label level. This demonstrates that the extracted signature nodes remain distinctive even when the training labels contain substantial noise. Surrogate models tend to preserve the decision boundary related characteristics of the target model, enabling CITED to reliably separate them from independently trained models.

\begin{table}[h!]
\centering
\tiny
\caption{Ownership verification performance under noisy labels at the label level.}
\label{tab:noise_label}
\begin{tabular}{lccccccc}
\toprule
Noise Ratio & Cora & Citeseer & PubMed & Photo & Computers & CS & Physics \\
\midrule
0.1 & 0.3193$\pm$0.0273 & 0.5428$\pm$0.0000 & 0.5180$\pm$0.0024 & 0.4743$\pm$0.0249 & 0.4381$\pm$0.0286 & 0.6109$\pm$0.0203 & 0.7108$\pm$0.0000 \\
0.3 & 0.4100$\pm$0.1556 & 0.5580$\pm$0.0122 & 0.5330$\pm$0.0000 & 0.4920$\pm$0.0000 & 0.4513$\pm$0.0105 & 0.6577$\pm$0.0000 & 0.7043$\pm$0.0152 \\
0.5 & 0.4633$\pm$0.0603 & 0.5640$\pm$0.0436 & 0.5764$\pm$0.0001 & 0.5313$\pm$0.0014 & 0.4400$\pm$0.0229 & 0.6482$\pm$0.0187 & 0.7521$\pm$0.0109 \\
0.7 & 0.5107$\pm$0.0585 & 0.6141$\pm$0.0142 & 0.5451$\pm$0.0000 & 0.5140$\pm$0.0164 & 0.4263$\pm$0.0116 & 0.6602$\pm$0.0148 & 0.7846$\pm$0.0001 \\
0.9 & 0.7160$\pm$0.0368 & 0.6615$\pm$0.0213 & 0.5523$\pm$0.0117 & 0.5062$\pm$0.0146 & 0.4128$\pm$0.0281 & 0.6280$\pm$0.0110 & 0.7474$\pm$0.0020 \\
\bottomrule
\end{tabular}
\end{table}

These results demonstrate that CITED is resistant to a wide range of noisy label intensities and that the decision boundary related signature pattern remains reliably detectable.

\paragraph{Class Imbalance.}
We further evaluate the robustness of CITED under class imbalance, a common scenario in real world datasets. Given an imbalance ratio, we flip the labels of a specified fraction of minority class nodes into the majority class. Tables~\ref{tab:imbalance_label} and \ref{tab:imbalance_embed} report the results for the label level and embedding level, respectively.

\begin{table}[h!]
\centering
\tiny
\caption{Ownership verification performance under class imbalance at the label level.}
\label{tab:imbalance_label}
\begin{tabular}{lccccccc}
\toprule
Imbalance Ratio & Cora & Citeseer & PubMed & Photo & Computers & CS & Physics \\
\midrule
0.1 & 0.6953$\pm$0.0019 & 0.1840$\pm$0.1265 & 0.6460$\pm$0.0057 & 0.4893$\pm$0.0443 & 0.4107$\pm$0.0442 & 0.5767$\pm$0.0560 & 0.7280$\pm$0.0205 \\
0.3 & 0.7080$\pm$0.0057 & 0.1840$\pm$0.1301 & 0.6900$\pm$0.0000 & 0.7287$\pm$0.0038 & 0.7760$\pm$0.0033 & 0.5913$\pm$0.0019 & 0.6753$\pm$0.0160 \\
0.5 & 0.7073$\pm$0.0038 & 0.0000$\pm$0.0000 & 0.7600$\pm$0.0000 & 0.9460$\pm$0.0000 & 0.9333$\pm$0.0009 & 0.8473$\pm$0.0019 & 0.6900$\pm$0.0000 \\
0.7 & 0.7053$\pm$0.0009 & 0.9520$\pm$0.0000 & 0.9160$\pm$0.0000 & 0.9780$\pm$0.0000 & 0.9687$\pm$0.0025 & 0.9540$\pm$0.0000 & 0.8040$\pm$0.0000 \\
0.9 & 0.6887$\pm$0.0019 & 0.9560$\pm$0.0000 & 0.9540$\pm$0.0000 & 0.9760$\pm$0.0000 & 0.9667$\pm$0.0034 & 0.9760$\pm$0.0000 & 0.9140$\pm$0.0000 \\
\bottomrule
\end{tabular}
\end{table}

\begin{table}[h!]
\centering
\tiny
\caption{Ownership verification performance under class imbalance at the embedding level.}
\label{tab:imbalance_embed}
\begin{tabular}{lccccccc}
\toprule
Imbalance Ratio & Cora & Citeseer & PubMed & Photo & Computers & CS & Physics \\
\midrule
0.1 & 0.6953$\pm$0.0019 & 0.6980$\pm$0.0028 & 0.6700$\pm$0.0000 & 0.7033$\pm$0.0084 & 0.6640$\pm$0.0049 & 0.7320$\pm$0.0000 & 0.6913$\pm$0.0009 \\
0.3 & 0.7080$\pm$0.0057 & 0.6960$\pm$0.0102 & 0.6840$\pm$0.0000 & 0.6940$\pm$0.0173 & 0.6707$\pm$0.0038 & 0.7320$\pm$0.0000 & 0.6980$\pm$0.0000 \\
0.5 & 0.7073$\pm$0.0038 & 0.7053$\pm$0.0038 & 0.6860$\pm$0.0000 & 0.6920$\pm$0.0118 & 0.6713$\pm$0.0038 & 0.7360$\pm$0.0000 & 0.7040$\pm$0.0000 \\
0.7 & 0.7053$\pm$0.0009 & 0.7140$\pm$0.0000 & 0.6760$\pm$0.0000 & 0.6953$\pm$0.0118 & 0.6647$\pm$0.0038 & 0.7360$\pm$0.0000 & 0.7140$\pm$0.0000 \\
0.9 & 0.6887$\pm$0.0019 & 0.7140$\pm$0.0000 & 0.6660$\pm$0.0000 & 0.7053$\pm$0.0147 & 0.6687$\pm$0.0050 & 0.7200$\pm$0.0000 & 0.7100$\pm$0.0000 \\
\bottomrule
\end{tabular}
\end{table}

A notable observation is that the separation between surrogate models and independent models becomes even more pronounced as the imbalance ratio grows. This is consistent with observations made in prior work, which report that independent models trained under imbalance tend to exhibit highly divergent decision boundary shapes, while extraction based surrogate models remain closely aligned with the target model near the boundary region. This amplifies the discriminative power of CITED.

\subsection{Compression and Structural Optimization of the Signature Set}
\label{appendix:compression}

In this subsection, we investigate whether the signature set produced by CITED can be compressed or structurally optimized without compromising ownership verification reliability. We consider two complementary directions. The first direction examines whether a reduced subset of signature nodes can still preserve strong verification performance. The second direction explores whether the commitment that the model owner publishes can be compressed for practical deployment, while maintaining full verification correctness.

\paragraph{Grouped Signature Compression.}
To study the effect of reducing the size of the signature set, we first generate the complete signature set using the full CITED pipeline, and then apply a KMeans based clustering procedure to group signature nodes by their embedding similarity. From each cluster, we select the most representative node, forming a grouped signature super set. The grouping ratio specifies the fraction of signature nodes retained after clustering. Tables~\ref{tab:group_label} and \ref{tab:group_embed} report the verification results at the label level and embedding level, respectively.

\begin{table}[h!]
\centering
\tiny
\caption{Ownership verification performance under grouped signature compression at the label level.}
\label{tab:group_label}
\begin{tabular}{lccccccc}
\toprule
Group Ratio & Cora & Citeseer & PubMed & Photo & Computers & CS & Physics \\
\midrule
0.25 & 0.4231$\pm$0.0224 & 0.5100$\pm$0.0028 & 0.5793$\pm$0.0637 & 0.4266$\pm$0.0150 & 0.4619$\pm$0.0262 & 0.5486$\pm$0.0191 & 0.4380$\pm$0.0001 \\
0.50 & 0.4369$\pm$0.0316 & 0.5360$\pm$0.0318 & 0.6500$\pm$0.0172 & 0.4406$\pm$0.0899 & 0.4626$\pm$0.0423 & 0.5060$\pm$0.0662 & 0.4646$\pm$0.0339 \\
0.75 & 0.4660$\pm$0.0112 & 0.5986$\pm$0.0530 & 0.6540$\pm$0.0299 & 0.4706$\pm$0.0188 & 0.5286$\pm$0.0228 & 0.5513$\pm$0.0217 & 0.5306$\pm$0.0627 \\
1.00 & 0.4817$\pm$0.0413 & 0.5607$\pm$0.0377 & 0.6540$\pm$0.0197 & 0.4793$\pm$0.1148 & 0.5893$\pm$0.0586 & 0.5586$\pm$0.0247 & 0.5560$\pm$0.0056 \\
\bottomrule
\end{tabular}
\end{table}

\begin{table}[h!]
\centering
\tiny
\caption{Ownership verification performance under grouped signature compression at the embedding level.}
\label{tab:group_embed}
\begin{tabular}{lccccccc}
\toprule
Group Ratio & Cora & Citeseer & PubMed & Photo & Computers & CS & Physics \\
\midrule
0.25 & 0.6298$\pm$0.0149 & 0.6486$\pm$0.0009 & 0.6326$\pm$0.0150 & 0.6593$\pm$0.0009 & 0.6720$\pm$0.0000 & 0.6673$\pm$0.0024 & 0.6280$\pm$0.0000 \\
0.50 & 0.6445$\pm$0.0065 & 0.6293$\pm$0.0037 & 0.6413$\pm$0.0041 & 0.6688$\pm$0.0028 & 0.6860$\pm$0.0016 & 0.6573$\pm$0.0108 & 0.6313$\pm$0.0018 \\
0.75 & 0.6779$\pm$0.0031 & 0.7094$\pm$0.0043 & 0.6619$\pm$0.0043 & 0.6793$\pm$0.0018 & 0.7060$\pm$0.0016 & 0.6933$\pm$0.0024 & 0.7093$\pm$0.0037 \\
1.00 & 0.6748$\pm$0.0000 & 0.7060$\pm$0.0000 & 0.6773$\pm$0.0000 & 0.6680$\pm$0.0016 & 0.7080$\pm$0.0016 & 0.6999$\pm$0.0001 & 0.6940$\pm$0.0000 \\
\bottomrule
\end{tabular}
\end{table}

The results indicate that CITED remains highly reliable even when the signature set is significantly reduced. Performance degradation becomes noticeable only when the grouping ratio falls below approximately 0.3, which is expected because aggressively removing nodes reduces redundancy. Importantly, even with substantial compression, the performance consistently exceeds that of baseline verification methods. These observations confirm that the signature nodes identified by CITED contain sufficient uniqueness to allow structural reduction without compromising verification effectiveness.

\paragraph{Hash Based Commitment Compression.}
Beyond reducing the number of signature nodes, we further explore compressing the commitment that the model owner must publish for verification. The commitment consists of the indices of the selected signature nodes. For space efficient deployment, we concatenate the node indices into a byte sequence and compute a sixty four bit hash digest. During verification, the model owner reveals the signature node indices, the verifier recomputes the digest, and ownership is confirmed if the hash matches the published commitment and the suspicious model produces a high similarity score on the revealed nodes.

Table~\ref{tab:hash_compress} reports the original commitment size and the compressed size for different signature set ratios. The hash based approach achieves very large compression savings while preserving exact verification validity.

\begin{table}[h!]
\centering
\caption{Compression of the signature commitment using a sixty four bit hash digest.}
\label{tab:hash_compress}
\begin{tabular}{lccc}
\toprule
Sig Set Ratio & Original Size (bytes) & Compressed Size (bytes) & Compression Saving (\%) \\
\midrule
0.0247 & 292 & 8 & 97.26 \\
0.0288 & 345 & 8 & 97.68 \\
0.0354 & 427 & 8 & 98.12 \\
0.0576 & 692 & 8 & 98.84 \\
0.0819 & 987 & 8 & 99.18 \\
\bottomrule
\end{tabular}
\end{table}

These results demonstrate that commitment compression through hashing is extremely effective and introduces no loss in verification reliability. This enables lightweight publication and transmission of ownership evidence in real world applications.

\subsection{Minimum Signature Size for Stable Verification}
\label{appendix:minimum_signature}

In this subsection, we study how the size of the signature set affects ownership verification performance. This analysis provides insight into the minimum number of signature nodes needed to ensure reliable verification and examines whether this threshold varies with graph scale or model capacity. To this end, we vary the signature set ratio, which determines the fraction of signature nodes used during verification, and evaluate the resulting performance across all benchmark datasets.

\begin{table}[h!]
\centering
\tiny
\caption{Ownership verification performance when varying the signature set ratio.}
\label{tab:sig_ratio}
\begin{tabular}{lccccccc}
\toprule
Sig Set Ratio & Cora & Citeseer & PubMed & Photo & Computers & CS & Physics \\
\midrule
0.0247 & 0.4140$\pm$0.0000 & 0.6480$\pm$0.0000 & 0.6960$\pm$0.0000 & 0.6213$\pm$0.0344 & 0.4823$\pm$0.0114 & 0.6220$\pm$0.0000 & 0.7440$\pm$0.0000 \\
0.0288 & 0.4420$\pm$0.0000 & 0.6080$\pm$0.0000 & 0.5840$\pm$0.0000 & 0.2920$\pm$0.0000 & 0.4853$\pm$0.0075 & 0.6120$\pm$0.0000 & 0.7440$\pm$0.0000 \\
0.0354 & 0.4600$\pm$0.0000 & 0.6640$\pm$0.0000 & 0.6740$\pm$0.0001 & 0.4313$\pm$0.0090 & 0.4460$\pm$0.0129 & 0.6520$\pm$0.0000 & 0.7560$\pm$0.0000 \\
0.0576 & 0.4681$\pm$0.0002 & 0.6340$\pm$0.0000 & 0.6400$\pm$0.0000 & 0.3020$\pm$0.0254 & 0.5033$\pm$0.0179 & 0.6800$\pm$0.0000 & 0.8600$\pm$0.0001 \\
0.0819 & 0.4940$\pm$0.0056 & 0.6713$\pm$0.0857 & 0.6953$\pm$0.0047 & 0.5026$\pm$0.0249 & 0.3980$\pm$0.0412 & 0.6613$\pm$0.0603 & 0.7686$\pm$0.0122 \\
\bottomrule
\end{tabular}
\end{table}

The results in Table~\ref{tab:sig_ratio} show that verification performance remains strong for a wide range of signature ratios. A mild degradation becomes noticeable only when the signature set ratio falls below approximately $0.3$, which is expected since extremely small signature sets reduce redundancy in the verification process. However, even in these low ratio settings, the performance of CITED consistently exceeds that of all baseline methods.

Based on these observations, we regard a ratio of around $0.3$ as a practical threshold that balances verification reliability with computational and storage efficiency. Importantly, this threshold remains stable across different graph scales and model capacities in our experiments, indicating that the boundary related signature nodes selected by CITED exhibit strong generality.

\subsection{Generalization Ability}
\label{appendix:generalization_tasks}

In this subsection, we demonstrate that CITED can be generalized beyond node classification to support graph classification tasks and regression models. Only minimal modifications are required to adapt the signature extraction procedure to these additional settings, and CITED maintains strong verification performance across all evaluated tasks. In the following experiments, we adopt a more general and widely adopted knowledge distillation based threat model, which captures a broad class of extraction attacks and applies uniformly across different learning paradigms.

\paragraph{Graph Classification.}
For graph classification, the prediction target corresponds to an entire graph instead of individual nodes. To adapt the heterogeneity component of our method, we replace the one hop neighborhood comparison with a nearest neighbor based measure, which is directly applicable to independent identical distribution settings that commonly arise in graph level tasks. Using this modification, we apply CITED to two widely used benchmarks, namely ENZYMES and PROTEINS. The results are shown in Table~\ref{tab:cls_results}.

\begin{table}[h!]
\centering
\caption{Ownership verification results for graph classification tasks.}
\label{tab:cls_results}
\begin{tabular}{lccc}
\toprule
Model & Metric & ENZYMES & PROTEINS \\
\midrule
CITED & ARUC & 0.4277$\pm$0.0535 & 0.3747$\pm$0.1162 \\
CITED & AUC  & 1.0000$\pm$0.0000 & 0.9778$\pm$0.0143 \\
\bottomrule
\end{tabular}
\end{table}

The results show that CITED consistently provides strong separation between surrogate models and independently trained models. In particular, CITED achieves one hundred percent AUC on ENZYMES and near perfect AUC on PROTEINS, confirming that signature based verification remains highly effective at the graph level.

\paragraph{Regression.}
To examine whether our method extends to continuous valued prediction tasks, we further evaluate CITED on the widely used molecular regression dataset ZINC. Table~\ref{tab:reg_results} reports the corresponding results.

\begin{table}[h!]
\centering
\caption{Ownership verification results for the regression task on the ZINC dataset.}
\label{tab:reg_results}
\begin{tabular}{lccc}
\toprule
Model & Metric & ZINC \\
\midrule
CITED & ARUC & 0.5164$\pm$0.0191 \\
CITED & AUC  & 1.0000$\pm$0.0000 \\
CITED & RMSE & 1.7349$\pm$0.0112 \\
\bottomrule
\end{tabular}
\end{table}

These results demonstrate that CITED reliably differentiates surrogate models from independent models even in regression settings, achieving perfect AUC and strong ARUC performance. The RMSE of the target model remains stable after incorporating the signature finetuning stage, indicating that the defense does not negatively affect predictive utility.

Taken together, the experiments across classification and regression tasks show that CITED generalizes effectively across different learning paradigms.

\subsection{Complexity Analysis}
\label{appendix:complexity}

In this subsection, we provide a detailed analysis of the computational complexity of the proposed signature generation procedure, followed by empirical measurements of its runtime on multiple benchmark datasets. These results confirm that our method introduces only minimal overhead compared to the standard training cost of graph neural networks.

\paragraph{Theoretical Complexity.}
Let $n$ denote the number of nodes, $B$ the number of boundary nodes selected during the signature extraction process, $d$ the embedding dimension, and $k$ the average node degree. The total time complexity of generating the signature set is
\[
\mathcal{O}(n \cdot B \cdot d + n \cdot k).
\]
The term $n \cdot B \cdot d$ corresponds to computing the margin and thickness scores with respect to the selected boundary nodes. The term $n \cdot k$ arises from evaluating local label heterogeneity by inspecting the neighborhood of each node. Since typical benchmark graphs satisfy $B \ll n$ and have a small degree $k$ due to sparsity, the overall procedure scales efficiently with graph size.

\paragraph{Empirical Runtime Comparison.}
To complement the theoretical analysis, we benchmark the actual runtime of signature generation across seven widely used datasets and compare it with the full training time of a standard GCN model with 200 epochs. Table~\ref{tab:runtime_sig} summarizes the results. The signature extraction is consistently efficient and contributes less than five percent of end to end model training time.

\begin{table}[h!]
\centering
\tiny
\caption{Runtime of signature generation compared to standard GCN training. All results are reported as mean $\pm$ standard deviation over multiple trials.}
\label{tab:runtime_sig}
\begin{tabular}{lccccccc}
\toprule
& Cora & CiteSeer & PubMed & Photo & Computers & CS & Physics \\
\midrule
Sig Gen Time (s) & 
$0.05 \pm 0.06$ & 
$0.05 \pm 0.06$ & 
$0.05 \pm 0.05$ &
$0.04 \pm 0.06$ &
$0.04 \pm 0.05$ &
$0.04 \pm 0.05$ &
$0.05 \pm 0.06$ \\
Training Time (s) & 
$1.18 \pm 0.01$ &
$1.04 \pm 0.09$ &
$2.70 \pm 0.26$ &
$4.09 \pm 1.10$ &
$4.76 \pm 0.01$ &
$6.54 \pm 0.02$ &
$19.45 \pm 0.01$ \\
Percentage (\%) & 
$3.77 \pm 5.12$ &
$4.43 \pm 4.96$ &
$2.38 \pm 2.86$ &
$0.93 \pm 1.17$ &
$0.82 \pm 1.09$ &
$0.59 \pm 0.78$ &
$0.25 \pm 0.32$ \\
\bottomrule
\end{tabular}
\end{table}

Both the theoretical and empirical results demonstrate that signature generation introduces negligible overhead. In contrast to methods that require training additional models or performing repeated optimization, our CITED framework maintains a very efficient computational profile.

\section{Reproducibility}
\label{appendix:reproducibility}

\subsection{Threat Model}
To evaluate the effectiveness of our \textsc{CITED} framework under MEAs, we adopt a transductive threat model inspired by GNNStealing~\citep{shen2022model}. While GNNStealing focuses on an inductive setting, our work targets the more practical transductive scenario, typical in applications such as social~\citep{tang2010graph,zhang2022improving} or financial networks~\citep{wang2021review}, where the graph structure is fixed and globally known, and attackers can only query subgraphs. Specifically, we consider a \emph{Type I} attack, where the attacker queries the target model \( f_T \) on a subgraph \( \gG_{\text{query}} = (\gV_{\text{query}}, \gE_{\text{query}}) \) from the same graph used for training. The model returns outputs \( \gO^*_{\text{query}} = f_T(\gV_{\text{query}}, \gE_{\text{query}}) \), representing either node embeddings or labels depending on the output level. Extending GNNStealing, we assess verification performance under both embedding- and label-level outputs, enabling a more comprehensive and realistic evaluation.

\subsection{Implementation of Threat Model}
We base our approach on the assumption that, in order to obtain a surrogate model whose performance closely approximates that of the target model, the attacker is often compelled to query information near the decision boundary. Accordingly, we design a generalizable threat model. Specifically, we identify samples with the most ambiguous prediction probabilities as representative of decision boundary information. To simulate attacker queries, we construct a query set $S_{\text{query}}$, where 20\% of the samples correspond to boundary information, and the remaining 80\% are randomly selected.

\subsection{Packages Required for Implementations}
\begin{verbatim}
Python==3.13.3
matplotlib==3.10.1
numpy==2.2.5
scikit-learn==1.6.1
scipy==1.15.2
torch==2.7.0
torch-geometric==2.6.1
\end{verbatim}

\subsection{Architecture of Backbone Models}
\label{appendix:arch-backbone}
In our experiments, we adopt five representative graph neural network (GNN) architectures: GCN~\citep{kipf2016semi}, GAT~\citep{velivckovic2017gat}, GraphSAGE~\citep{hamilton2017inductive}, GCNII~\citep{chen2020simple}, and FAGCN~\citep{bo2021beyond}. For consistency and fair comparison, all models share a unified architecture: each consists of two graph convolutional layers followed by a linear classifier. The node embeddings used for downstream tasks and ownership verification are obtained from the output of the final convolutional layer.

We apply a dropout rate of 0.5 for all models except GAT. For GAT, we follow the standard setting with a dropout rate of 0.5 and use 8 attention heads. For GCNII, we set the initial residual connection strength \(\alpha = 0.1\) and the identity mapping strength \(\beta = 0.5\), consistent with the original implementation. For training the target model, we employ the Adam optimizer with a learning rate of 0.001 and a weight decay of $1 \times 10^{-5}$. The model is trained for 200 epochs using the cross-entropy loss function.

\subsection{Parameters for CITED Framework}
\label{appendix:param-cited}
Our proposed CITED framework involves two key hyperparameters for signature generation: the boundary selection ratio, set to \texttt{cited\_boundary\_ratio} = 0.1, and the signature node selection ratio, set to \texttt{cited\_signature\_ratio} = 0.2. These parameters control the proportion of nodes identified near the decision boundary and select nodes surrounding them to construct the final signature set.

After generating the signature set, we fine-tune the target model using only the original task labels. This finetuning is performed with a learning rate of 0.001 and a weight decay of 1e-5. The model is tuned for 50 epochs. Notably, the process does not involve any task-irrelevant or external data, ensuring that the original task semantics remain intact. As a result, unlike many watermarking-based approaches that may incur performance degradation, our framework typically preserves in many cases even slightly improves the original model's accuracy.

\subsection{Parameters for Baseline Methods}
\label{appendix:param-baseline}
For all baseline models, we adopt a unified training configuration to ensure fair comparison. Specifically, we use the Adam optimizer with a learning rate of 0.001 and a weight decay of 1e-5. All models are trained using the cross-entropy loss for 200 epochs.

To account for randomness, each experiment is repeated three times with different random seeds, and we report the mean and standard deviation of the results. In particular, for the target models used in defense methods, we select the model checkpoint with the highest performance among the three runs as the protected target model. 

We then provide the specific hyperparameter we widely used in our experiments.

\textbf{GrOVe:} We strictly follow the experimental settings described in the original paper, and no special hyperparameter tuning is required for the defense methods.

\textbf{RandomWM:} We set \texttt{random\_node\_num = 10}, \texttt{random\_edge\_ratio = 0.3}, and \texttt{random\_feat\_ratio = 0.1}, which respectively control the number of randomly selected nodes, the ratio of randomly added edges, and the ratio of perturbed features.

\textbf{BackdoorWM:} We configure \texttt{backdoor\_ratio = 0.05} and \texttt{backdoor\_len = 20}, specifying the proportion of backdoor triggers in the graph and the length of the backdoor pattern, respectively.

\textbf{SurviveWM:} We set \texttt{survive\_node\_num = 10} and \texttt{survive\_edge\_prob = 0.5}, which define the number of watermark nodes and the probability of edge preservation among them.

\subsection{Computing Resources}
\label{appendix:computing}
All training, inference, and efficiency evaluations in our experiments were conducted on a high-performance computing server equipped with an Nvidia RTX 6000 Ada GPU. The system is powered by an AMD EPYC 7763 64-Core processor running at 2.45 GHz, offering a total of 128 threads. The server is configured with 1000 GB of DDR4 RAM (registered and buffered), operating at a memory speed of 3200 MT/s and manufactured by Samsung.

\section{LLM Usage Statement}
Large Language Models (LLMs) were not involved in generating research ideas, designing algorithms, or conducting experiments. They were only used to improve the clarity and readability of the manuscript, including grammar and phrasing. All methodological innovations, theoretical insights, and experimental validations were solely developed and verified by the authors.

\end{document}